\def\final{1}
\def\short{0}
\def\submission{0}
                \DeclareMathAlphabet{\mathsf}{OT1}{cmss}{m}{n}
                \SetMathAlphabet{\mathsf}{bold}{OT1}{cmss}{bx}{n}
\definecolor{DarkGreen}{rgb}{0.2,0.6,0.2}
\definecolor{DarkRed}{rgb}{0.6,0.2,0.2}
\definecolor{DarkBlue}{rgb}{0.15,0.15,0.55}
\definecolor{DarkPurple}{rgb}{0.4,0.2,0.4}
\newcommand{\mynote}[1]{\marginpar{\tiny #1}}
\newcommand{\Bignote}[1]{{\tiny #1}}
\newcommand{\mynote}[1]{}
\newcommand{\Bignote}[1]{}
\newcolumntype{Y}{>{\centering\arraybackslash}X}
\newcommand{\ex}[2]{\underset{#1}{\mathbb{E}}\left[ #2 \right]}
\newcommand{\exx}[1]{\underset{#1}{\mathbb{E}}{\;\;\;}}
\newcommand{\poly}{\mathrm{poly}}
\newcommand{\from}{:}
\newcommand{\eps}{\varepsilon}
\newcommand{\R}{\mathbb{R}}
\newcommand{\cO}{\mathcal{O}}
\newcommand{\1}{\mathbbm{1}}
\def\E{\operatorname*{\mathbb{E}}}
\newcommand{\AAA}{\mathcal A}
\newcommand{\BBB}{\mathcal B}
\newcommand{\DDD}{\mathcal D}
\newcommand{\FFF}{\mathcal F}
\newcommand{\UUU}{\mathcal U}
\newtheorem{theorem}{Theorem}[section]
\newtheorem{lem}[theorem]{Lemma}
\newtheorem{claim}[theorem]{Claim}
\theoremstyle{definition}
\newtheorem{definition}[theorem]{Definition}
\newcommand{\univ}{X}
\newcommand{\dist}{\mathcal{D}}
\newcommand{\query}{f}
\newcommand{\querysetDelta}{\FFF_{\lambda}}
\newcommand{\samp}{x}
\newcommand{\sample}{S}
\newcommand{\trials}{T}
\newcommand{\trial}{t}
\newcommand{\samples}{\vec{S}}
\newcommand{\oset}[3][0ex]{%
  \mathrel{\mathop{#3}\limits^{
    \vbox to#1{\kern-2\ex@
    \hbox{$\scriptscriptstyle#2$}\vss}}}}
\def\presuper#1#2%
\begin{document}

\begin{titlepage}
 
\title{Concentration Bounds for High Sensitivity Functions Through Differential Privacy\thanks{Research by K.N.\ and U.S.\ is supported by NSF grant No.\  1565387.}}

\author{
Kobbi Nissim\thanks{Dept.\ of Computer Science, Georgetown University {\em and} Center for Research on Computation and Society (CRCS), Harvard University. {\tt kobbi.nissim@georgetown.edu}.}
\and 
Uri Stemmer\thanks{Center for Research on Computation and Society (CRCS), Harvard University.  {\tt stemmer@cs.bgu.ac.il}.}
}

\date{\today}
\maketitle
\setcounter{page}{0} \thispagestyle{empty}

\begin{abstract}
A new line of work~\cite{DworkFHPRR15,HU14,SU15,BassilyNSSSU16} demonstrates how differential privacy~\cite{DMNS06} can be used as a mathematical tool for guaranteeing generalization in adaptive data analysis. Specifically, if a differentially private analysis is applied on a sample $S$ of i.i.d.\ examples to select a low-sensitivity function $f$, then w.h.p.\ $f(S)$ is close to its expectation, although $f$ is being chosen based on the data. 

Very recently, Steinke and Ullman~\cite{SU17} observed that these generalization guarantees can be used for proving concentration bounds in the non-adaptive setting, where the low-sensitivity function is fixed beforehand. In particular, they obtain alternative proofs for classical concentration bounds for low-sensitivity functions, such as the Chernoff bound and McDiarmid's Inequality. 

In this work, we set out to examine the situation for functions with {\em high}-sensitivity, for which differential privacy does not imply generalization guarantees under adaptive analysis. We show that differential privacy can be used to prove concentration bounds for such functions in the non-adaptive setting.
\end{abstract}

\vfill
\paragraph{Keywords:} Differential privacy, concentration bounds, high sensitivity functions
\vfill

\end{titlepage}

\section{Introduction}

A new line of work~\cite{DworkFHPRR15,HU14,SU15,BassilyNSSSU16} demonstrates how differential privacy~\cite{DMNS06} can be used as a mathematical tool for guaranteeing statistical validity in data analysis. Specifically, if a differentially private analysis is applied on a sample $S$ of i.i.d.\ examples to select a low-sensitivity function $f$, then w.h.p.\ $f(S)$ is close to its expectation, even when $f$ is being chosen based on the data.  Dwork et al.~\cite{DworkFHPRR15} showed how to utilize this connection for the task of answering {\em adaptively chosen} queries w.r.t.\ an unknown distribution using i.i.d.\ samples from it.

To make the setting concrete, consider a data analyst interested in learning properties of an unknown distribution $\DDD$. The analyst interacts with the distribution $\DDD$ via a {\em data curator} $\AAA$ holding a database $S$ containing $n$ i.i.d.\ samples from $\DDD$. The interaction is adaptive, where at every round the analyst specifies a query $q:X^n\rightarrow\R$ and receives an answer $a_q(S)$ that (hopefully) approximates $q(\DDD^n) \triangleq \E_{S'\sim\DDD^n}[q(S')]$. 
As the analyst chooses its queries based on previous interactions with the data, we run the risk of overfitting if $\AAA$ simply answers every query with its empirical value on the sample $S$. However, if $\AAA$ is a differentially private algorithm then the interaction would not lead to overfitting: 
\begin{theorem}[\cite{DworkFHPRR15,BassilyNSSSU16}, informal]\label{thm:BNSSSU}
A function $f:X^n\rightarrow\R$ has sensitivity $\lambda$ if $|f(S)-f(S')| \leq \lambda$ for every pair $S, S' \in X^n$ differing in only one entry.
Define $\query(\DDD^n) \triangleq \ex{S'\sim\DDD^n}{\query(S')}$.
Let $\AAA : X^{n} \rightarrow \querysetDelta$ be $(\eps,\delta)$-differentially private where $\querysetDelta$ is the class of $\lambda$-sensitive functions, and $n\geq\frac{1}{\eps^2}\log(\frac{4\eps}{\delta})$. Then for every distribution $\DDD$ on $X$,
$$
\Pr_{\substack{\sample\sim\DDD^n\\f\leftarrow\AAA(S)}}\left[ \left| \query(\sample) - \query(\DDD^n) \right| \geq 18\eps\lambda n \right] < \frac{\delta}{\eps}.
$$
\end{theorem}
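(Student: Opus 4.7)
The plan is to prove the stated tail bound in two stages. Stage~1 establishes an in-expectation analogue of the desired inequality, and Stage~2 amplifies it to the tail bound via a monitor reduction.

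For Stage~1, I would show that for any $(\eps, \delta)$-DP mechanism $\AAA: X^n \to \FFF_\lambda$,
$$
\left| \E_{S \sim \DDD^n,\, f \leftarrow \AAA(S)} \bigl[ f(S) - f(\DDD^n) \bigr] \right| \;\leq\; O(\eps \lambda n) + O(\tfrac{\delta}{\eps} \lambda n),
$$
using a coupling/hybrid argument. Introduce an independent ghost sample $S' = (x_1', \ldots, x_n') \sim \DDD^n$ and the hybrids $H_0 = S', H_1, \ldots, H_n = S$, where $H_i$ agrees with $S$ on its first $i$ coordinates and with $S'$ afterwards. Since $S'$ is independent of $f$, $\E[f(S')] = \E[f(\DDD^n)]$, so it suffices to bound each single-coordinate step $\E[f(H_i) - f(H_{i-1})]$. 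For each step I would couple $f \leftarrow \AAA(S)$ with $\tilde f \leftarrow \AAA(\tilde S)$, where $\tilde S$ is $S$ with $x_i$ replaced by $x_i'$, using the $(\eps, \delta)$-DP guarantee of $\AAA$ on the neighboring pair $(S, \tilde S)$; the $\lambda$-sensitivity of $f$ and $\tilde f$ then allows the bulk of the resulting expectations to cancel by the i.i.d.\ symmetry of $S$ and $\tilde S$, leaving a per-step residual of order $\eps \lambda + (\delta/\eps) \lambda$, which sums to the claimed total.

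For Stage~2, I would use the monitor technique. For $T = \Theta(\eps/\delta)$, consider the algorithm $\monitor$ that takes $T$ independent fresh samples $\vec S = (S_1, \ldots, S_T) \sim \DDD^{nT}$, runs $\AAA$ independently on each slice to obtain $f_1, \ldots, f_T$, and outputs the slice-index $t^* = \argmax_t |f_t(S_t) - f_t(\DDD^n)|$ together with the pair $(S_{t^*}, f_{t^*})$. Because each entry of $\vec S$ affects only a single $\AAA$-execution, $\monitor$ inherits the $(\eps, \delta)$-DP guarantee of $\AAA$ with no composition penalty, and its output function remains $\lambda$-sensitive. Applying the Stage~1 bound to $\monitor$ controls $\E[\max_t |f_t(S_t) - f_t(\DDD^n)|]$; if the per-trial failure probability exceeded $\delta/\eps$, then at least one of the $T$ trials would exceed the threshold $18\eps\lambda n$ with constant probability, forcing the expected maximum above the Stage~1 bound and yielding a contradiction.

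The main obstacle is the $\delta$ term in Stage~1. Since the $\lambda$-sensitive $f$ is a priori unbounded, one cannot simply apply the textbook $(\eps, \delta)$-DP inequality $\E[g(\AAA(S))] \leq e^\eps \E[g(\AAA(S'))] + \delta \|g\|_\infty$ with $g(f) = f(S)$. The right workaround is to decompose the sample space into a good event of probability $1 - \delta$ on which the clean $e^\eps$ guarantee applies, and a bad event of probability $\delta$ on which one controls the contribution using the pointwise $\lambda$-sensitivity bound $|f(S) - f(\tilde S)| \leq \lambda$, then carefully propagate these errors through the $n$ hybrid steps while tracking constants in order to match the stated $\delta/\eps$ scaling.
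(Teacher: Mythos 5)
Your two-stage architecture (a ghost-sample/hybrid expectation bound, then a monitor reduction over $T=\Theta(\eps/\delta)$ independent trials) is exactly the Bassily et al.\ scheme that this paper builds on and generalizes in its appendix (Lemma~\ref{lem:MKLCondExp} plus Theorem~\ref{thm:dpGeneralization}). Stage~1 as you describe it is fine, including your handling of the $\delta$ term by conditioning on a good event and using the pointwise $\lambda$-sensitivity on the bad event.

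The genuine gap is in Stage~2: the monitor that outputs $t^*=\argmax_t |f_t(S_t)-f_t(\DDD^n)|$ is \emph{not} differentially private, so you cannot feed it into Stage~1. The selection depends on $\vec S$ not only through the (private) outputs $f_1,\dots,f_T$ but also directly through the evaluations $f_t(S_t)$; changing a single entry of $S_j$ shifts $f_j(S_j)$ by up to $\lambda$ even for a fixed $f_j$, which can deterministically flip the argmax, driving $\Pr[t^*=j]$ from $1$ to $0$. (Outputting the raw sub-sample $S_{t^*}$ is of course also non-private, though that part is cosmetic.) The fix used here and in \cite{BassilyNSSSU16} is to select the index with the exponential mechanism applied to the $\lambda$-sensitive quality function $q(\vec S,t)=f_t(S_t)-f_t(\DDD^n)$ (this is the paper's Algorithm~$\BBB$ and Claim~\ref{claim:ExpMechPrivacy}), and then to invoke the exponential mechanism's utility guarantee (Claim~\ref{claim:EMutility}): the sampled index has expected quality within $\frac{2\lambda}{\eps}\ln T$ of the maximum. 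That additive loss is precisely what the hypothesis $n\geq\frac{1}{\eps^2}\log(\frac{4\eps}{\delta})$ is there to absorb ($\frac{2\lambda}{\eps}\ln T \ll \eps\lambda n$ for $T\approx\eps/\delta$); your proposal never uses that hypothesis, which is the symptom of the missing step. A second, minor point: the expectation bound controls a \emph{signed} quantity and the exponential mechanism selects for a signed quality, so the absolute value in the theorem must be handled by running the argument separately for the upper and lower tails, as the paper does.
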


In words, if $\AAA$ is a differentially private algorithm operating on a database containing $n$ i.i.d.\ samples from the distribution $\DDD$, then $\AAA$ cannot (with significant probability) identify a low-sensitivity function that behaves differently on the sample $S$ and on $\DDD^n$.

Very recently, Steinke and Ullman~\cite{SU17} observed that Theorem~\ref{thm:BNSSSU} gives alternative proofs for classical concentration bounds for low-sensitivity functions, such as the Chernoff bound and McDiarmid's Inequality:  Fix a function $f:X^n\rightarrow\R$ with sensitivity $\lambda$ and consider the trivial mechanism $\AAA_f$ that ignores its input and always outputs $f$. Such a mechanism is $(\eps,\delta)$-differentially private for any choice of $\eps,\delta\geq 0$ and hence Theorem~\ref{thm:BNSSSU} yields (up to constants) McDiarmid's Inequality:  
\begin{equation}
\Pr_{\sample\sim\DDD^n}\left[ \left| f(S) - f(\DDD^n) \right| \geq 18\eps\lambda n \right] < \frac{\delta}{\eps}=2^{-\Omega(\eps^2\cdot n)},\label{eq:SU17}
\end{equation}
where the last equality follows by setting $n=\frac{1}{\eps^2}\log(\frac{4\eps}{\delta})$.

In light of this result it is natural to ask if similar techniques yield concentration bounds for more general families of queries, and in particular queries that are not low-sensitivity functions. In this work we derive conditions under which this is the case. 

\subsection{Differential Privacy, Max-Information, and Typical Stability}

Let $\DDD$ be a fixed distribution over a domain $X$, and consider a family of functions mapping databases in $X^n$ to the reals, such that for every function $f$ in the family  we have that $|f(S)-f(\DDD^n)|$ is small w.h.p.\ over $S\sim\DDD^n$. Specifically,
$$\FFF_{\alpha,\beta}(\DDD) = \left\{  \;\; f:X^n\rightarrow\R \;\;\; : \;\;\;  \Pr_{S\sim\DDD^n}[|f(S)-f(\DDD^n)|>\alpha]\leq\beta \;\; \right\}.$$
That is, for every function $f\in \FFF_{\alpha,\beta}(\DDD)$ we have that its empirical value over a sample $S\sim\DDD^n$ is $\alpha$-close to its expected value w.p.\ $1-\beta$. Now consider a differentially private algorithm $\AAA:X^n\rightarrow \FFF_{\alpha,\beta}(\DDD)$ that takes a database and returns a function from $\FFF_{\alpha,\beta}(\DDD)$.
What can we say about the difference $|f(S)-f(\DDD^n)|$ when $f$ is chosen by $\AAA(S)$ based on the sample $S$ itself?

Using the notion of {\em max-information}, Dwork et al.~\cite{DworkFHPRR-nips-2015} showed that if $\beta$ is small enough, then w.h.p.\ the difference remains small. Informally, they showed that if $\AAA$ is differentially private, then
$$
\Pr_{\substack{S\sim\DDD^n \\ f\leftarrow\AAA(S)}}[|f(S)-f(\DDD^n)|>\alpha]\leq\beta\cdot e^{\eps^2\cdot n}.
$$
So, if $\AAA$ is a differentially private algorithm that ranges over functions which are very concentrated around their expected value (i.e., $\beta<e^{-\eps^2 n}$), then $|f(S)-f(\DDD^n)|$ remains small (w.h.p.)\ even when $f$ is chosen by $\AAA(S)$ based on the sample $S$.  When $\beta>e^{-\eps^2 n}$ it is easy to construct examples where a differentially private algorithm identifies a function $f\in\FFF_{\alpha,\beta}(\DDD)$ such that $|f(S)-f(\DDD^n)|$ is arbitrarily large with high probability. So, in general, differential privacy {\em does not} guarantee generalization for adaptively chosen functions of this sort. However, a stronger notion than differential privacy -- typical stability -- presented by Bassily and Freund~\cite{BassilyF16} does guarantee generalization in this setting. Informally, they showed that if a typically stable algorithm $\BBB$ outputs a function $f\in\FFF_{\alpha,\beta}(\DDD)$, then $|f(S)-f(\DDD^n)|$ remains small.\footnote{A similar notion -- perfect generalization -- was presented in ~\cite{CummingsLNRW16}.} 

The results of this article provide another piece of this puzzle, as we show that (a variant of) differential privacy can in some cases be used to prove that a function $f$ is in $\FFF_{\alpha,\beta}(\DDD)$.

\subsection{Our Results}

\paragraph{Notation.}
Throughout this article we use the convention that $f(\DDD^n)$ is the expected value of the function $f$ over a sample containing $n$ i.i.d.\ elements drawn according to the distribution $\DDD$. That is, $f(\DDD^n)\triangleq\ex{S\sim\DDD^n}{f(S)}$.

Fix a function $f:X^n\rightarrow\R$, let $\DDD$ be a distribution over $X$, and let $S\sim\DDD^n$.
Our goal is to bound the probability that $|f(S)-f(\DDD^n)|$ is large by some (hopefully) easy-to-analyze quantity. To intuit our result, consider for example what we get by a simple application of Markov's Inequality:
\begin{equation} 
\Pr_{S\sim\DDD^n}[|f(S)-f(\DDD^n)|>\lambda]\leq \frac{1}{\lambda}\cdot \ex{S\sim\DDD^n}{ \1_{|f(S)-f(\DDD^n)|>\lambda} \cdot |f(S)-f(\DDD^n)| }.
\label{eq:markov}
\end{equation}

We show that using differential privacy we can replace the term $|f(S)-f(\DDD^n)|$ in the expectation with $|f(S\cup\{x\})-f(S\cup\{y\})|$, which can sometimes be easier to analyze. Specifically, we show the following.

\begin{theorem}[part 1] \label{thm:dpGeneralizationIntro}
Let $\DDD$ be a distribution over a domain $X$, let $\query:X^n\rightarrow\R$ , and let $\Delta,\lambda\in\R^{\geq0}$
be s.t.\ for every $1\leq i\leq n$ it holds that
\begin{eqnarray}
\ex{\substack{S\sim\DDD^{n}\\z\sim\DDD}}{\1_{\left|\query(S) - \query\left(S^{(i\leftarrow z)}\right)\right|>\lambda} \cdot\left|\query(S) - \query\left(S^{(i\leftarrow z)}\right)\right| }\leq\Delta,
\label{eq:dpGeneralizationIntro_part1}
\end{eqnarray}
where $S^{(i\leftarrow z)}$ is the same as $S$ except that the $i^{\text{th}}$ element is replaced with $z$.
Then for every $\eps>0$ we have that
$$
\Pr_{S\sim\DDD^n}\left[ | f(S) - f(\DDD^n) | \geq 18\eps\lambda n \right] < \frac{14\Delta}{\eps\lambda},
$$
provided that $n\geq O\left(\frac{1}{\eps\cdot\min\{1,\eps\}}\log(\frac{   \lambda \cdot \min\{1,\eps\}   }{\Delta})\right)$.
\end{theorem}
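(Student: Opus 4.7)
The plan is to adapt the proof of Theorem~\ref{thm:BNSSSU} rather than invoke it as a black box. In the original argument, $\lambda$-sensitivity of the output function is used to bound $|f(S) - f(S^{(i\leftarrow z)})| \leq \lambda$ pointwise at a key step. Under the weaker hypothesis~(\ref{eq:dpGeneralizationIntro_part1}) we only have the split $|f(S) - f(S^{(i\leftarrow z)})| \leq \lambda + \bigl(|f(S) - f(S^{(i\leftarrow z)})| - \lambda\bigr)^+$, where the excess term has expectation at most $\Delta$ by assumption. The target bound itself suggests this decomposition: the $18\eps\lambda n$ deviation inherits directly from the pointwise $\lambda$ contribution (as in the Steinke--Ullman derivation of McDiarmid from Theorem~\ref{thm:BNSSSU}), while the $14\Delta/(\eps\lambda)$ failure probability has the shape of a Markov inequality, as in~(\ref{eq:markov}), applied to a quantity of total expectation $O(n\Delta)$ against a threshold of $\eps\lambda n$.

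Concretely, I would use the trivial mechanism $\AAA_f$ that ignores its input and always returns $f$, which is $(\eps,\delta)$-differentially private for any $\eps,\delta\geq 0$, so the only ingredient that needs replacing is the sensitivity bound. I would then revisit the monitor-based proof of Theorem~\ref{thm:BNSSSU} from Bassily et al., and at each invocation of the pointwise sensitivity bound substitute the split above. The $\lambda$-part propagates as in the original proof and produces the $18\eps\lambda n$ deviation with failure probability $\delta/\eps$. The excess $\bigl(|f(S) - f(S^{(i\leftarrow z)})| - \lambda\bigr)^+$ is nonnegative, has expectation at most $\Delta$ per coordinate by~(\ref{eq:dpGeneralizationIntro_part1}), and totals at most $n\Delta$ across the $n$ hybrid swap steps; Markov's inequality in the style of~(\ref{eq:markov}) then converts this expected excess into an additive failure probability of order $\Delta/(\eps\lambda)$.

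Balancing the two sources of failure by choosing $\delta$ so that $\delta/\eps$ is at most a constant times $\Delta/(\eps\lambda)$ fixes the sample-size requirement $n \geq (1/\eps^2)\log(4\eps/\delta)$, which simplifies to the stated $n \geq O\bigl((1/\eps^2)\log(\lambda/\Delta)\bigr)$, up to the $\min\{1,\eps\}$ refinement needed in the regime $\eps>1$. The main obstacle I anticipate is auditing the BNSSSU proof to confirm that the sensitivity bound enters additively at each point of use, so that the $\lambda$-part and the excess-part cleanly separate without the latter accumulating spurious factors of $n$ or $1/\eps$. An alternative route is to define a $\lambda$-Lipschitz surrogate of $f$ (for instance the McShane--Whitney extension $\tilde f(T) = \inf_{T'}\{f(T') + \lambda \cdot d_H(T, T')\}$) and to apply Theorem~\ref{thm:BNSSSU} as a black box; however, controlling $f - \tilde f$ via the single-swap hypothesis seems to require aggregating bounds along Hamming paths of length up to $n$, which looks harder to keep tight, so I would pursue the direct adaptation first.
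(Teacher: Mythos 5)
Your high-level plan --- rerun the BNSSSU monitor argument, splitting each coordinate swap into a ``$\leq\lambda$'' part and a tail part controlled in expectation by~(\ref{eq:dpGeneralizationIntro_part1}) --- is the right shape, and it is essentially what the paper does. But there is a genuine gap at the step you wave through: the sensitivity of $f$ is used in the BNSSSU proof not only to bound how much the function value moves under a swap, but also to establish that the \emph{monitor itself is differentially private}. The monitor is an exponential mechanism over $T$ fresh samples with quality $q(\vec{S},t)=f(S_t)-f(\DDD^n)$, and its privacy requires $q$ to have bounded sensitivity. When $f$ has unbounded worst-case sensitivity, the monitor is simply not $(\eps,\delta)$-DP, and your split $|f(S)-f(S^{(i\leftarrow z)})|\leq\lambda+(\cdot)^{+}$ cannot repair this: privacy is a worst-case guarantee over all neighboring pairs, not an average-case one, so an ``expected excess of $\Delta$'' does not translate into an approximate privacy guarantee for the selector. (Pointing at the trivial mechanism $\AAA_f$ does not help; the trivial mechanism is never the problematic object, the selector inside the proof is.) The paper's resolution is to define a relaxed neighboring relation --- $\vec{S},\vec{S}'$ are $(f,\lambda)$-neighboring if $|f(S_m)-f(S'_m)|\leq\lambda$ for \emph{every} sub-sample $m\in[T]$ --- prove the monitor is $(\eps,(f,\lambda))$-DP (Claim~\ref{claim:ExpMechPrivacy}), and then redo the hybrid argument in Lemma~\ref{lem:MKLCondExp} with a case split: if the coordinate swap moves $f$ by at most $\lambda$ on all $T$ sub-samples the relaxed privacy guarantee applies, and otherwise one falls back on $\max_{m\in[T]}$ and pays the tail expectation.

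This also breaks your bookkeeping. Because the bad event must be controlled uniformly over the $T$ candidates the monitor sees, the tail term enters the expectation bound as $6\Delta nT$, not $\Delta n$; with $T=\Theta(\eps\lambda/\Delta)$ this term is $\Theta(\eps\lambda n)$, i.e.\ comparable to the main deviation, and the final failure probability $14\Delta/(\eps\lambda)$ comes from the amplification $1/T$ over independent samples (as in Theorem~\ref{thm:dpGeneralization}), not from a Markov inequality applied to an excess of total expectation $n\Delta$ against the threshold $\eps\lambda n$. So the two quantities you balance are not the two quantities that actually appear; the correct balance is between the privacy loss term $(e^{\eps}-e^{-\eps})\lambda n$ and the tail term $6\Delta nT$, with $T$ chosen to make the amplification work. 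Without the relaxed privacy notion and the factor-$T$ accounting, the argument as proposed does not close.
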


Observe that for a $\lambda$-sensitive function $f$, we have that the expectation in Equation~(\ref{eq:dpGeneralizationIntro_part1}) is zero, so the statement holds for every choice of $\beta>0$ and $n\geq O\left(\frac{1}{\eps^2}\log(\frac{1}{\beta})\right)$, resulting in McDiarmid's Inequality (Equation~(\ref{eq:SU17})).
Intuitively, Theorem~\ref{thm:dpGeneralizationIntro} states that in order to obtain a high probability bound on $| f(S) - f(\DDD^n) |$ is suffices to analyze the ``expectation of the tail'' of $\left|f(S) - f\left(S^{(i\leftarrow z)}\right)\right|$, as a function of the starting point $\lambda$.

We also show that the above bound can be improved whenever the ``expectation of the head'' of $\left|f(S) - f\left(S^{(i\leftarrow z)}\right)\right|$ is smaller than $\lambda$. Specifically,

{
\renewcommand{\thetheorem}{\ref{thm:dpGeneralizationIntro}}
\begin{theorem}[part 2]
If, in addition to~(\ref{eq:dpGeneralizationIntro_part1}), $\exists \tau\leq\lambda$ s.t.\ for every $S\in X^n$ and every $1\leq i\leq n$ we have
\begin{eqnarray}
\ex{\substack{y,z\sim\DDD}}{\1_{\left|\query(S^{(i\leftarrow y)}) - \query\left(S^{(i\leftarrow z)}\right)\right|\leq\lambda} \cdot\left|\query(S^{(i\leftarrow y)}) - \query\left(S^{(i\leftarrow z)}\right)\right| }\leq\tau,
\label{eq:dpGeneralizationIntro_part2}
\end{eqnarray}
Then for every $\eps>0$ we have that
$$
\Pr_{S\sim\DDD^n}\left[ | f(S) - f(\DDD^n) | \geq 18\eps\tau n \right] < \frac{14\Delta}{\eps\tau},
$$
provided that $n\geq O\left(\frac{\lambda}{\eps\cdot\min\{1,\eps\} \tau}\log(\frac{\tau  \cdot \min\{1,\eps\} }{\Delta})\right)$
\end{theorem}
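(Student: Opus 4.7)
The plan is to reduce Theorem~\ref{thm:dpGeneralizationIntro} part~2 to part~1 by replacing $f$ with a resampling-based surrogate $\tilde f$ whose per-coordinate resampling differences inherit the improved scale $\tau$ coming from hypothesis~(\ref{eq:dpGeneralizationIntro_part2}), and then applying part~1 to $\tilde f$ with sensitivity parameter $\tau$.

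A naive direct attempt—applying part~1 with $\lambda$ replaced by $\tau$—fails because
\[
\E_{S,\,z}\!\left[\1_{|f(S)-f(S^{(i\leftarrow z)})|>\tau}\cdot \bigl|f(S)-f(S^{(i\leftarrow z)})\bigr|\right]
\]
picks up an uncontrolled contribution of size up to $\lambda$ from the mid-range $(\tau,\lambda]$. However,~(\ref{eq:dpGeneralizationIntro_part2}) controls the analogous quantity for a \emph{double} resampling of a coordinate, and this is precisely the quantity that appears if one works with a resampling-averaged surrogate such as
\[
\tilde f(S)\;\triangleq\;\frac{1}{n}\sum_{i=1}^n\E_{y\sim\DDD}\!\left[f\!\left(S^{(i\leftarrow y)}\right)\right].
\]
Note that $\tilde f(\DDD^n)=f(\DDD^n)$, since resampling a coordinate of $\DDD^n$ leaves its distribution unchanged. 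A short coupling then reinterprets a single-coordinate resampling difference of $\tilde f$ as a double-resampling difference of $f$, where the mid-range is bounded by $\tau$ via~(\ref{eq:dpGeneralizationIntro_part2}) and the tail above $\lambda$ is bounded by $\Delta$ via~(\ref{eq:dpGeneralizationIntro_part1}). Applying part~1 to $\tilde f$ with sensitivity scale $\tau$ then yields the desired concentration of $\tilde f(S)$ around $f(\DDD^n)$. The extra $\lambda/\tau$ factor in the sample-complexity lower bound of part~2 arises because part~1's requirement on $n$ scales with the ratio of the sensitivity parameter to $\Delta$, which is now inflated by a factor $\lambda/\tau$. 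Finally, one transfers the conclusion from $\tilde f$ back to $f$ by bounding $|f(S)-\tilde f(S)|$ via Markov's inequality applied to an average of $|f(S)-f(S^{(i\leftarrow y)})|$, split at $\lambda$ into pieces controlled by~(\ref{eq:dpGeneralizationIntro_part1}) and~(\ref{eq:dpGeneralizationIntro_part2}).

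\textbf{Main obstacle.} The delicate point is choosing the smoothing so that its single-coordinate resampling differences genuinely inherit the mid-range cancellation of~(\ref{eq:dpGeneralizationIntro_part2}) rather than merely the $\lambda$-level worst-case bound, without introducing new uncontrolled error terms in the approximation $f\approx\tilde f$. Achieving exactly the claimed failure probability $14\Delta/(\eps\tau)$—as opposed to something of the form $14(\Delta+\tau)/(\eps\tau)$—will likely require a slightly refined smoothing (for example, a coupled double resampling, or averaging each coordinate against an independent copy) together with careful bookkeeping that produces precisely the $\lambda/\tau$ inflation visible in the sample-complexity bound.
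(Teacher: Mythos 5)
Your reduction to part~1 via the smoothed surrogate $\tilde f$ is a genuinely different route from the paper's, but as written it does not close, and the obstacle you flag at the end is fatal rather than cosmetic. The difficulty is that hypothesis~(\ref{eq:dpGeneralizationIntro_part2}) is only a \emph{first-moment} bound on the sub-$\lambda$ part of the resampling difference: it is entirely consistent with $|\query(S^{(i\leftarrow y)})-\query(S^{(i\leftarrow z)})|$ being of order $\lambda$ with probability of order $\tau/\lambda$. All of the terms in the single-coordinate difference $\tilde f(S)-\tilde f(S^{(j\leftarrow z)})$ stem from the same change at coordinate $j$, so they do not decorrelate, and one only gets $\E\bigl[\1\{D>\tau\}\cdot D\bigr]\approx\tau+\Delta$ rather than $O(\Delta)$; invoking part~1 at scale $\tau$ then yields a failure probability of order $(\tau+\Delta)/(\eps\tau)\geq 1/\eps$, which is vacuous. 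The transfer step $f\approx\tilde f$ has the same problem: Markov applied to $\E|f(S)-\tilde f(S)|\leq\tau+\Delta$ at threshold $\Theta(\eps\tau n)$ gives roughly $1/(\eps n)$, which is not bounded by $14\Delta/(\eps\tau)$ under the stated hypotheses (take $\Delta$ arbitrarily small). So the ``refined smoothing with careful bookkeeping'' you defer to is precisely the missing proof, and it is not clear any deterministic surrogate can convert the first-moment hypothesis~(\ref{eq:dpGeneralizationIntro_part2}) into the tail-type hypothesis that part~1 needs as input.

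For comparison, the paper does not reduce part~2 to part~1 at all; it reruns the two-step argument with both hypotheses in hand. Lemma~\ref{lem:MKLCondExp} bounds the expectation of $\query(\DDD^n)-\query(S_t)$ for any $(\eps,(\query,\lambda))$-differentially private selector by $(e^\eps-e^{-\eps})\tau n+6\Delta nT$, where condition~(\ref{eq:dpGeneralizationIntro_part2}) is used only to shrink the error term arising from the privacy comparison (Claim~\ref{claim:dpExpectation}) from $\lambda n$ to $\tau n$ inside the hybrid argument, while the tail condition~(\ref{eq:dpGeneralizationIntro_part1}) still absorbs the above-$\lambda$ events. The amplification step (Theorem~\ref{thm:dpGeneralization}) then runs the exponential mechanism calibrated to sensitivity $\lambda$ --- not $\tau$ --- so its utility loss is $\frac{2\lambda}{\eps}\ln(T+1)$; that, rather than any inflation of $\Delta$, is the true source of the $\lambda/\tau$ factor in the sample-complexity requirement. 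If you want a modular argument, the reusable black box is not part~1 but a version of the expectation bound whose hypothesis is itself ``first moment at scale $\tau$ plus tail at scale $\lambda$,'' which is exactly what Lemma~\ref{lem:MKLCondExp} provides.
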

\addtocounter{theorem}{-1}
}

Observe that while the expectation in~(\ref{eq:dpGeneralizationIntro_part1}) is over the entire sample $S$ (as well as the replacement point), in requirement~(\ref{eq:dpGeneralizationIntro_part2}) the sample $S$ is fixed. We do not know if this ``worst-case'' restriction is necessary.

\medskip

In Section~\ref{sec:applications} we demonstrate how Theorem~\ref{thm:dpGeneralizationIntro} can be used in proving a variety of concentration bounds, such as a high probability bound on $|f(S)-f(\DDD^n)|$ for Lipschitz functions.
In addition we show that Theorem~\ref{thm:dpGeneralizationIntro} can be used to bound the probability that the number of triangles in a random graph significantly exceeds the expectation.  

\section{Preliminaries}

\subsection{Differential Privacy}

Our results rely on a number of basic facts about differential privacy. An algorithm operating on databases is said to preserve differential privacy if a change of a single record of the database does not significantly change the output distribution of the algorithm. Formally:

\begin{definition}
Databases $S\in X^n$ and $S'\in X^n$ over a domain $X$ are called \emph{neighboring} if they differ in exactly one entry.
\end{definition}

\begin{definition}[Differential Privacy~\cite{DMNS06,DKMMN06}]
A randomized algorithm $\AAA : X^n\rightarrow Y$ is {\em $(\epsilon,\delta)$-differentially private} if for all neighboring databases $S,S'\in X^n$, and for every set of outputs $T\subseteq Y$, we have
$$\Pr[\AAA(S)\in T]\leq e^{\eps}\cdot \Pr[\AAA(S')\in T]+\delta.$$
The probability is taken over the random coins of $\AAA$. 
\end{definition}

\subsection{The Exponential Mechanism}

We next describe the exponential mechanism of McSherry and Talwar~\cite{McSherryT07}.

\begin{definition}[Sensitivity]
The \emph{sensitivity} (or {\em global sensitivity}) of a function $f:X^n \rightarrow \R$ is the smallest $\lambda$ such that for every neighboring $S,S'\in X^n$, we have  $|f(S)-f(S')|\leq \lambda$.
We use the term ``$\lambda$-sensitive function'' to mean a function of sensitivity $\le \lambda$.
\end{definition}

Let $X$ be a domain and $H$ a set of solutions.
Given a database $S\in X^*$, the exponential mechanism privately chooses a ``good'' solution $h$ out of the possible set of solutions $H$. This ``goodness'' is quantified using a \emph{quality function} that matches solutions to scores.

\begin{definition}[Quality function]
A \emph{quality function} is a function $q:X^*\times H \rightarrow\R$ that maps a database $S\in X^*$ and a solution $h\in H$ to a real number, identified as the score of the solution $h$ w.r.t.\ the database $S$.
\end{definition}

Given a quality function $q$ and a database $S$, the goal is to chooses a solution $h$ approximately maximizing $q(S,h)$.
The exponential mechanism chooses a solution probabilistically, where the probability mass that is assigned to each solution $h$ increases exponentially with its quality $q(S,h)$:

\begin{center}
\noindent\fbox{
\parbox{.97\columnwidth}{
The Exponential Mechanism\\ 
{\bf Input:} privacy parameter $\eps>0$, finite solution set $H$, database $S\in X^n$, and a 
$\lambda$-sensitive 
quality function $q$.
\begin{enumerate}
	\item Randomly choose $h \in H$ with probability
	$\frac{\exp\left(\frac{\eps}{2\lambda} \cdot q(S,h) \right)}{\sum_{h'\in H}\exp\left(\frac{\eps}{2\lambda} \cdot q(S,h') \right)}.$
	\item Output $h$.
\end{enumerate}
}}
\end{center}

\begin{theorem}[Properties of the exponential mechanism]\label{prop:expMech}
(i) The 
exponential mechanism
 is $(\eps,0)$-differentially private. (ii)
Let $Opt(S)\triangleq\max_{f\in H}\{q(S,f)\}$ and $\Delta>0$. The 
exponential mechanism
 outputs a solution $h$ such that $q(S,h)\leq(Opt(S) - \Delta)$ with probability at most $|H| \cdot \exp\left(-\frac{\eps \Delta}{ 2 \lambda}\right)$.
\end{theorem}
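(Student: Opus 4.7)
Both parts follow by direct manipulation of the explicit probability mass function defining the exponential mechanism; no auxiliary constructions are needed.

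For part (i), I would fix neighboring databases $S, S' \in X^n$ and any output $h \in H$, and bound the ratio $\Pr[\AAA(S) = h]/\Pr[\AAA(S') = h]$, where $\AAA$ denotes the exponential mechanism. This ratio factors as the ratio of the numerators $\exp(\eps q(S,h)/(2\lambda))/\exp(\eps q(S',h)/(2\lambda))$ times the reciprocal ratio of the two normalizing sums. Since $q$ is $\lambda$-sensitive, $|q(S,\cdot) - q(S',\cdot)| \le \lambda$ pointwise, so the numerator ratio lies in $[e^{-\eps/2}, e^{\eps/2}]$. Applying the same pointwise bound term-by-term inside the second factor yields another $e^{\eps/2}$. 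Multiplying gives the required $e^{\eps}$ bound with no additive $\delta$ term. Because the output space is discrete, promoting the pointwise bound to arbitrary sets $T \subseteq Y$ is immediate by summation.

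For part (ii), let $h^\star$ be any maximizer of $q(S, \cdot)$ and let $H_{\mathrm{bad}} = \{h \in H : q(S,h) \le Opt(S) - \Delta\}$. I would apply a union bound over $H_{\mathrm{bad}}$: in the mechanism's formula, the numerator for every $h \in H_{\mathrm{bad}}$ is at most $\exp(\eps(Opt(S) - \Delta)/(2\lambda))$, while the normalizer is at least the single term corresponding to $h^\star$, namely $\exp(\eps \cdot Opt(S)/(2\lambda))$. Summing over $h \in H_{\mathrm{bad}}$ and using $|H_{\mathrm{bad}}| \le |H|$ gives $\Pr[\AAA(S) \in H_{\mathrm{bad}}] \le |H|\cdot \exp(-\eps \Delta/(2\lambda))$.

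I expect no substantive obstacle, since these are among the most classical calculations in differential privacy. The only detail worth watching is the factor of $2$ in the denominator of the exponent inside the mechanism's definition: this is precisely what lets the two $e^{\eps/2}$ factors in part (i) combine to $e^{\eps}$ rather than $e^{2\eps}$, while simultaneously leaving the $e^{-\eps \Delta/(2\lambda)}$ tail of part (ii) untouched.
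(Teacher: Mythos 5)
Your proof is correct and is the standard argument. The paper itself states this theorem without proof (it is quoted from McSherry--Talwar), but where the paper does prove privacy of its own exponential-mechanism instantiation (Claim~\ref{claim:ExpMechPrivacy}), it uses exactly your part~(i) calculation --- an $e^{\eps/2}$ factor from the numerator and another from the normalizer --- and your part~(ii) is the usual union bound comparing each bad numerator against the single maximizing term in the denominator.
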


\subsection{Concentration Bounds}

Let $X_1,\dots,X_n$ be independent random variables where $\Pr[X_i=1]=p$ and $\Pr[X_i=0]=1-p$ for some $0<p<1$. Clearly, $\E[\sum_{i=1}^n{X_i}]=pn$. Chernoff and Hoeffding bounds show that the sum is concentrated around this expected value:
\begin{align*}
&\Pr\left[\sum_{i=1}^n{X_i}>(1+\delta)pn\right]\leq \exp\left(-pn\delta^2/3\right) \;\;\text{ for } 0<\delta\leq 1,\\
&\Pr\left[\sum_{i=1}^n{X_i}<(1-\delta)pn\right]\leq \exp\left(-pn\delta^2/2\right) \;\;\text{ for } 0<\delta<1,\\
&\Pr\left[\left|\sum_{i=1}^n{X_i}-pn\right|>\delta\right]\leq 2\exp\left(-2\delta^2/n\right) \;\,\;\;\text{ for } \delta\geq0.
\end{align*}
The first two inequalities are known as the multiplicative Chernoff bounds~\cite{chern}, and the last inequality is known as the Hoeffding bound~\cite{hoeff}.
The next theorem states that the Chernoff bound above is tight up to constant factors in the exponent.

\begin{theorem}[Tightness of Chernoff bound~\cite{KleinY15}]\label{thm:chernoffTight}
Let $0<p,\delta\leq\frac{1}{2}$, and let $n\geq\frac{3}{\delta^2 p}$.
Let $X_1,\dots,X_n$ be independent random variables where $\Pr[X_i=1]=p$ and $\Pr[X_i=0]=1-p$. Then,
\begin{align*}
&\Pr\left[\sum_{i=1}^n{X_i}\leq(1-\delta)pn\right]\geq\exp(-9\delta^2pn),\\
&\Pr\left[\sum_{i=1}^n{X_i}\geq(1+\delta)pn\right]\geq\exp(-9\delta^2pn).
\end{align*}
\end{theorem}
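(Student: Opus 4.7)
The plan is to prove the theorem by lower-bounding the tail probability via a single carefully chosen term of the binomial probability mass function, then estimating that single probability via Stirling's formula and a Taylor expansion of the binary KL divergence. This is the standard recipe for reverse Chernoff bounds; the condition $n \geq 3/(\delta^2 p)$ in the statement is exactly what creates enough slack to absorb the polynomial Stirling pre-factor into the constant $9$ in the exponent.

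Concretely, for the upper tail, I would take $k = \lceil (1+\delta) p n \rceil$ and use the trivial inequality
\[
\Pr\Bigl[\sum_{i=1}^n X_i \geq (1+\delta) p n\Bigr] \;\geq\; \Pr\Bigl[\sum_{i=1}^n X_i = k\Bigr] \;=\; \binom{n}{k} p^k (1-p)^{n-k}.
\]
A sharp form of Stirling's approximation yields $\binom{n}{k} p^k (1-p)^{n-k} \geq \frac{c}{\sqrt{n\, q(1-q)}}\, \exp\!\bigl(-n\, D(q \| p)\bigr)$, where $q = k/n$ and $D(q\|p) = q \ln(q/p) + (1-q)\ln((1-q)/(1-p))$ is the binary KL divergence. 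The symmetric argument with $k = \lfloor (1-\delta) p n \rfloor$ handles the lower tail. The choice of $k$ next to an integer costs only an $O(1/n)$ perturbation in $q$, which is negligible for the regime considered.

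The numerical heart of the proof is to bound the KL divergence. I would Taylor-expand around $\delta = 0$: writing $q = (1\pm\delta)p$, an elementary computation using $\ln(1+x) \leq x - x^2/2 + x^3/3$ and $\ln(1-x) \leq -x - x^2/2$ gives $D((1\pm\delta)p \| p) \leq C \delta^2 p$ for some absolute constant $C$ that is much smaller than $9$ whenever $\delta, p \leq 1/2$ (the worst case is roughly $\delta^2 p / (2(1-p))$, which for $p \leq 1/2$ is at most $\delta^2 p$). So the exponent $-n D(q \| p)$ is at least $-C \delta^2 p n$.

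The final step is to absorb the pre-factor. The pre-factor $\frac{c}{\sqrt{n q (1-q)}}$ contributes $-\tfrac{1}{2} \ln(n q(1-q)) + O(1)$ in the exponent. By the hypothesis $n \geq 3/(\delta^2 p)$, the target exponent $-9 \delta^2 p n$ is at most $-27$, which leaves a surplus of $(9 - C)\delta^2 p n \geq 3(9-C)$ that grows proportionally to the regime's ``tightness''; this surplus dominates the logarithmic Stirling correction for any $n$ satisfying the hypothesis, so the claimed bound follows. The main obstacle I anticipate is bookkeeping in this final balancing step — making the Stirling estimate, the KL Taylor bound, and the rounding of $(1\pm\delta)pn$ to an integer all fit simultaneously under the constant $9$ using only the mild assumption $n \geq 3/(\delta^2 p)$; everything else is routine once the right inequalities are assembled.
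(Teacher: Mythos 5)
The paper offers no proof of Theorem~\ref{thm:chernoffTight}: it is imported as a black box from Klein and Young~\cite{KleinY15}, so there is no in-paper argument to compare yours against. Judged on its own, your outline is the standard reverse-Chernoff recipe and most of it is sound: the KL estimate $D((1\pm\delta)p\,\|\,p)\le \delta^2 p/(1-p)\le 2\delta^2 p$ for $p\le\tfrac12$ is correct and elementary, and rounding $(1\pm\delta)pn$ to an integer indeed costs only an absorbable $O(1)$ in the exponent.

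The genuine gap is in your final ``absorb the pre-factor'' step, and it cannot be fixed by bookkeeping. The surplus $(9-C)\delta^2 pn$ is only guaranteed to be at least $3(9-C)$, a \emph{fixed constant}, because the hypothesis only forces $\delta^2 pn\ge 3$; it does not grow with $n$. Meanwhile the Stirling correction $\tfrac12\ln\bigl(nq(1-q)\bigr)\approx\tfrac12\ln(pn)$ is unbounded along sequences where $\delta^2 pn$ stays pinned at $3$ while $pn\to\infty$ (e.g.\ $p=\tfrac12$, $\delta=\sqrt{6/n}$, $n\to\infty$). In that regime the target $e^{-9\delta^2 pn}=e^{-27}$ is a constant bounded away from zero, whereas \emph{every} single binomial term is $O\bigl(1/\sqrt{np(1-p)}\bigr)\to 0$; so no choice of a single $k$ can dominate the target, and the one-term lower bound is structurally insufficient rather than merely lossy. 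The standard repair --- essentially what Klein--Young do --- is to lower-bound the tail by the sum over the $\approx\delta pn$ consecutive values $k\in[(1+\delta)pn,(1+2\delta)pn]$ (resp.\ $[(1-2\delta)pn,(1-\delta)pn]$ for the lower tail): the window's cardinality $\delta pn=\sqrt{\delta^2 pn}\cdot\sqrt{pn}\ge\sqrt{3pn}$ cancels the $1/\sqrt{pn}$ Stirling pre-factor exactly, while the worst term in the window still has exponent at least $-nD\bigl((1+2\delta)p\,\|\,p\bigr)\ge -8\delta^2 pn$, leaving room under the constant $9$ since $\delta^2 pn\ge 3$. With that one change your argument goes through.
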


\section{Concentration Bounds via Differential Privacy}

In this section we show how the concept of differential privacy can be used to derive conditions under which a function $f$ and a distribution $\DDD$ satisfy that $|f(S)-f(\DDD^n)|$ is small w.h.p.\ when $S\sim\DDD^n$.
Our proof technique builds on the proof of Bassily et al.~\cite{BassilyNSSSU16} for the generalization properties of a differentially private algorithm that outputs a low-sensitivity function. The proof consists of two steps:
\begin{enumerate}
	\item Let $S_1,\dots,S_T$ be $T$ independent samples from $\DDD^n$ (each containing $n$ i.i.d.\ samples from $\DDD$). Let $\AAA$ be selection procedure that, given $S_1,\dots,S_T$, chooses an index $t\in[T]$ with the goal of maximizing $|f(S_t)-f(\DDD^n)|$. We show that if $\AAA$ satisfies (a variant of) differential privacy then, under some conditions on the function $f$ and the distribution $\DDD$, the expectation of $|f(S_t)-f(\DDD^n)|$ is bounded. That is, if $\AAA$ is differentially private, then its ability to identify a ``bad'' index $t$ with large $|f(S_t)-f(\DDD^n)|$ is limited.
	\item We show that if $|f(S)-f(\DDD^n)|$ is large w.h.p.\ over $S\sim\DDD^n$, then it is possible to construct an algorithm $\AAA$ satisfying (a variant of) differential privacy that contradicts our expectation bound.
\end{enumerate}

We begin with a few definitions.

\subsection{Definitions}

\paragraph{Notations.}
We use $\vec{S}\in (X^n)^T$ to denote a {\em multi}-database consisting of $T$ databases of size $n$ over $X$. Given a distribution $\DDD$ over a domain $X$ we write $\vec{S}\sim\DDD^{nT}$ to denote a multi-database sampled i.i.d.\ from $\DDD$.

\begin{definition}
Fix a function $f:X^n\rightarrow\R$ mapping databases of size $n$ over a domain $X$ to the reals.
We say that two multi-databases 
$\samples=(\sample_1,\dots,\sample_T)\in(X^n)^T$ and $\samples'=(\sample'_1,\dots,\sample'_T)\in(X^n)^T$
are {\em $(f,\lambda)$-neighboring} if for all $1\leq i\leq T$ we have that
$$|f(\sample_i)- f(\sample'_i)|\leq\lambda.$$
\end{definition}

\begin{definition}[$(\eps,(f,\lambda))$-differential privacy]
Let $M:(X^n)^T\rightarrow Y$ be a randomized algorithm that operates on $T$ databases of size $n$ from $X$. For a function $f:X^n\rightarrow\R$ and parameters $\eps,\lambda\geq0$, we say that $M$ is {\em $(\eps,(f,\lambda))$-differentially private} if for every set of outputs $F\in Y$ and for every $(f,\lambda)$-neighboring $\samples,\samples'\in(X^n)^T$ it holds that
$$
\Pr[M(\samples)\in F]\leq e^\eps \cdot \Pr[M(\samples')\in F].
$$
\end{definition}

\begin{claim}\label{claim:dpExpectation}
Fix a function $f:X^n\rightarrow\R$ and parameters $\eps\leq1$ and $\lambda\geq0$.
If $M:(X^n)^T\rightarrow Y$ is $(\eps,(f,\lambda))$-differentially private then for every $(f,\lambda)$-neighboring databases $\samples,\samples'\in(X^n)^T$ and every function $h:Y\rightarrow\R$ we have that
$$
\ex{y\leftarrow M(\vec{S})}{h(y)} \leq \ex{y\leftarrow M(\vec{S'})}{h(y)} \;\;+\;\; 4\eps\cdot \ex{y\leftarrow M(\vec{S'})}{|h(y)|}.
$$
\end{claim}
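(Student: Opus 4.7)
The plan is to decompose $h$ into its positive and negative parts and apply the $(\eps,(f,\lambda))$-DP guarantee to each part in the appropriate direction. Specifically, I would write $h = h^+ - h^-$ with $h^+(y)=\max(h(y),0)$ and $h^-(y)=\max(-h(y),0)$, so that $h^\pm\ge 0$ and $|h|=h^++h^-$.

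Since the $(f,\lambda)$-neighboring relation is symmetric in $\vec S$ and $\vec S'$, the DP definition supplies both $\Pr[M(\vec S)\in F]\le e^\eps\Pr[M(\vec S')\in F]$ and its swap. Integrating the former against the nonnegative test function $h^+$ yields $\E_{y\leftarrow M(\vec S)}[h^+(y)]\le e^\eps\,\E_{y\leftarrow M(\vec S')}[h^+(y)]$, while integrating the swapped version against $h^-$ yields $\E_{y\leftarrow M(\vec S)}[h^-(y)]\ge e^{-\eps}\,\E_{y\leftarrow M(\vec S')}[h^-(y)]$. Subtracting and regrouping, the quantity $\E_{M(\vec S)}[h]-\E_{M(\vec S')}[h]$ is at most $(e^\eps-1)\,\E_{M(\vec S')}[h^+]+(1-e^{-\eps})\,\E_{M(\vec S')}[h^-]$.

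For the final step I would use that, since $\eps\le 1$, both $e^\eps-1\le 2\eps$ (from $e^x\le 1+2x$ on $[0,1]$) and $1-e^{-\eps}\le\eps$ (from $e^{-x}\ge 1-x$). Summing the two contributions gives a residual of at most $2\eps(\E_{M(\vec S')}[h^+]+\E_{M(\vec S')}[h^-])=2\eps\,\E_{M(\vec S')}[|h|]$, which comfortably sits inside the claimed $4\eps\,\E_{M(\vec S')}[|h|]$ slack.

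The one spot that invites a bookkeeping mistake — and that I regard as the only nontrivial step — is the direction of DP chosen for $h^-$: naively applying DP in the \emph{same} direction as for $h^+$ produces a contribution with the wrong sign and no useful bound. It is essential to exploit the symmetry of the $(f,\lambda)$-neighboring relation to flip the inequality for the negative part, so that the residual coefficient on $\E[h^-]$ becomes $1-e^{-\eps}$ rather than $-(e^\eps-1)$. Apart from that sign-tracking, the derivation is entirely routine, and the constant $4$ in the claim is in fact not tight.
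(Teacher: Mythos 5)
Your proof is correct and is essentially the paper's own argument: the paper writes $\E[h]$ via the layer-cake formula $\int_0^\infty \Pr[h\geq z]\,{\rm d}z-\int_{-\infty}^0 \Pr[h\leq z]\,{\rm d}z$, which is exactly your $h^+/h^-$ decomposition, and likewise applies the DP inequality with $e^{\eps}$ on the positive part and (using symmetry of the neighboring relation) $e^{-\eps}$ on the negative part before regrouping. The only cosmetic difference is that the paper's appendix keeps the sharper form $e^{-\eps}\E[h]+(e^{\eps}-e^{-\eps})\E[|h|]$ and only relaxes to the stated $4\eps$ bound in the main text, whereas you relax directly; both land comfortably within the claimed constant.
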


Claim~\ref{claim:dpExpectation} follows from basic arguments in differential privacy. The proof appears in the appendix for completeness.

\subsection{Multi Sample Expectation Bound}

The proof of Theorem~\ref{thm:dpGeneralizationIntro} contains somewhat unwieldy notation. For readability, we present here a restricted version of the theorem, tailored to the case where the function $f$ computes the sample sum, which highlights most of the ideas in the proof. The full proof of Theorem~\ref{thm:dpGeneralizationIntro} is included in the appendix.

\paragraph{Notation.} Given a sample $S\in X^n$, we use $\bar f(S)$ to denote the sample sum, i.e., $\bar f(S)=\sum_{x\in S}x$.

\begin{lem}[Simplified Expectation Bound] \label{lem:simpleExpectationBound}
Let $\DDD$ be a distribution over a domain $X$ such that $\ex{x\sim\DDD}{x}=0$ and 
$\ex{x\sim\DDD}{\1_{\left\{|x|>1\right\}} \cdot|x| }\leq\Delta$.
Fix $0<\eps\leq1$, and let $\AAA \from (X^{n})^{\trials} \to [\trials]$ be an $(\eps,(\bar f,1))$-differentially private algorithm that operates on $T$ databases of size $n$ from $X$, and outputs an index $1\leq t \leq T$. Then 
$$
\left|\ex{\substack{\samples\sim\dist^{nT} \\ \trial \leftarrow \AAA(\samples)}}{ \bar f(S_t)  } \right| \leq 4\eps n  + 2 n T \Delta.
$$
\end{lem}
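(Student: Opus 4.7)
The plan is to expand $\bar f(\samples_t)$ into a sum of coordinates and use the underlying $(\eps,(\bar f,1))$-differential privacy of $\AAA$ to bound each coordinate's contribution. Writing $\bar f(\samples_t) = \sum_{i=1}^n x_{t,i}$,
\[
\E_{\substack{\samples\sim\DDD^{nT}\\t\leftarrow\AAA(\samples)}}\bigl[\bar f(\samples_t)\bigr] \;=\; \sum_{t=1}^{T}\sum_{i=1}^{n} \E_\samples\bigl[x_{t,i}\cdot \1[\AAA(\samples)=t]\bigr].
\]
For each pair $(t,i)$ I introduce the auxiliary multi-database $\samples^{0}_{t,i}$ that coincides with $\samples$ except that the $(t,i)$-entry is replaced by the anchor value $0$. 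Since $\AAA(\samples^{0}_{t,i})$ does not depend on $x_{t,i}$ and $\E[x_{t,i}]=0$, we have $\E[x_{t,i}\cdot \1[\AAA(\samples^{0}_{t,i})=t]]=0$, so the per-entry term equals
$\E_\samples\bigl[x_{t,i}\bigl(\1[\AAA(\samples)=t] - \1[\AAA(\samples^{0}_{t,i})=t]\bigr)\bigr]$.

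I then split this expectation according to whether $|x_{t,i}|\le 1$ or $|x_{t,i}|>1$. The tail part $|x_{t,i}|>1$ is controlled directly by the hypothesis: since $|\1_A - \1_B|\le 1$, it contributes at most $\E[|x_{t,i}|\1_{\{|x_{t,i}|>1\}}] \le \Delta$. In the body $|x_{t,i}|\le 1$, the multi-databases $\samples$ and $\samples^{0}_{t,i}$ are $(\bar f,1)$-neighboring (their $t$-th sums differ by $|x_{t,i}|\le 1$ and all other databases agree), so the $(\eps,0)$-DP guarantee together with $e^\eps \le 1+2\eps$ (valid for $\eps\le 1$) yields $|\Pr[\AAA(\samples)=t\mid \samples] - \Pr[\AAA(\samples^{0}_{t,i})=t\mid \samples]| \le 2\eps\cdot\Pr[\AAA(\samples^{0}_{t,i})=t\mid \samples]$. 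Since $|x_{t,i}|\cdot \1_{\{|x_{t,i}|\le 1\}}\le 1$, the body contribution is at most $2\eps\, p^{0}_{t,i}$, where $p^{0}_{t,i}:=\E_\samples[\Pr[\AAA(\samples^{0}_{t,i})=t]]$. Combining the two cases, $|\E_\samples[x_{t,i}\1[\AAA(\samples)=t]]| \le 2\eps\, p^{0}_{t,i} + \Delta$.

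To finish, I sum over $t$ and $i$. Applying the same pure-DP bound in the opposite direction — and again using the body/tail split — gives $p^{0}_{t,i} \le (1+2\eps)\pi_t + \Delta$ where $\pi_t:=\Pr[\AAA(\samples)=t]$, and then $\sum_t \pi_t = 1$ gives $\sum_t p^{0}_{t,i} \le (1+2\eps) + T\Delta$. Substituting and using $\eps\le 1$ to absorb the second-order terms $4\eps^2$ and $2\eps T\Delta$, each coordinate $i$ contributes at most $4\eps + 2T\Delta$, and summing over $i=1,\dots,n$ delivers the claimed bound $4\eps n + 2nT\Delta$.

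The main obstacle I foresee is two-fold. First, the bookkeeping of second-order constants is delicate: using the multiplicative estimate $1+4\eps$ from Claim~\ref{claim:dpExpectation} instead of the tighter pure-DP estimate $e^\eps\le 1+2\eps$ would inflate the coefficient of $\eps n$ by a factor of two, so the proof must work with the sharper pure-DP inequality directly. Second, the anchor-$0$ argument implicitly requires $0\in X$; if $0\notin X$ one must either extend $\AAA$ to accept a canonical neutral value at the replaced coordinate or swap with a fresh $z\sim\DDD$, but the latter reopens a hard tail case ($|x_{t,i}-z|>1$) that is not controlled by the lemma's hypothesis, which is tailored precisely to the choice $z=0$ rather than to a generic $z$.
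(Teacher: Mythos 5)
Your proposal follows the same core strategy as the paper's proof: compare $\AAA$ on $\samples$ with $\AAA$ on a copy of $\samples$ in which the relevant entry is replaced by the anchor $0$, invoke $(\eps,(\bar f,1))$-neighboring on the event that the replaced entry has absolute value at most $1$, and charge the complementary event to the hypothesis $\ex{x\sim\DDD}{\1_{\{|x|>1\}}|x|}\leq\Delta$. The difference is in the decomposition. The paper zeroes the $i^{\text{th}}$ entry of \emph{every} sub-sample at once, so there is a single comparison database $\samples_{-i}$ per coordinate, the selection probabilities under $\AAA(\samples_{-i})$ sum to $1$, and Claim~\ref{claim:dpExpectation} applies in one shot (at the price of needing $\max_{m}|x_{m,i}|\leq 1$, which is why the paper's tail term picks up a factor $T$). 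You zero one entry $(t,i)$ at a time, which buys the weaker condition $|x_{t,i}|\leq 1$ but forces the ``round trip'' $\sum_t p^{0}_{t,i}\leq e^{\eps}+T\Delta$, since your $T$ comparison databases are all different and their selection probabilities need not sum to $1$. That round trip is where your constants slip: summing $2\eps p^{0}_{t,i}+\Delta$ over $t$ gives $2\eps(1+2\eps+T\Delta)+T\Delta=2\eps+4\eps^{2}+(1+2\eps)T\Delta$ per coordinate, which for $\eps$ near $1$ is about $6\eps+3T\Delta$ rather than the claimed $4\eps+2T\Delta$; absorbing $4\eps^{2}$ into $2\eps$ and $2\eps T\Delta$ into $T\Delta$ requires $\eps\leq 1/2$, whereas the lemma allows $\eps\leq 1$. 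This is only a constant-factor shortfall — your argument does establish a bound of the form $O(\eps n)+O(nT\Delta)$, which is all the amplification step downstream uses — but as written it does not deliver the stated $4\eps n+2nT\Delta$ on the full range $0<\eps\leq 1$; switching to the paper's single $\samples_{-i}$ per coordinate removes the round trip and recovers the stated constants. Your closing caveat about $0\in X$ applies equally to the paper's own proof, so it is not a point of divergence.
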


\begin{proof}
We denote $\vec{S}=(S_1,\dots,S_T)$, where every $S_t$ is itself a vector $S_t=(x_{t,1},\dots,x_{t,n})$. We have:
\begin{align}
\ex{\substack{\samples\sim\dist^{nT} \\ \trial \leftarrow \AAA(\samples)}}{ \bar f(S_t)  } 
& = \sum_{i\in[n]}\exx{ \samples\sim\DDD^{nT}}\ex{t\leftarrow\AAA(\samples)}{x_{t,i}}  \nonumber\\
&= \sum_{i\in[n]}\ex{\samples\sim\DDD^{nT}}{\1\left\{ \max_{m\in[t]}|x_{m,i}|\leq1  \right\}\cdot\ex{t\leftarrow\AAA(\samples)}{x_{t,i}}
+ \1\left\{ \max_{m\in[t]}|x_{m,i}|>1  \right\}\cdot\ex{t\leftarrow\AAA(\samples)}{x_{t,i}}}. \quad \label{eq:warmup1}
\end{align}

In the case where $\max_{m\in[t]}|x_{m,i}|>1$ we replace the expectation over $t\leftarrow\AAA(\vec{S})$ with the deterministic choice for the maximal $t$ (this makes the expression larger).
When $\max_{m\in[t]}|x_{m,i}|\leq1$ we can use the privacy guarantees of algorithm $\AAA$.
Given a multi-sample $\vec{S}\in(X^n)^T$ we use $\vec{S}_{-i}$ to denote a multi-sample identical to $\vec{S}$, except that the $i^{\text{th}}$ element of {\em every} sub-sample is replaced with 0.
Using Claim~\ref{claim:dpExpectation} we get

\begin{align}
(\ref{eq:warmup1}) \; &\leq 
\sum_{i\in[n]}\ex{\samples\sim\DDD^{nT}}{\1\left\{ \max_{m\in[t]}|x_{m,i}|\leq1  \right\}\cdot\left(\ex{t\leftarrow\AAA(\samples_{-i})}{x_{t,i}} 
+4\eps \ex{t\leftarrow\AAA(\samples_{-i})}{|x_{t,i}|} \right)
+ \1\left\{ \max_{m\in[t]}|x_{m,i}|>1  \right\}\cdot\max_{m\in[T]}|x_{m,i}|}\nonumber\\
&\leq 4\eps n\;+\;
\sum_{i\in[n]}\ex{\samples\sim\DDD^{nT}}{\1\left\{ \max_{m\in[t]}|x_{m,i}|\leq1  \right\}\cdot\ex{t\leftarrow\AAA(\samples_{-i})}{x_{t,i}} 
+ \1\left\{ \max_{m\in[t]}|x_{m,i}|>1  \right\}\cdot\max_{m\in[T]}|x_{m,i}|} \label{eq:warmup2}
\end{align}

We next want to remove the first indicator function. This is useful as without it, the expectation of a fresh example from $\DDD$ is zero. To that end we add and subtract the expression $\1\left\{ \max_{m\in[t]}|x_{m,i}|>1  \right\}\cdot\ex{t\leftarrow\AAA(\samples_{-i})}{x_{t,i}}$ to get (after replacing again $\E_t$ with $\max_t$)

\begin{align*}
(\ref{eq:warmup2})\; &\leq 4\eps n\;+\;
\sum_{i\in[n]}\ex{\samples\sim\DDD^{nT}}{\ex{t\leftarrow\AAA(\samples_{-i})}{x_{t,i}} 
\;+\; 2\cdot \1\left\{ \max_{m\in[t]}|x_{m,i}|>1  \right\}\cdot\max_{m\in[T]}|x_{m,i}|}\\
&\leq 4\eps n\;+\;2 \sum_{i\in[n]} \sum_{m\in[T]} \ex{\samples\sim\DDD^{nT}}{ 
  \1\left\{ |x_{m,i}|>1  \right\}\cdot|x_{m,i}|}\\
&\leq 4\eps n\;+\;2nT\Delta.
\end{align*}
\end{proof}

\subsection{Multi Sample Amplification}

\begin{theorem}[Simplified High Probability Bound] \label{thm:simplifiedDpGeneralization}
Let $\DDD$ be a distribution over a domain $X$ such that $\ex{x\sim\DDD}{x}=0$. Let $\Delta\geq0$ be such that
$\ex{x\sim\DDD}{\1_{\left\{|x|>1\right\}} \cdot|x| }\leq\Delta$.
Fix $1\geq \eps \geq \sqrt{\frac{1}{n}\ln(2/\Delta)}$. We have that
$$
\Pr_{S\sim\DDD^n}\left[ | \bar f(S) | \geq 30\eps n \right] < \frac{\Delta}{\eps}.
$$
\end{theorem}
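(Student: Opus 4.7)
The plan is to argue by contradiction, combining the expectation bound of Lemma~\ref{lem:simpleExpectationBound} with the exponential mechanism. Assume for contradiction that $\Pr_{S \sim \DDD^n}[|\bar f(S)| \geq 30\eps n] \geq \Delta/\eps$. Since negating $\DDD$ preserves the hypotheses (zero mean, $\Delta$-bound on the heavy tail), I may assume WLOG that the one-sided tail $p := \Pr_{S \sim \DDD^n}[\bar f(S) \geq 30\eps n]$ is at least $\Delta/(2\eps)$.

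Pick a constant $c$ (to be tuned; think roughly $c = 10$) and set $T := \lceil c \eps/\Delta \rceil$. Draw $\vec{S} \sim \DDD^{nT}$; by independence, the event $E_1 := \{\exists t \in [T] : \bar f(S_t) \geq 30\eps n\}$ has probability at least $1 - (1-p)^T \geq 1 - e^{-c/2}$. Define $\AAA : (X^n)^T \to [T]$ as the exponential mechanism with privacy parameter $\eps$, solution set $[T]$, and quality function $q(\vec{S}, t) := \min(\bar f(S_t), 30\eps n)$. Since $\min$ is $1$-Lipschitz and each $\bar f(S_t)$ changes by at most $1$ between $(\bar f, 1)$-neighbors, $q$ is $1$-sensitive in the required sense and $\AAA$ is $(\eps, (\bar f, 1))$-differentially private. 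Moreover $q_t \leq \bar f(S_t)$ pointwise, so $\E[\bar f(S_t)] \geq \E[q_t]$. Lemma~\ref{lem:simpleExpectationBound} then gives $|\E_{\vec{S}, t \leftarrow \AAA(\vec{S})}[\bar f(S_t)]| \leq 4\eps n + 2nT\Delta = (4 + 2c)\eps n$; the strategy is to contradict this by lower-bounding $\E[\bar f(S_t)]$.

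For the lower bound, use the standard Gibbs-distribution inequality for the exponential mechanism, $\E_{\AAA}[q_t \mid \vec{S}] \geq \max_t q_t(\vec{S}) - (2\log T)/\eps$. The cap ensures $\max_t q_t = 30\eps n$ on $E_1$ (and equals $\max_t \bar f(S_t) < 30\eps n$ on $\neg E_1$), so
\[
\E[\bar f(S_t)] \;\geq\; 30 \eps n \cdot \Pr[E_1] \;+\; \E\!\left[\max_t \bar f(S_t) \cdot \1_{\neg E_1}\right] \;-\; \tfrac{2 \log T}{\eps}.
\]

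The main obstacle is bounding the middle term from below, because $\bar f(S_t)$ is unbounded below. I would use $\max_t \bar f(S_t) \geq \bar f(S_1)$ combined with independence of the $S_t$'s and $\E_{S}[\bar f(S)] = 0$: a short calculation shows $\E[\bar f(S_1) \1_{\neg E_1}] = -\E_S[\bar f(S) \1\{\bar f(S) \geq 30\eps n\}] \cdot (1-p)^{T-1}$. The first factor can be bounded by splitting $x_i = x_i \1\{|x_i| \leq 1\} + x_i \1\{|x_i| > 1\}$: the heavy-tail part contributes at most $n\Delta$, and the bounded part is controlled via Cauchy--Schwarz by $\sqrt{2np}$ (using the second-moment bound $\E[(x \1\{|x| \leq 1\})^2] \leq 1$). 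The second factor satisfies $(1-p)^{T-1} \leq e^{-(c-1)/2}$, small once $c$ is chosen suitably. Together with the hypothesis $\eps^2 n \geq \ln(2/\Delta)$, which controls both the Gibbs gap $(2\log T)/\eps$ and ensures the tail-correction $(\sqrt{2np} + n\Delta)e^{-(c-1)/2}$ is dominated by $\eps n$, the main term $30\eps n \cdot \Pr[E_1]$ exceeds $(4 + 2c)\eps n$, yielding the contradiction. The constant $30$ in the statement provides just enough slack for this balancing.
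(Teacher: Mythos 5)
Your proposal is correct and follows the same skeleton as the paper's proof: assume the tail is heavy, instantiate the exponential mechanism over $T \approx \eps/\Delta$ fresh samples as an $(\eps,(\bar f,1))$-differentially private selector, lower-bound its expected output via the Gibbs utility claim, and contradict the expectation bound of Lemma~\ref{lem:simpleExpectationBound}. The one genuine divergence is how you handle the fact that $\max_{t}\bar f(S_t)$ is not non-negative. The paper's "easy fix" (spelled out in Section~\ref{sec:fullProof}) is to enlarge the output space with a $\bot$ option of quality $0$, so that the relevant maximum becomes $\max\{0,\max_t \bar f(S_t)\}$ and reverse Markov applies directly; this requires the $\bot$-augmented expectation bound (Lemma~\ref{lem:MKLCondExp}) rather than the simplified one. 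You instead keep the output in $[T]$, cap the quality at $30\eps n$, and explicitly lower-bound the negative contribution $\E[\max_t\bar f(S_t)\cdot\1_{\neg E_1}]\ge\E[\bar f(S_1)\1_{\neg E_1}]$ using independence, $\E[\bar f(S)]=0$, the truncation $x=x\1\{|x|\le1\}+x\1\{|x|>1\}$, and Cauchy--Schwarz. This buys you compatibility with Lemma~\ref{lem:simpleExpectationBound} exactly as stated (no $\bot$), at the cost of an extra moment computation; the paper's route is shorter at this step but needs the modified lemma. Two small cautions for the write-up: the summands $x_i\1\{|x_i|\le1\}$ are not mean-zero, so the second moment of their sum is $n\E[y^2]+n(n-1)(\E[y])^2\le n+n^2\Delta^2$ rather than $2n$ outright, giving $\sqrt{np}+n\Delta\sqrt{p}$ instead of $\sqrt{2np}$ (harmless, since the extra $n\Delta$ is absorbed as in your heavy-tail term using $\Delta<\eps$ in the nontrivial regime); and the constant $c$ must be tuned with care, since the Gibbs loss contributes a term $\tfrac{2\ln c}{\eps}$ that must also be absorbed by $O(\eps n)$ using only $\eps^2 n\ge\ln 2$ in the worst case --- your $c=10$ is borderline there, but a smaller $c$ closes the gap.
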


We present the proof idea of the theorem. Any informalities made hereafter are removed in Section~\ref{sec:fullProof}.

\begin{proof}[Proof sketch]
We only analyze the probability that $\bar f(S)$ is large. The analysis is symmetric for when $\bar f(S)$ is small.
Assume towards contradiction that with probability at least $\frac{\Delta}{2\eps}$ we have that $ \bar f(S)  \geq 30\eps n$. We now construct the following algorithm $\BBB$ that contradicts our expectation bound.

\begin{algorithm}[H]
\caption{$\BBB$}\addcontentsline{lof}{figure}{Algorithm $\BBB$}
\vspace{2pt}
{\bf Input:} $T$ databases of size $n$ each: $\vec{S}=(S_1,\dots,S_T)$, where $T\triangleq\left\lfloor 2\eps/\Delta \right\rfloor$.
\begin{enumerate}[rightmargin=10pt,itemsep=1pt,topsep=4pt]

\item For $i\in[T]$, define $q(\vec{S},i) =  \bar f(S_i) $. 

\item Sample $t^*\in [T]$ with probability proportional to $\exp\left(\frac{\eps}{2} q(\vec{S},t)\right)$.

\end{enumerate}
\textbf{Output:} $t.$
\end{algorithm}

The fact that algorithm $\BBB$ is $(\eps,(\bar f,1))$-differentially private follows from the standard analysis of the Exponential Mechanism of McSherry and Talwar~\cite{McSherryT07}. The analysis appears in the full version of this proof (Section~\ref{sec:fullProof}) for completeness.

Now consider applying $\BBB$ on databases $\vec{S} = (S_1,\dots,S_T)$ containing i.i.d.\ samples from $\DDD$. By our assumption on $\DDD$, for every $t$ we have that 
$\bar f(S_t) \geq 30\eps n$
 with probability at least $\frac{\Delta}{2\eps}$. By our choice of $T = \left\lfloor 2\eps/\Delta \right\rfloor$, we therefore get
$$\Pr_{\vec{S}\sim\DDD^{nT}}\left[{\max_{t \in [T]}  \left\{   \bar f(S_t)  \right\} \geq 30\eps n }\right] \geq 1 - \left( 1 - \frac{\Delta}{2\eps} \right)^T \geq \frac12.$$
The probability is taken over the random choice of
the examples in $\vec{S}$ according to $\DDD$.
Had it been the case that the random variable $\max_{t \in [T]}  \left\{   \bar f(S_t)  \right\}$ is non-negative, we could have used Markov's inequality to get
\begin{equation}\label{eq:warmupLargeError}
\E_{\vec{S}\sim\DDD^{nT}}\left[\max_{t \in [T]} \left\{ q(\vec{S},t) \right\}\right] =
\E_{\vec{S}\sim\DDD^{nT}}\left[\max_{t \in [T]}  \left\{   \bar f(S_t)  \right\}\right] \geq 15\eps n.
\end{equation}

Even though it is not the case that $\max_{t \in [T]}  \left\{   \bar f(S_t)  \right\}$  is non-negative, we now proceed as if Equation~(\ref{eq:warmupLargeError}) holds.
As described in the full version of this proof (Section~\ref{sec:fullProof}), this technical issue has an easy fix.
So, in expectation, $\max_{t \in [T]}  \left(q(\vec{S},t)\right)$ is large. In order to contradict the expectation bound of Theorem~\ref{thm:dpGeneralization}, we need to show that this is also the case for the index $t^*$ that is sampled on Step~2. To that end, we now use the following technical claim, stating that the expected quality of a solution sampled as in Step~2 is high.

\begin{claim}[e.g.,~\cite{BassilyNSSSU16}] \label{claim:EMutility}
Let $H$ be a finite set, $h : H \to \mathbb{R}$ a function, and $\eta >0$. Define a random variable $Y$ on $H$ by $\Pr[Y=y] = \exp(\eta h(y))/C$, where $C= \sum_{y \in H} \exp(\eta h(y))$. Then $\ex{}{h(Y)} \geq \max_{y \in H} h(y) - \frac{1}{\eta}\ln |H|$.
\end{claim}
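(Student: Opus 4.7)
The plan is to establish Claim~\ref{claim:EMutility} by recognizing that the distribution $\Pr[Y=y] = \exp(\eta h(y))/C$ is the Gibbs (softmax) distribution associated with the score function $h$, and relating $\E[h(Y)]$ to the log-partition function $\ln C$ via Shannon entropy. Writing $p(y) = \exp(\eta h(y))/C$, I would first take logarithms to obtain $\eta h(y) = \ln p(y) + \ln C$, then take expectation over $Y \sim p$ to derive the identity
\[
\eta \cdot \E[h(Y)] \;=\; \ln C \;-\; H(p),
\]
where $H(p) = -\sum_{y \in H} p(y) \ln p(y)$ denotes the Shannon entropy of $p$.

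With this identity in hand, the proof reduces to plugging in two elementary bounds. First, keeping only the maximizing term in the sum defining $C$ gives $\ln C \geq \eta \cdot \max_{y \in H} h(y)$. Second, since the uniform distribution maximizes entropy on a finite alphabet of size $|H|$, we have $H(p) \leq \ln |H|$. Substituting both bounds into the identity and dividing through by $\eta > 0$ yields
\[
\E[h(Y)] \;\geq\; \max_{y \in H} h(y) \;-\; \frac{1}{\eta} \ln |H|,
\]
which is exactly the claim.

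I do not anticipate a real obstacle: the statement is a textbook fact about Gibbs distributions and the whole argument is a two-line manipulation. The only point that requires minor care is the direction of the entropy inequality; because $H(p)$ enters the identity with a minus sign, an \emph{upper} bound on $H(p)$ correctly translates into a \emph{lower} bound on $\E[h(Y)]$, which is the direction we need. An equivalent route would be to observe that the Gibbs measure $p$ solves $\max_q \{\eta \E_q[h] + H(q)\}$, but invoking the two elementary bounds above is more self-contained and avoids appealing to variational principles that the paper has not introduced.
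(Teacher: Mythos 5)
Your proof is correct: the identity $\eta\,\E[h(Y)]=\ln C-H(p)$ combined with $\ln C\ge\eta\max_{y}h(y)$ and $H(p)\le\ln|H|$ gives exactly the claim. The paper itself states this claim without proof (citing \cite{BassilyNSSSU16}), and your entropy/log-partition argument is essentially the standard proof given in that reference, so there is nothing to flag.
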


For every fixture of $\vec{S}$, we can apply Claim~\ref{claim:EMutility} with $h(t) =  q(\vec{S},t)$ and $\eta = \frac{\eps}{2}$ to get 
\begin{equation*}
\E_{t^*\in_R [T]}[q(\vec{S},t^*)]
=\E_{t^*\in_R [T]}\Big[ \bar f(S_{t^*}) \Big] 
 \geq \max_{t \in [T]}  \left\{   \bar f(S_t)  \right\} - \frac{2}{\eps} \ln(T).
\end{equation*}
Taking the expectation also over $\vec{S}\sim\DDD^{nT}$ we get that
\begin{eqnarray*}
\E_{\substack{\vec{S}\sim\DDD^{nT} \\ t^*\leftarrow\BBB\left(\vec{S}\right)}}\Big[\bar f(S_{t^*})\Big] 
&\geq& \E_{\vec{S}\sim\DDD^{nT}}\left[\max_{t \in [T]}  \left\{   \bar f(S_t)  \right\}\right] - \frac{2}{\eps} \ln(T)\\
&\geq& 15\eps n - \frac{2}{\eps} \ln(T).
\end{eqnarray*}
This contradicts Theorem~\ref{thm:dpGeneralization} whenever $\eps>\sqrt{\frac{1}{n}\ln(T)}=\sqrt{\frac{1}{n}\ln(2\eps/\Delta)}$.
\end{proof}

\section{Applications}\label{sec:applications}

In this section we demonstrate how Theorem~\ref{thm:dpGeneralizationIntro} can be used in proving a variety of concentration bounds.

\subsection{Example: Subgaussian Diameter and Beyond}

Recall that for a low-sensitivity function $f$, one could use McDiarmid's Inequality to obtain a high probability bound on the difference $|f(S)-f(\DDD^n)|$, and this bound is {\em distribution-independent}. That is, the bound does not depend on $\DDD$.
Over the last few years, there has been some work on providing distribution-dependent refinements to McDiarmid's Inequality, that hold even for functions with high worst-case sensitivity, but with low ``average-case'' sensitivity, where ``average'' is with respect to the underlying distribution $\DDD$.
The following is one such refinement, by Kontorovich~\cite{Kontorovich14}.

\begin{definition}[\cite{Kontorovich14}]
Let $\DDD$ be a distribution over a domain $X$, and let $\rho:X^2\rightarrow\R^{\geq0}$. The {\em symmetrized distance} of $(X,\rho,\DDD)$ is the random variable $\Xi= \xi\cdot \rho(x,x')$ where $x,x'\sim\DDD$ are independent and $\xi$ is uniform on $\{\pm1\}$ independent of $x,x'$.
The {\em subgaussian diameter} of $(X,\rho,\DDD)$, denoted $\Delta_{\rm{SG}}(X,\rho,\DDD)$, is the smallest $\sigma\in\R^{\geq0}$ such that
$$
\E\left[  e^{\lambda \Xi} \right] \leq e^{\sigma^2\lambda^2/2}, \;\;\; \forall\lambda\in\R.
$$
\end{definition}

In~\cite{Kontorovich14}, Kontorovich showed the following theorem:

\begin{theorem}[\cite{Kontorovich14}, informal]\label{thm:Kontorovich}
Let $f:X^n\rightarrow\R$ be a function mapping databases of size $n$ over a domain $X$ to the reals. 
Assume that there exists a function $\rho:X^2\rightarrow\R^{\geq0}$ s.t.\ for every $i\in[n]$, every $S\in X^n$, and every $y,z\in X$ we have that
$$
\left|f\left(S^{(i\leftarrow y)}\right) - f\left(S^{(i\leftarrow z)}\right)\right|\leq\rho(y,z),
$$
where $S^{(i\leftarrow x)}$ is the same as $S$ except that the $i^{\text{th}}$ element is replaced with $x$. Then,
$$
\Pr_{S\sim\DDD^n}[|f(S)-f(\DDD^n)|\geq t]\leq 2\exp\left(- \frac{t^2}{2 n\cdot  \Delta_{\rm{SG}}^2(X,\rho,\DDD) } \right).
$$
\end{theorem}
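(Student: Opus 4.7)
The plan is to apply Theorem~\ref{thm:dpGeneralizationIntro}(part 2) directly to $f$ and $\DDD^n$, using the $\rho$-Lipschitz hypothesis to bound both expectations in~(\ref{eq:dpGeneralizationIntro_part1}) and~(\ref{eq:dpGeneralizationIntro_part2}) in terms of functionals of $\rho(y,z)$ with $y,z\sim\DDD$. The subgaussian diameter condition then says that $\rho(x,x')$ has subgaussian absolute value with parameter $\sigma=\Delta_{\rm SG}(X,\rho,\DDD)$, which in turn yields the concrete estimates we need.

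First I would bound the tail quantity $\Delta$. Because $|f(S)-f(S^{(i\leftarrow z)})|\le\rho(x_i,z)$, the event $\{|f(S)-f(S^{(i\leftarrow z)})|>\lambda\}$ is contained in $\{\rho(x_i,z)>\lambda\}$ and the integrand is bounded by $\rho(x_i,z)$, so
$$
\Delta \;\le\; \ex{x,z\sim\DDD}{\1_{\rho(x,z)>\lambda}\cdot \rho(x,z)} \;=\; \lambda\Pr[\rho(x,z)>\lambda] \;+\; \int_\lambda^\infty \Pr[\rho(x,z)>s]\, ds.
$$
Since $\Xi=\xi\cdot \rho(x,x')$ is $\sigma$-subgaussian, $\Pr[\rho(x,z)>s]\le 2e^{-s^2/(2\sigma^2)}$, and a standard Gaussian tail integration gives $\Delta=O((\lambda+\sigma)\,e^{-\lambda^2/(2\sigma^2)})$. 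For $\tau$, the pointwise bound $|f(S^{(i\leftarrow y)})-f(S^{(i\leftarrow z)})|\le\rho(y,z)$ and dropping the indicator give $\tau\le \ex{y,z\sim\DDD}{\rho(y,z)}=O(\sigma)$, using the first absolute moment bound implied by subgaussianity.

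Then, to match the claimed tail $\exp(-t^2/(2n\sigma^2))$, I would set $\lambda = t/\sqrt{n}$ (so that the exponent in the bound on $\Delta$ becomes $-t^2/(2n\sigma^2)$), take $\tau = \Theta(\sigma)$ (which satisfies $\tau\le\lambda$ in the meaningful regime $t\gtrsim\sigma\sqrt{n}$), and choose $\eps = t/(18n\tau)$ so that $18\eps\tau n = t$. Theorem~\ref{thm:dpGeneralizationIntro}(part 2) then yields
$$
\Pr_{S\sim\DDD^n}\bigl[|f(S)-f(\DDD^n)|\ge t\bigr] \;\le\; \frac{14\Delta}{\eps\tau} \;=\; O\paren{\frac{n(\lambda+\sigma)}{t}}\cdot e^{-t^2/(2n\sigma^2)},
$$
which recovers the subgaussian shape of Kontorovich's bound up to a polynomial-in-$n$ prefactor that can be absorbed into the exponent whenever $t\gtrsim\sigma\sqrt{n\log n}$.

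The main obstacle I anticipate is verifying the sample-size side condition $n\ge O\paren{\frac{\lambda}{\eps\,\min\{1,\eps\}\,\tau}\log\frac{\tau\min\{1,\eps\}}{\Delta}}$ once all parameters are substituted: since $\lambda$ and $\eps$ both depend on $t$ and $n$, this becomes a self-consistency check that restricts $t$ to lie above $\sigma\sqrt{n}$ up to logarithmic factors, which is essentially the regime where the stated bound is informative. Recovering the precise constants $1/2$ in the exponent and $2$ in the prefactor appears difficult within this generic framework, but obtaining the subgaussian shape $\exp(-\Theta(t^2/(n\sigma^2)))$ is the quantitative goal and seems attainable by the outline above.
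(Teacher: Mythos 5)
There is a genuine gap, and it helps to be clear about what the paper actually does here: Theorem~\ref{thm:Kontorovich} is \emph{not proved in this paper}. It is imported from~\cite{Kontorovich14}, where it is established by a martingale (Azuma/McDiarmid-type) argument in which each Doob increment's conditional moment generating function is bounded by $e^{\sigma^2\theta^2/2}$ via the symmetrized variable $\Xi=\xi\cdot\rho(x,x')$. What the paper derives from Theorem~\ref{thm:dpGeneralizationIntro} is only the strictly weaker Inequality~(\ref{eq:KontorovichSimilar}), and the authors state explicitly that ``the bound of Theorem~\ref{thm:Kontorovich} is stronger.'' So your plan attempts to extract from the differential-privacy framework exactly the statement the authors say it does not yield.

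The concrete failure is the sample-size side condition, which you flagged but resolved in the wrong direction. With your parameters $\lambda=t/\sqrt{n}$, $\tau=\Theta(\sigma)$, $\eps=t/(18n\tau)$, and $\log(\eps\tau/\Delta)\approx\lambda^2/(2\sigma^2)=t^2/(2n\sigma^2)$, one computes
$$
\frac{\lambda}{\eps^2\tau}\cdot\log\frac{\eps\tau}{\Delta}
=\Theta\!\left(\frac{n^{3/2}\sigma}{t}\right)\cdot\frac{t^2}{2n\sigma^2}
=\Theta\!\left(\frac{\sqrt{n}\,t}{\sigma}\right),
$$
so the requirement $n\geq O\left(\frac{\lambda}{\eps\min\{1,\eps\}\tau}\log\frac{\min\{1,\eps\}\tau}{\Delta}\right)$ forces $t\leq O(\sigma\sqrt{n})$ --- it restricts $t$ to lie \emph{below} $\sigma\sqrt{n}$, not above it. In that regime the exponent $t^2/(2n\sigma^2)$ is $O(1)$ and your bound, carrying the prefactor $O(n(\lambda+\sigma)/t)\geq\Omega(\sqrt{n})$, exceeds $1$; the two regimes (side condition satisfiable vs.\ bound informative) are disjoint. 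The obstruction is structural: Theorem~\ref{thm:dpGeneralizationIntro} requires $\log(\min\{1,\eps\}\tau/\Delta)\lesssim \eps\min\{1,\eps\}\tau n/\lambda$, which caps the achievable exponent in terms of the threshold $18\eps\tau n=t$; optimizing over $\lambda$ and $\eps$ gives at best decay of the form $\exp(-\Theta(t/(\sqrt{n}\sigma)))$ via part~1 (this is Inequality~(\ref{eq:KontorovichSimilar})), or $\exp(-\Theta(t^{4/3}\sigma^{-4/3}n^{-2/3}))$ via part~2, but never the subgaussian $\exp(-t^2/(2n\sigma^2))$. Your preliminary estimates for $\Delta$ and $\tau$ are correct and match the paper's; it is the final assembly that cannot be made to work, and a proof of Theorem~\ref{thm:Kontorovich} itself must go through the martingale route.
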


Informally, using the above theorem it is possible to obtain concentration bounds for functions with unbounded sensitivity (in worst case), provided that the sensitivity (as a random variable) is subgaussian. 
In this section we show that our result implies a similar version of this theorem.
While the bound we obtain is weaker then Theorem~\ref{thm:Kontorovich}, our techniques can be extended to obtain concentration bounds even in cases where the sensitivity is {\em not} subgaussian (that is, in cases where the subgaussian diameter is unbounded, and hence, Theorem~\ref{thm:Kontorovich} could not be applied).

Let us denote $\sigma=\Delta_{\rm{SG}}(X,\rho,\DDD)$. Now for $t\geq0$,

\begin{align}
\Pr_{x,y\sim\DDD}[\rho(x,y)\geq t] &\leq 2 \Pr_{\substack{x,y\in\DDD\\\xi\in\{\pm1\}}}[\xi\cdot\rho(x,y)\geq t] 
= 2 \Pr[\Xi\geq t] 
= 2 \Pr[e^{\frac{t}{\sigma^2}\cdot\Xi}\geq e^{\frac{t}{\sigma^2}\cdot t}] \nonumber\\
&\leq 2 e^{-\frac{t^2}{\sigma^2}} \cdot \E\left[ e^{\frac{t}{\sigma^2}\cdot\Xi} \right]
 \leq 2 e^{-\frac{t^2}{\sigma^2}} \cdot e^{\frac{\sigma^2}{2}\cdot \frac{t^2}{\sigma^4}} = 2\exp\left(  -\frac{t^2}{2\sigma^2} \right).
\end{align}

So,

\begin{align}
& \ex{\substack{S\sim\DDD^{n}\\x'\sim\DDD}}{\1\left\{\left|\query(S) - \query\left(S^{(i\leftarrow x')}\right)\right|>\lambda\right\} \cdot\left|\query(S) - \query\left(S^{(i\leftarrow x')}\right)\right| } \nonumber\\
&\qquad \leq\ex{x,y\sim\DDD}{\1\left\{\rho(x,y)>\lambda\right\} \cdot \rho(x,y) } \nonumber\\
&\qquad = \int_0^\lambda \Pr_{x,y\sim\DDD}\left[\1\left\{\rho(x,y)>\lambda\right\} \cdot \rho(x,y)\geq t\right] {\rm{d}}t
 \;+\; \int_\lambda^\infty \Pr_{x,y\sim\DDD}\left[\1\left\{\rho(x,y)>\lambda\right\} \cdot \rho(x,y)\geq t\right] {\rm{d}}t \nonumber\\
&\qquad = \int_0^\lambda \Pr_{x,y\sim\DDD}\left[ \rho(x,y)\geq \lambda \right] {\rm{d}}t
 \;+\; \int_\lambda^\infty \Pr_{x,y\sim\DDD}\left[ \rho(x,y)\geq t\right] {\rm{d}}t \nonumber\\
&\qquad = \lambda \cdot \Pr_{x,y\sim\DDD}\left[ \rho(x,y)\geq \lambda \right] 
 \;+\; \int_\lambda^\infty \Pr_{x,y\sim\DDD}\left[ \rho(x,y)\geq t\right] {\rm{d}}t \nonumber\\
&\qquad \leq \lambda\cdot 2\exp\left(  -\frac{\lambda^2}{2\sigma^2} \right)
\;+\; \int_\lambda^\infty 2\exp\left(  -\frac{t^2}{2\sigma^2} \right) {\rm{d}}t \nonumber\\
&\qquad = \lambda\cdot 2\exp\left(  -\frac{\lambda^2}{2\sigma^2} \right)
\;+\; \sqrt{2\pi} \sigma \cdot {\rm{erfc}}\left(  \frac{\lambda}{\sqrt{2}\sigma} \right)  \nonumber\\
&\qquad \leq \lambda\cdot 2\exp\left(  -\frac{\lambda^2}{2\sigma^2} \right)
\;+\; \sqrt{2\pi} \sigma \cdot \exp\left(-  \frac{\lambda^2}{2\sigma^2} \right)
\leq 3(\lambda+\sigma)\cdot \exp\left(-  \frac{\lambda^2}{2\sigma^2} \right)
 \triangleq \Delta. \nonumber
\end{align}

In order to apply Theorem~\ref{thm:dpGeneralizationIntro} we need to ensure that $n\geq	O\left(\frac{1}{\eps\cdot\min\{1,\eps\}}\ln\left(\frac{\lambda\cdot\min\{1,\eps\}}{\Delta}\right)\right)$.
For our choice of $\Delta$, it suffices to set $\eps_0= \Theta \left(\frac{\lambda}{\sqrt{n}\sigma}\right)$, assuming that $\frac{\lambda}{\sqrt{n}\sigma}\leq1$. Otherwise, if $\frac{\lambda}{\sqrt{n}\sigma}>1$, we will choose $\eps_1=\Theta\left(\frac{\lambda^2}{n\sigma^2}\right)$.
Plugging $(\eps_0,\Delta)$ or $(\eps_1,\Delta)$ into Theorem~\ref{thm:dpGeneralizationIntro}, and simplifying, we get
\begin{eqnarray}
\Pr_{S\sim\DDD}\left[ |f(S)-f(\DDD^n)| \geq t \right] \leq 
\left\{ \begin{array}{ccl}
	e^{-\Omega\left(\frac{t}{\sqrt{n}\sigma}\right)} & ,& t \leq \sigma\cdot n^{1.5}\\[0.5em]
	e^{-\Omega\left(\frac{t^{2/3}}{\sigma^{2/3}}\right)} & ,& t > \sigma\cdot n^{1.5} \\
\end{array} \right.
\label{eq:KontorovichSimilar}
\end{eqnarray}

Clearly, the bound of Theorem~\ref{thm:Kontorovich} is stronger. 
Note, however, that the only assumption we used here is that $\int_\lambda^\infty \Pr_{x,y\sim\DDD}[\rho(x,y)\geq t] {\rm{d}}t$ is small. Hence, as the following section shows, this argument could 
be extended to obtain concentration bounds even when $\Delta_{\rm{SG}}(X,\rho,\DDD)$ is unbounded. We remark that Inequality~\ref{eq:KontorovichSimilar} can be slightly improved by using part~2 of Theorem~\ref{thm:dpGeneralizationIntro}. This will be illustrated in the following section.

\subsection{Example: Concentration Under Infinite Variance}

Let $f:X^n\rightarrow\R$ be a function mapping databases of size $n$ over a domain $X$ to the reals. 
Assume that there exists a function $\rho:X^2\rightarrow\R^{\geq0}$ s.t.\ for every $i\in[n]$, every $S\in X^n$, and every $y,z\in X$ we have that
$$
\left|f\left(S^{(i\leftarrow y)}\right) - f\left(S^{(i\leftarrow z)}\right)\right|\leq\rho(y,z),
$$
where $S^{(i\leftarrow x)}$ is the same as $S$ except that the $i^{\text{th}}$ element is replaced with $x$.\\

As stated in the previous section, the results of~\cite{Kontorovich14} can be used to obtain a high probability bound on $|f(S)-f\left(\DDD^n\right)|$ whenever $\Pr_{x,y\sim\DDD}[\rho(x,y)\geq t]\leq\exp\left( -t^2/\sigma^2 \right)$ for some $\sigma>0$. In contrast, our bound can be used whenever $\int_\lambda^\infty \Pr_{x,y\sim\DDD}[\rho(x,y)\geq t] {\rm{d}}t$ is finite.
In particular, we now use it
to obtain a concentration bound for a case where the probability distribution
of $\rho(x,y)$ is heavy tailed, and in fact, has infinite variance. Specifically, assume that
all we know on $\rho(x,y)$ is that $\Pr[\rho(x,y)\geq t]\leq 1/t^2$ for every $t\geq 1$ (this is a special case of the {\em Pareto distribution}, with infinite variance).
Let $\lambda\geq1$. We calculate:

\begin{align*}
& \ex{\substack{S\sim\DDD^{n}\\x'\sim\DDD}}{\1\left\{\left|\query(S) - \query\left(S^{(i\leftarrow x')}\right)\right|>\lambda\right\} \cdot\left|\query(S) - \query\left(S^{(i\leftarrow x')}\right)\right| } \\
&\qquad \leq\ex{x,y\sim\DDD}{\1\left\{\rho(x,y)>\lambda\right\} \cdot \rho(x,y) } \\
&\qquad = \int_0^\lambda \Pr_{x,y\sim\DDD}\left[\1\left\{\rho(x,y)>\lambda\right\} \cdot \rho(x,y)\geq t\right] {\rm{d}}t
 \;+\; \int_\lambda^\infty \Pr_{x,y\sim\DDD}\left[\1\left\{\rho(x,y)>\lambda\right\} \cdot \rho(x,y)\geq t\right] {\rm{d}}t \\
&\qquad = \int_0^\lambda \Pr_{x,y\sim\DDD}\left[ \rho(x,y)\geq \lambda \right] {\rm{d}}t
 \;+\; \int_\lambda^\infty \Pr_{x,y\sim\DDD}\left[ \rho(x,y)\geq t\right] {\rm{d}}t \\
&\qquad = \lambda \cdot \Pr_{x,y\sim\DDD}\left[ \rho(x,y)\geq \lambda \right] 
 \;+\; \int_\lambda^\infty \Pr_{x,y\sim\DDD}\left[ \rho(x,y)\geq t\right] {\rm{d}}t \\
&\qquad \leq \lambda\frac{1}{\lambda^2}
\;+\; \int_\lambda^\infty \frac{1}{t^2} {\rm{d}}t 
=\frac{2}{\lambda} \triangleq \Delta.
\end{align*}

In order to apply Theorem~\ref{thm:dpGeneralizationIntro} we need to ensure that 
$n\geq O\left( \frac{1}{\eps\cdot\min\{1,\eps\}}\ln\left(\frac{\lambda\cdot\min\{1,\eps\}}{\Delta}+1\right)\right)$.
Assuming that $n\geq \ln(\lambda)$, with our choice of $\Delta$ it suffices to set $\eps=\Theta\left(\sqrt{\frac{1}{n}\ln(\lambda)}\right)$.
Plugging $\eps$ and $\Delta$ into Theorem~\ref{thm:dpGeneralizationIntro}, and simplifying, we get
\begin{eqnarray}
\Pr_{S\sim\DDD}\left[ |f(S)-f(\DDD^n)| \geq t \right] \leq \tilde{O}\left(\frac{n^{3/2}}{t^2}\right).
\label{eq:ParetoTailBound}
\end{eqnarray}

Observe that the above bound decays as $1/t^2$. This should be contrasted with Markov's Inequality, which would decay as $1/t$. 
Recall the assumption that the variance of $\rho(x,y)$ is unbounded. Hence, the variance of $f(S)$ can also be unbounded, and Chebyshev's inequality could not be applied.\\

As we now explain, Inequality~\ref{eq:ParetoTailBound} can be improved using part~2 of Theorem~\ref{thm:dpGeneralizationIntro}. To that end, for a fixed database $S\in X^n$, we calculate:

\begin{align*}
&\ex{y,z\sim\DDD}{\1\left\{\left|f(S^{(i\leftarrow y)}) - f\left(S^{(i\leftarrow z)}\right)\right|\leq\lambda\right\} \cdot\left|f(S^{(i\leftarrow y)}) - f\left(S^{(i\leftarrow z)}\right)\right| }\\
&\leq{} \ex{y,z\sim\DDD}{\rho(y,z) }
\leq{} \int_0^1 1{\rm{d}}t + \int_1^\infty \frac{1}{t^2} {\rm{d}}t
=2\triangleq\tau.
\end{align*}

In order to apply part~2 of Theorem~\ref{thm:dpGeneralizationIntro} we need to ensure that 
$n\geq O\left( \frac{\lambda}{\eps\cdot\min\{1,\eps\}\tau}\ln\left(\frac{\eps\tau}{\Delta}\right)\right)$.
For our choice of $\Delta$ and $\tau$, if $n\geq\lambda\ln(\lambda)$ then it suffices to set $\eps_0=\Theta\left( \sqrt{\frac{\lambda}{n}\ln(\lambda) } \right)$. Otherwise, if $n<\lambda\ln(\lambda)$ then it suffices to set $\eps_1=\Theta\left( \frac{\lambda}{n}\ln(\lambda)  \right)$.
Plugging $(\eps_0,\Delta)$ or $(\eps_1,\Delta)$ into Theorem~\ref{thm:dpGeneralizationIntro}, and simplifying, we get
\begin{eqnarray*}
\Pr_{S\sim\DDD}\left[ |f(S)-f(\DDD^n)| \geq t \right] \leq 
\left\{ \begin{array}{ccl}
	\tilde{O}\left(\frac{n^{2}}{t^3}\right) & ,& t \leq n\\[0.5em]
	\tilde{O}\left(\frac{n}{t^2}\right) & ,& t > n \\
\end{array} \right.
\end{eqnarray*}

\subsection{Example: Triangles in Random Graphs}

A random graph $G(N,p)$ on $N$ vertices $1,2,\dots,N$ is defined by drawing an edge between each pair $1\leq i<j\leq N$ independently with probability $p$. There are $n={{N}\choose{2}}$ i.i.d.\ random variables $x_{\{i,j\}}$ representing the choices: $x_{\{i,j\}}=x_{\{j,i\}}=1$ if the edge $\{i,j\}$ is drawn, and 0 otherwise. We will use $\DDD$ to denote the probability $\Pr_{x\sim\DDD}[x=1]=p$ and $\Pr_{x\sim\DDD}[x=0]=1-p$, and let $S=\left(x_{\{1,2\}},\dots,x_{\{n-1,n\}}\right)\sim\DDD^n$.

We say that three vertices $i,j,\ell$ form a triangle if there is an edge between any pair of them. Denote $f_{K_3}(S)$ the number of triangles in the graph defined by $S$. For a small constant $\alpha$, we would like to have an exponential bound on the following probability
$$
\Pr\left[ f_{K_3}(S) \geq (1+\alpha) \cdot f_{K_3}(\DDD^n) \right].
$$


Specifically, we are interested in small values of $p=o(1)$ such that $f_{K_3}(\DDD^n)={{N}\choose{3}} p^3 = \Theta\left(N^3 p^3\right)=o(N)$. The difficulty with this choice of $p$ is that (in worst-case) adding a single edge to the graph can increase the number of triangles by $(N-2)$, which is much larger then the expected number of triangles. Indeed, until the breakthrough work of Vu~\cite{Vu2002} in 2002, no general exponential bounds were known. Following the work of~\cite{Vu2002}, in 2004 Kim and Vu~\cite{KimVu2004} presented the following sharp bound:

\begin{theorem}[\cite{KimVu2004}, informal]\label{thm:KimVu}
Let $\alpha$ be a small constant. It holds that
$$
\exp\left( -\Theta\left( p^2 N^2 \log(1/p) \right) \right)
\leq\Pr_{S\sim\DDD^n}\left[ f_{K_3}(S) \geq (1+\alpha)\cdot f_{K_3}(\DDD^n) \right]
\leq\exp\left( -\Theta\left( p^2 N^2 \right) \right).
$$
\end{theorem}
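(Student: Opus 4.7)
The plan is to apply Theorem~\ref{thm:dpGeneralizationIntro} to the triangle-count function $f=f_{K_3}\from\{0,1\}^n\to\N$ with $n=\binom{N}{2}$ and the product $\mathrm{Bernoulli}(p)$ distribution $\DDD^n$ on the edge bits. The key structural observation is that the effect of flipping the single edge bit $x_{\{u,v\}}$ on the triangle count is controlled by the number of common neighbours of $u$ and $v$,
\[
C_{u,v}(S)\;=\;\sum_{w\ne u,v}x_{\{u,w\}}x_{\{v,w\}},
\]
which depends on $S$ but \emph{not} on $x_{\{u,v\}}$ itself. Hence for every $S$ and every $z\in\{0,1\}$ we have $\bigl|f_{K_3}(S)-f_{K_3}(S^{(\{u,v\}\leftarrow z)})\bigr|\le C_{u,v}(S)$, and under $S\sim\DDD^n$ the variable $C_{u,v}(S)$ is distributed as $\mathrm{Bin}(N-2,p^2)$ with mean $\mu=(N-2)p^2$.

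First I would set $\lambda\triangleq 2\mu\approx 2Np^2$, a constant factor above the mean common-neighbour count, and apply the multiplicative Chernoff bound to estimate the tail expectation appearing in hypothesis~(\ref{eq:dpGeneralizationIntro_part1}):
\[
\Delta \;\le\; \E\bigl[\1\{C_{u,v}>\lambda\}\cdot C_{u,v}\bigr]\;=\;\lambda\Pr[C_{u,v}>\lambda]+\int_\lambda^\infty\Pr[C_{u,v}>t]\,dt\;=\;O\!\bigl(Np^2\cdot e^{-\Omega(Np^2)}\bigr).
\]
I would then calibrate $\eps$ so that the threshold $18\eps\lambda n$ matches the target deviation $\alpha\cdot f_{K_3}(\DDD^n)=\alpha\binom{N}{3}p^3$; with $\lambda=2Np^2$ and $n=\binom{N}{2}$ this forces $\eps=\Theta(\alpha p)$. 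Using part~2 of Theorem~\ref{thm:dpGeneralizationIntro} with $\tau=2p(1-p)\lambda$ (which arises because flipping the edge bit yields a nonzero change only when the two resampled values $y,z$ differ) would improve the calibration to a constant $\eps=\Theta(\alpha)$, but not the asymptotic rate. A routine check shows the sample-size side condition $n\gtrsim\eps^{-2}\log(\lambda\eps/\Delta)$ is satisfied whenever $N\gtrsim 1/\alpha^2$. Plugging in, Theorem~\ref{thm:dpGeneralizationIntro} yields the upper bound $\Pr[f_{K_3}(S)\ge(1+\alpha)f_{K_3}(\DDD^n)]\le 14\Delta/(\eps\lambda)=\exp\bigl(-\Omega(Np^2)\bigr)$.

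The main obstacle is closing the gap between this and the cited Kim--Vu exponent of $N^2p^2$: the DP argument loses a factor of $N$ because the only concentration it exploits is the tail of a single $\mathrm{Bin}(N-2,p^2)$ common-neighbour count, which decays only as $e^{-\Omega(Np^2)}$, whereas Kim--Vu captures joint concentration and cancellation across all $\binom{N}{2}$ edges simultaneously. A natural but delicate refinement would be to first condition on the good event that $C_{u,v}(S)\le\lambda$ holds simultaneously for every pair $\{u,v\}$ (which, by Chernoff and a union bound, fails with probability at most $N^2\cdot e^{-\Omega(Np^2)}$) and then re-run the DP argument on the conditional distribution, so that the effective worst-case sensitivity of $f_{K_3}$ becomes $\lambda$ deterministically; it is plausible, though not immediate, that this yields the tight $\exp(-\Omega(N^2p^2))$ upper tail. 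The matching lower bound in the statement lies outside the DP framework and follows from a standard clique-planting argument: forcing a $K_k$ on $k=\Theta(Np)$ vertices contributes $\binom{k}{3}\gtrsim\alpha f_{K_3}(\DDD^n)$ triangles and occurs with probability $p^{\binom{k}{2}}=\exp\bigl(-\Theta(N^2p^2\log(1/p))\bigr)$.
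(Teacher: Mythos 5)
This statement is not something the paper proves at all: Theorem~\ref{thm:KimVu} is quoted from Kim and Vu~\cite{KimVu2004} purely as a benchmark, and the authors explicitly concede that their differential-privacy machinery yields a \emph{much weaker} bound (for $p=N^{-3/4}$ they obtain only $\exp(-N^{1/13})$, nowhere near $\exp(-\Theta(p^2N^2))=\exp(-\Theta(N^{1/2}))$). Your proposal likewise does not prove the stated theorem, and to your credit you say so: the DP argument as you run it tops out at $\exp(-\Omega(Np^2))$, a full factor of $N$ short in the exponent, because the only concentration it can exploit is the tail of a single codegree variable $C_{u,v}\sim\mathrm{Bin}(N-2,p^2)$. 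The proposed repair --- conditioning on $\max_{u,v}C_{u,v}\le\lambda$ and re-running the argument --- is left as ``plausible, though not immediate,'' and it is genuinely nontrivial: conditioning destroys the product structure of $\DDD^n$ that Theorem~\ref{thm:dpGeneralizationIntro} requires, so it cannot be invoked on the conditional measure without reproving the expectation bound there. So the central claim, the upper bound with exponent $\Theta(p^2N^2)$, is not established by your argument, and indeed obtaining it is the hard content of~\cite{Vu2002,KimVu2004}.

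Two further points worth flagging. First, your calibration $\lambda=2\mu=2(N-2)p^2$ makes the Chernoff estimate of $\Delta$ vacuous precisely in the sparse regime the paper targets: for $p=N^{-3/4}$ one has $\mu=N^{-1/2}\to 0$, so $e^{-\Omega(Np^2)}$ is $\Theta(1)$ and $\Delta$ is not small. The paper instead takes $\lambda=N^{1/13}\gg\mu$ and controls $\Pr[\blacktriangle_{i,j}\ge\lambda]$ via the relative-entropy form of the Chernoff bound (Theorem~\ref{thm:Chernoff_entroty} together with Claim~\ref{claim:relativeEntropy}), which is what produces the $\exp(-N^{1/13})$ rate; your parameters only make sense when $Np^2=\omega(1)$. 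Second, your lower-bound sketch via planting a clique on $k=\Theta(Np)$ vertices is the standard argument and is morally correct, but it lies entirely outside the paper's framework and would still need a second-moment step to guarantee that, conditioned on the planted clique, the remaining graph contributes close to $f_{K_3}(\DDD^n)$ triangles. In short: the correct reading of this ``statement'' is as an external citation, and neither the paper nor your proposal proves it.
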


In this section we show that our result can be used to analyze this problem. While the bound we obtain is much weaker than Theorem~\ref{thm:KimVu}, we find it interesting that the same technique from the last sections can also be applied here. To make things more concrete, we fix
$$
p=N^{-3/4}.
$$

In order to use our concentration bound, we start by analyzing the expected difference incurred to $f_{K_3}$ by resampling a single edge. We will denote $\blacktriangle_{i,j}(S)$ as the number of triangles that are created (or deleted) by adding (or removing) the edge $\{i,j\}$. That is,
$$
\blacktriangle_{i,j}(S) = \left|\left\{ \ell\neq i,j \;:\; x_{\{i,\ell\}}=1 \text{ and } x_{\{\ell,j\}}=1 \right\}\right|.
$$
Observe that $\blacktriangle_{i,j}(S)$ does not depend on $x_{\{i,j\}}$. 
Moreover, observe that for every fixture of $i<j$ we have that $\blacktriangle_{i,j}(S)$ is the sum of $(N-2)$ i.i.d.\ indicators, each equals to 1 with probability $p^2$.

Fix $S=\left(x_{\{1,2\}},\dots,x_{\{n-1,n\}}\right)\in\{0,1\}^n$ and $x'\in\{0,1\}$. We have that
$$
\left|f_{K_3}(S) - f_{K_3}\left(S^{(\{i,j\}\leftarrow x')}\right)\right| = 
\left\{ \begin{array}{ccl}
	0 & ,& x_{\{i,j\}}=x'\\
	\blacktriangle_{i,j}(S) & ,& x_{\{i,j\}} \neq x' \\
\end{array} \right.
$$
where $S^{(\{i,j\}\leftarrow x')}$ is the same as $S$ except with $x_{\{i,j\}}$ replaced with $x'$. Fix $i<j$. 
We can now calculate

\begin{align}
&\ex{\substack{S\sim\DDD^{n}\\x'\sim\DDD}}{\1\left\{\left|f_{K_3}(S) - f_{K_3}\left(S^{(\{i,j\}\leftarrow x')}\right)\right|>\lambda\right\} \cdot\left|f_{K_3}(S) - f_{K_3}\left(S^{(\{i,j\}\leftarrow x')}\right)\right| } \nonumber\\
&={} \ex{\substack{S\sim\DDD^{n}\\x'\sim\DDD}}{\1\left\{x_{\{i,j\}\neq x'}\right\}\cdot\1\left\{\blacktriangle_{i,j}(S)>\lambda\right\} \cdot \blacktriangle_{i,j}(S) } \nonumber\\
&={} \Pr_{x_{\{i,j\}},x'\sim\DDD}\left[x_{\{i,j\}}\neq x'\right] \cdot \ex{S\sim\DDD^{n}}{\1\left\{\blacktriangle_{i,j}(S)>\lambda\right\} \cdot \blacktriangle_{i,j}(S) } \nonumber\\
&={} 2p(1-p)\cdot \left( \lambda\cdot\Pr_{S\sim\DDD^{n}}[\blacktriangle_{i,j}(S) \geq \lambda] +
\int_{\lambda}^N \Pr_{S\sim\DDD^{n}}[\blacktriangle_{i,j}(S) \geq t] {\rm{d}}t \right) \nonumber\\
%
%
&\leq{} 2pN \cdot \Pr_{S\sim\DDD^{n}}[\blacktriangle_{i,j}(S) \geq \lambda].
\label{eq:app13}
\end{align}

Recall that $\blacktriangle_{i,j}(S)$ is the sum of $(N-2)$ i.i.d.\ indicators, each equals to 1 with probability $p^2$.
We can upper bound the probability that $\blacktriangle_{i,j}(S)\geq\lambda$ with the probability that a sum of $N$ such random variables is at least $\lambda$. 
We will use the following variant of the Chernoff bound, known as the Chernoff-Hoeffding theorem:

\begin{theorem}[\cite{hoeff}]\label{thm:Chernoff_entroty}
Let $X_1,\dots,X_n$ be independent random variables where $\Pr[X_i=1]=p$ and $\Pr[X_i=0]=1-p$ for some $0<p<1$. Let $k$ be s.t.\ $p<\frac{k}{n}<1$. Then,
$$
Pr\left[\sum_{i=1}^n{X_i}\geq k\right]\geq\exp\left(- n \cdot D\left(\left.\frac{k}{n}\right\|p\right)\right),
$$
where $D(a\|b)$ is the relative entropy between an $a$-coin and a $p$-coin (i.e. between the Bernoulli($a$) and Bernoulli($p$) distribution):
$$
D(a\|p) = a\cdot \log\left(\frac{a}{p}\right) + (1-a)\cdot \log\left(\frac{1-a}{1-p}\right).
$$
\end{theorem}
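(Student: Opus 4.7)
The plan is to reduce the tail probability to a single point mass and then apply Stirling's approximation to identify the KL-divergence exponent. Since $\{\sum_i X_i \geq k\} \supseteq \{\sum_i X_i = k\}$, I would start from
\[
\Pr\!\left[\sum_{i=1}^n X_i \geq k\right] \;\geq\; \Pr\!\left[\sum_{i=1}^n X_i = k\right] \;=\; \binom{n}{k} p^k (1-p)^{n-k}.
\]

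Next, I would apply Stirling's formula to each of $n!$, $k!$, and $(n-k)!$ appearing in $\binom{n}{k} = \frac{n!}{k!(n-k)!}$. Writing $\alpha = k/n$, which by assumption lies strictly between $p$ and $1$, this yields an inequality of the form $\binom{n}{k} \geq \frac{c}{\sqrt{n\,\alpha(1-\alpha)}}\,\exp(n\,H(\alpha))$, where $H(\alpha) = -\alpha \ln \alpha - (1-\alpha)\ln(1-\alpha)$ is the binary entropy and $c>0$ is an absolute constant. Multiplying by $p^k(1-p)^{n-k} = \exp\!\bigl(n\alpha \ln p + n(1-\alpha)\ln(1-p)\bigr)$ and using the identity $H(\alpha) + \alpha \ln p + (1-\alpha) \ln(1-p) = -D(\alpha \| p)$, this yields
\[
\Pr\!\left[\sum_{i=1}^n X_i \geq k\right] \;\geq\; \frac{c}{\sqrt{n\,\alpha(1-\alpha)}}\;\exp\!\bigl(-n\,D(\alpha \| p)\bigr).
\]

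The main obstacle is eliminating the sub-exponential prefactor $c/\sqrt{n\,\alpha(1-\alpha)}$ in order to match the clean exponential form stated in the theorem. In the regime where the lower bound is actually informative --- namely, where $n\,D(\alpha \| p) = \Omega(\log n)$ --- the polynomial prefactor is dominated by the exponential and can be absorbed into a negligible $o(1)$ additive shift of $D$ in the exponent. Alternatively, one can tighten the argument in two ways: (i) sum the point-mass bound over a short window $\{k, k+1, \ldots, k+O(\sqrt{n})\}$ and exploit the near-monotonicity of $\binom{n}{j}p^j(1-p)^{n-j}$ for $j > np$ to recover the missing polynomial factor, or (ii) replace Stirling by the classical method-of-types estimate $\binom{n}{k} \geq \frac{1}{n+1}\,2^{n H(\alpha)}$ and absorb the $1/(n+1)$ loss into the exponent. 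Either route delivers the stated bound in the parameter range needed for the triangle-counting application, where $\lambda$ is taken sufficiently large that $n\,D(\lambda/N\,\|\,p^2)$ dominates any polynomial factors.
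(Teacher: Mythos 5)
You are proving the statement as it is literally printed, but the printed inequality is a typo, and the paper itself offers no proof to compare against (the result is imported from Hoeffding). What the paper actually needs and uses---in passing from Equation~(\ref{eq:app13}) to Equation~(\ref{eq:app14})---is the standard Chernoff--Hoeffding \emph{upper} bound $\Pr[\sum_i X_i\geq k]\leq \exp(-nD(k/n\,\|\,p))$; that is Theorem~1 of Hoeffding's paper and is proved by the exponential-moment method (Markov's inequality applied to $e^{t\sum_i X_i}$, with $e^t=\frac{\alpha(1-p)}{p(1-\alpha)}$, $\alpha=k/n$), not by Stirling. The lower bound you set out to prove is in fact false as stated, and your own computation shows why: the point mass at $k$ is $\Theta\bigl((n\alpha(1-\alpha))^{-1/2}\bigr)\exp(-nD(\alpha\|p))$, and for $\alpha$ bounded away from $p$ the entire upper tail is within a constant factor of that point mass, since the ratio of consecutive terms is $\frac{(n-j)p}{(j+1)(1-p)}\leq\frac{p(1-\alpha)}{(1-p)\alpha}<1$ for $j\geq k$, so the tail is a geometric series dominated by its first term. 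Hence $\Pr[\sum_i X_i\geq k]=\Theta(n^{-1/2})\exp(-nD(\alpha\|p))$ and the polynomial prefactor genuinely cannot be removed. Concretely, $n=4$, $p=1/2$, $k=3$ gives $\Pr[\sum_i X_i\geq 3]=5/16\approx 0.31$ while $\exp(-4D(3/4\|1/2))\approx 0.59$, violating the printed inequality.

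For the same reason, neither of your proposed repairs closes the gap: summing over a window $\{k,\dots,k+O(\sqrt{n})\}$ cannot recover a $\sqrt{n}$ factor precisely in the large-deviation regime, because the terms decay geometrically there; and the method-of-types estimate $\binom{n}{k}\geq\frac{1}{n+1}2^{nH(\alpha)}$ still leaves a $1/(n+1)$ factor whose ``absorption into the exponent'' changes the statement being proved. The correct move is to prove the reversed inequality: for $t>0$, $\Pr[\sum_i X_i\geq k]\leq e^{-tk}\,\E[e^{t\sum_i X_i}]=e^{-tk}(1-p+pe^t)^n$, and the choice of $t$ above yields exactly $\exp(-nD(k/n\,\|\,p))$. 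That is the bound the triangle-counting application relies on; a genuine lower bound of this type necessarily carries a polynomial or constant loss, as in Theorem~\ref{thm:chernoffTight}.
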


Using the Chernoff-Hoeffding theorem, for $p^2 N<\lambda<N$, we have

\begin{align}
(\ref{eq:app13})&\leq{} 2pN\cdot  \exp\left(- N \cdot D\left(  \left. \frac{\lambda}{N} \right\| p^2 \right) \right).  \label{eq:app14}
\end{align}

Recall that we fixed $p=N^{-3/4}$. Choosing $\lambda=N^{1/13}$, we get:

\begin{align}
(\ref{eq:app14})&={} 2pN\cdot  \exp\left(- N \cdot D\left(  \left. N^{-12/13} \right\| N^{-6/4} \right) \right).\label{eq:app15}
\end{align}

We will use the following claim to bound $D\left(  \left. N^{-12/13} \right\| N^{-6/4} \right)$: 

\begin{claim}\label{claim:relativeEntropy}
Fix constants $c>b>0$. For $N\geq \max\{ 2^{1/b} , 2^{8/(c-b)} \}$ we have that
$D\left( \left.  N^{-b}  \right\| N^{-c} \right) \geq \frac{c-b}{2}\cdot N^{-b} \cdot \log(N)$. 
\end{claim}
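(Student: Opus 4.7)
The plan is to expand the relative entropy directly from its definition and then control the two resulting terms separately. Setting $u = N^{-b}$ and $v = N^{-c}$ for brevity, and using $\log(u/v) = (c-b)\log N$,
$$D(u \| v) \;=\; u \log(u/v) + (1-u)\log\tfrac{1-u}{1-v} \;=\; (c-b)N^{-b}\log N \;+\; (1-u)\log\tfrac{1-u}{1-v}.$$
The first term is already twice the desired lower bound, so the entire exercise reduces to showing that the (negative) second term absorbs at most half of the first.

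To bound the second term I will rewrite it as $-(1-u)\log\bigl(1 + \tfrac{u-v}{1-u}\bigr)$ and apply the elementary inequality $\log(1+x)\le x$ (valid for the natural log used by the Chernoff--Hoeffding statement). This gives
$$(1-u)\log\tfrac{1-v}{1-u} \;\le\; (1-u)\cdot\tfrac{u-v}{1-u} \;=\; u-v \;\le\; N^{-b},$$
so the second term of $D$ contributes at least $-N^{-b}$. The hypothesis $N\ge 2^{1/b}$ is invoked here to ensure $u=N^{-b}\le 1/2$, so that $1-u$ stays safely positive and the linearization is meaningful.

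Combining the two pieces yields $D(N^{-b}\|N^{-c}) \ge N^{-b}\bigl((c-b)\log N - 1\bigr)$, and it only remains to verify that $(c-b)\log N - 1 \ge \tfrac{c-b}{2}\log N$, equivalently $(c-b)\log N \ge 2$. Under the hypothesis $N \ge 2^{8/(c-b)}$ we obtain $(c-b)\log N \ge 8\log 2 > 2$, which finishes the proof. I do not expect any real obstacle: the calculation is essentially routine once one isolates the two-term split, and the only thing to be careful about is that each of the two hypotheses on $N$ is used at the correct step — $N\ge 2^{1/b}$ for the linearization of $\log(1+x)$ and $N\ge 2^{8/(c-b)}$ only at the final comparison.
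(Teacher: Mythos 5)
Your proof is correct and follows essentially the same route as the paper's: expand the definition of $D(\cdot\|\cdot)$, note the first term is twice the target, and linearize the logarithm in the second term so the hypothesis on $N$ absorbs the loss. The only difference is cosmetic — you use $\log(1+x)\leq x$ on $\tfrac{1-v}{1-u}$ and lose at most $u-v\leq N^{-b}$, whereas the paper uses $\log(1-x)\geq -2x$ and loses $4N^{-b}$ — so your version is in fact marginally tighter but structurally identical.
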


Using Claim~\ref{claim:relativeEntropy}, for large enough $N$, we have that

\begin{align}
(\ref{eq:app15})
&\leq{} 2pN\cdot  \exp\left(- N^{1/13} \right). 
\label{eq:app17}
\end{align}

So, denoting $\Delta= 2pN\cdot  \exp\left(- N^{1/13} \right)$, we get that

$$
\ex{\substack{S\sim\DDD^{n}\\x'\sim\DDD}}{\1\left\{\left|f_{K_3}(S) - f_{K_3}\left(S^{(\{i,j\}\leftarrow x')}\right)\right|>\lambda\right\} \cdot\left|f_{K_3}(S) - f_{K_3}\left(S^{(\{i,j\}\leftarrow x')}\right)\right| } \leq \Delta.
$$

In order to obtain a meaningful bound, we will need to use part~2 of Theorem~\ref{thm:dpGeneralizationIntro}. 
To that end, for every fixture of $S\in X^n$ and $i<j$ we can compute

\begin{align*}
\ex{y,z\sim\DDD}{\1\left\{\left|f_{K_3}(S^{(\{i,j\}\leftarrow y)}) - f_{K_3}\left(S^{(\{i,j\}\leftarrow z)}\right)\right|\leq\lambda\right\} \cdot\left|f_{K_3}(S^{(\{i,j\}\leftarrow y)}) - f_{K_3}\left(S^{(\{i,j\}\leftarrow z)}\right)\right| }
&\leq\ex{y,z\sim\DDD}{\1\left\{ y\neq z \right\}\cdot\lambda}\\
&= 2p(1-p)\lambda \leq 2p\lambda \triangleq \tau.
\end{align*}

Finally, in order to apply Theorem~\ref{thm:dpGeneralizationIntro}, we need to ensure that $n\geq	O\left( \frac{\lambda}{\eps\min\{1,\eps\}\tau}\ln\left(\frac{\min\{1,\eps\}\tau}{\Delta}\right) \right)$. 
With our choices for $\Delta$ and $\tau$, it suffices to set
$\eps = \Theta\left( \sqrt{\frac{\lambda}{n p} } \right)$.
Plugging $\eps$, $\Delta$ and $\tau$ into Theorem~\ref{thm:dpGeneralizationIntro}, and simplifying, we get that

$$
\Pr_{S\sim\DDD^n}\left[ | f_{K_3}(S) - f_{K_3}(\DDD^n) | \geq o\left( f_{K_3}(\DDD^n) \right) \right] < \exp\left(-  N^{1/13}\right).
$$

It remains to prove Claim~\ref{claim:relativeEntropy}:

{
\renewcommand{\thetheorem}{\ref{claim:relativeEntropy}}
\begin{claim}
Fix constants $c>b>0$. For $N\geq \max\{ 2^{1/b} , 2^{8/(c-b)} \}$ we have that
$D\left( \left.  N^{-b}  \right\| N^{-c} \right) \geq \frac{c-b}{2}\cdot N^{-b} \cdot \log(N)$. 
\end{claim}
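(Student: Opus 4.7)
The plan is to substitute $a=N^{-b}$ and $p=N^{-c}$ into the definition of relative entropy, handle the ``positive'' term $a\log(a/p)$ exactly, and show that the ``negative'' term $(1-a)\log((1-a)/(1-p))$ is small enough to be absorbed into a constant fraction of the first term, using the two hypotheses $N\geq 2^{1/b}$ (to ensure $a,p\leq 1/2$, which legalizes a simple concavity-free logarithm bound) and $N\geq 2^{8/(c-b)}$ (to guarantee the first term dominates).

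Concretely, I would first note that $N\geq 2^{1/b}$ yields $a=N^{-b}\leq 1/2$ and, since $c>b$, also $p=N^{-c}\leq a\leq 1/2$. Writing
\[
D(a\|p) \;=\; a\log(a/p) \;+\; (1-a)\bigl[\log(1-a)-\log(1-p)\bigr],
\]
the first term evaluates cleanly to $(c-b)\,N^{-b}\log(N)$, giving exactly the expression we want to lower bound up to the factor $\tfrac{1}{2}(c-b)$. Next I would handle the second term. Interpreting $\log$ as $\log_2$, I would verify the elementary inequality
\[
\log_2(1-x)\;\geq\;-2x \qquad \text{for all } x\in[0,1/2],
\]
which holds because $g(x):=\log_2(1-x)+2x$ is concave on $[0,1/2]$ with $g(0)=g(1/2)=0$. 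Applying this to $x=a$, together with the trivial bound $\log_2(1-p)\leq 0$, yields
\[
(1-a)\bigl[\log_2(1-a)-\log_2(1-p)\bigr]\;\geq\;-2a(1-a)\;\geq\;-2a\;=\;-2N^{-b}.
\]

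Combining the two estimates gives
\[
D(N^{-b}\|N^{-c})\;\geq\;N^{-b}\bigl[(c-b)\log_2(N)-2\bigr].
\]
Finally, the hypothesis $N\geq 2^{8/(c-b)}$ gives $(c-b)\log_2(N)\geq 8$, so in particular $(c-b)\log_2(N)-2\geq \tfrac{c-b}{2}\log_2(N)$ (since this last inequality is equivalent to $(c-b)\log_2(N)\geq 4$). Substituting yields the claim. The one step that requires mild care is the logarithm bound on $[0,1/2]$; after that the argument is purely bookkeeping, and the constants $1/b$ and $8/(c-b)$ in the hypothesis are exactly what is needed to justify the two qualitative steps (``apply the elementary log inequality'' and ``absorb the $-2$ into half of the main term'').
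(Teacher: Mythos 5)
Your proof is correct and follows essentially the same route as the paper: the same decomposition of $D(N^{-b}\|N^{-c})$, the same elementary bound $\log(1-x)\geq -2x$ on $[0,1/2]$, and the same use of $N\geq 2^{8/(c-b)}$ to absorb the error into half the main term. The only (harmless) difference is that you apply the log inequality directly to $x=N^{-b}$ and discard $-\log(1-p)\geq 0$, yielding a $-2N^{-b}$ error term where the paper gets $-4N^{-b}$.
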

\addtocounter{theorem}{-1}
}

\begin{proof}[Proof of Claim~\ref{claim:relativeEntropy}]

\begin{align}
D\left( \left.  N^{-b}  \right\| N^{-c} \right) 
&={} N^{-b} \cdot \log\left( N^{c-b} \right) + \left( 1 - N^{-b}\right) \cdot \log\left(  \frac{1 - N^{-b}}{1 - N^{-c}}  \right) \nonumber\\
&={} N^{-b} \cdot \log\left( N^{c-b} \right) + \left( 1 - N^{-b}\right) \cdot \log\left( \frac{N^{c}-N^{c-b}}{N^{c}-1}  \right) \nonumber\\
&={} N^{-b} \cdot \log\left( N^{c-b} \right) + \left( 1 - N^{-b}\right) \cdot \log\left( 1- \frac{N^{c-b}-1}{N^{c}-1}  \right) \label{eq:appendix1}
\end{align}

Using the fact that $\log(1-x)\geq -2x$ for every $0\leq x\leq\frac{1}{2}$, and assuming that $N\geq 2^{1/b}$, we have that

\begin{align}
(\ref{eq:appendix1}){} 
& \geq {} N^{-b} \cdot \log\left( N^{c-b} \right) -2 \left( 1 - N^{-b}\right) \cdot \frac{N^{c-b}-1}{N^{c}-1} \nonumber\\
&={} N^{-b} \cdot \log\left( N^{c-b} \right) - 2 \cdot \frac{N^{c-b}-1}{N^{c}-1} + 2 N^{-b} \cdot \frac{N^{c-b}-1}{N^{c}-1} \nonumber\\
&\geq{} N^{-b} \cdot \log\left( N^{c-b} \right) - 2 \cdot \frac{N^{c-b}-1}{N^{c}-1} \nonumber\\
&\geq{} N^{-b} \cdot \log\left( N^{c-b} \right) - 2 \cdot \frac{N^{c-b}}{\frac{1}{2}N^{c}} \nonumber\\
&\geq{} N^{-b} \cdot \log\left( N^{c-b} \right) - 4 N^{-b}  \label{eq:appendix2}
\end{align}

Assuming that $N\geq 2^{8/(c-b)}$ we get

\begin{align*}
(\ref{eq:appendix2}){} 
& \geq {} \frac{1}{2}\cdot N^{-b} \cdot \log\left( N^{c-b} \right) \\
& \geq {} \frac{c-b}{2}\cdot N^{-b} \cdot \log\left( N \right).
\end{align*}

\end{proof}

\section{Privately Identifying a High-Sensitivity Function}
Let $S$ be a sample of $n$ i.i.d.\ elements from some distribution $\DDD$.
Recall that if a low-sensitivity function $f$ is identified by a differentially private algorithm operating on $S$, then w.h.p.\ $f(S)\approx f(\DDD^n)\triangleq\ex{S'\sim\DDD^n}{f(S')}$.
In this section we present a simple example showing that, in general, this is not the case for {\em high}-sensitivity functions. Specifically, we show that a differentially private algorithm operating on $S$ can identify a high-sensitivity function $f$ s.t.\ $|f(S)-f(\DDD^n)|$ is arbitrarily large, even though $|f(S')-f(\DDD^n)|$ is small for a fresh sample $S'\sim\DDD^n$.

\begin{theorem}
Fix $\beta,\eps,B>0$, let $\UUU$ be the uniform distribution over $X=\{\pm1\}^d$ where $d=\poly(1/\beta)$, and let $n\geq O(\frac{1}{\eps^2}\ln(1/\beta))$.
There exists an $(\eps,0)$-differentially private algorithm $\AAA$ 
that operates on a database $S\in (\{\pm1\}^d)^n$ and returns a function mapping $(\{\pm1\}^d)^n$ to $\R$, s.t.\ the following hold.
\begin{enumerate}
	\item For every $f$ in the range of $\AAA$ it holds that $\Pr_{S'\sim \UUU^n}[f(S')\neq f(\UUU^n)]\leq\beta$.
	\item $\Pr_{\substack{S\sim \UUU^n\\f\leftarrow\AAA(S)}}[|f(S)-f(\UUU^n)|\geq B]\geq1/2$.
\end{enumerate}
\end{theorem}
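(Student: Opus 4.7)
The plan is to have $\AAA$ use the exponential mechanism to select a single coordinate--sign pair $(j^*,\sigma^*)\in[d]\times\{\pm1\}$ and output a ``one-way marginal indicator'' function that fires only when the empirical $j^*$-th marginal deviates in direction $\sigma^*$ by more than a subgaussian threshold. Concretely, writing $g_j(S')\triangleq \tfrac{1}{n}\sum_{i=1}^n (x'_i)_j$ and setting $t\triangleq \sqrt{2\ln(2/\beta)/n}$, define, for each $j\in[d]$ and $\sigma\in\{\pm1\}$,
\[
f_{j,\sigma}(S') \;=\; B\cdot \1\{\sigma g_j(S')>t\} \;-\; B\cdot \1\{\sigma g_j(S')<-t\}.
\]
On input $S$, algorithm $\AAA$ runs the exponential mechanism over $H=[d]\times\{\pm1\}$ with quality $q(S,(j,\sigma))=\sigma g_j(S)$ (sensitivity $\lambda=2/n$ since flipping one record changes any coordinate mean by at most $2/n$) and privacy parameter $\eps$, then outputs $f_{j^*,\sigma^*}$. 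Differential privacy is immediate from Theorem~\ref{prop:expMech}(i) together with the post-processing property. Note that each $f_{j,\sigma}$ is a legitimately high-sensitivity function: a single-record change can push $\sigma g_j$ across the threshold $\pm t$ and shift the function value by $2B$.

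Property~1 falls out from symmetry. For any fixed $(j,\sigma)$, the distribution of $g_j(S')$ under $\UUU^n$ is symmetric about $0$ (the $\pm1$-coordinates are i.i.d.\ Rademacher), so $\Pr[\sigma g_j(S')>t]=\Pr[\sigma g_j(S')<-t]$ and the two terms of $f_{j,\sigma}$ have identical expectation. Hence $f_{j,\sigma}(\UUU^n)=0$ \emph{exactly}. By Hoeffding's inequality, $\Pr_{S'\sim\UUU^n}[|g_j(S')|>t]\le 2e^{-nt^2/2}\le\beta$, so $\Pr_{S'\sim\UUU^n}[f_{j,\sigma}(S')\ne f_{j,\sigma}(\UUU^n)]\le\beta$ for every $f$ in the range of $\AAA$.

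For Property~2, observe that $f_{j^*,\sigma^*}(S)=B$ precisely when $q(S,(j^*,\sigma^*))>t$. I will argue this happens with probability $\ge 1/2$ via the intersection of two high-probability events. (a)~With probability $\ge 3/4$ over $S\sim\UUU^n$, $\max_{j,\sigma} q(S,(j,\sigma)) = \max_j |g_j(S)|$ exceeds $t+\Delta$, where $\Delta\triangleq 4\ln(8d)/(n\eps)$. This uses that $g_1(S),\ldots,g_d(S)$ are mutually \emph{independent} Rademacher averages of variance $1/n$ (the $d$ coordinates across the $n$ samples are independent $\pm1$ bits), so standard anti-concentration gives $\max_j |g_j(S)| = \Omega\!\left(\sqrt{\ln d/n}\right)$ with high probability; picking $d=(2/\beta)^C$ for a sufficiently large absolute constant $C$ makes this dominate $t+\Delta = O\!\left(\sqrt{\ln(1/\beta)/n}\right)$ by a constant factor. (b)~Conditioning on $S$, Theorem~\ref{prop:expMech}(ii) applied with $|H|=2d$, $\lambda=2/n$, and $\Delta$ as above yields $q(S,(j^*,\sigma^*))\ge \max q - \Delta$ with failure probability at most $2d\cdot\exp(-\eps\Delta n/4)\le 1/4$. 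A union bound then gives $q(S,(j^*,\sigma^*))>t$ with probability $\ge 3/4\cdot 3/4 \ge 1/2$, so $|f(S)-f(\UUU^n)|=B\ge B$ with probability at least $1/2$.

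The main technical step is event~(a): one needs an anti-concentration estimate for $\max_j|g_j(S)|$ sharp enough to out-pace the exponential-mechanism overhead $\Delta\sim \ln(d)/(n\eps)$. Because the overhead scales only logarithmically in $|H|=2d$ while the max scales like $\sqrt{\ln d}$, enlarging $d$ to a sufficiently large polynomial in $1/\beta$ and insisting $n\ge C'\ln(1/\beta)/\eps^2$ for a suitable constant $C'$ creates the required gap; the remainder is routine constant-chasing that I would defer in a full writeup.
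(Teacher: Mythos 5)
Your construction is essentially the paper's own proof: the paper's algorithm \texttt{BadIndex} likewise runs the exponential mechanism over the $d$ coordinates with quality equal to the (absolute) column sum, outputs a thresholded $\{0,\pm B\}$-valued marginal function whose expectation is $0$ by symmetry and which is $0$ w.p.\ $1-\beta$ on a fresh sample by Hoeffding, and establishes Property~2 by combining the exponential-mechanism utility guarantee with anti-concentration of $\max_t|\sum_i x_{i,t}|$ (your "standard anti-concentration" step is exactly the paper's Theorem~\ref{thm:chernoffTight}, tightness of the Chernoff bound, with $d=\poly(1/\beta)$). Your use of signed pairs $(j,\sigma)$ and sample means rather than a single index and absolute sums is cosmetic, and the deferred constant-chasing goes through.
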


\begin{proof}

For $t\in[d]$, define $f_t:(\{\pm1\}^d)^n\rightarrow\R$ as 
$$
f_t(x_1,\dots,x_n) = 
\left\{ \begin{array}{ccl}
	0 & ,& \left|\sum_{i\in[n]} x_{i,t}\right| \leq \sqrt{2n\ln(2/\beta)}\\
	B & ,& \sum_{i\in[n]} x_{i,t} > \sqrt{2n\ln(2/\beta)} \\
	-B & ,& \sum_{i\in[n]} x_{i,t} < -\sqrt{2n\ln(2/\beta)} \\
\end{array} \right.
$$
That is, given a database $S$ of $n$ rows from $\{\pm1\}^d$, we define $f_t(S)$ as $0$ if the sum of column $t$ (in absolute value) is less than some threshold, and otherwise set $f_t(S)$ to be $\pm B$ (depending on the sign of the sum).
Observe that the global sensitivity of $f_t$ is $B$, and that $f_t(\UUU^n) \triangleq \ex{S'\sim \UUU^n}{f_t(S')}=0$. Also, by the Hoeffding bound, we have that
$$
\Pr_{S\sim \UUU^n}\left[f_t(S)\neq 0\right]\leq \beta.
$$

So, for every fixed $t$, with high probability over sampling $S\sim \UUU^n$ we have that $f_t(S)=0=f_t(\UUU^n)$. Nevertheless, as we now explain, if $d$ is large enough, then an $(\eps,0)$-differentially private algorithm can easily identify a ``bad'' index $t^*$ such that $|f_{t^*}(S)|=B$.

Consider the algorithm that on input $S=(x_1,x_2,\dots,x_n)$ samples an index $t\in[d]$ with probability proportional to $\exp\left( \frac{\eps}{4} \left|\sum_{i\in[n]} x_{i,t}\right| \right)$. We will call it algorithm \texttt{BadIndex}.

By the properties of the exponential mechanism, algorithm \texttt{BadIndex} is $(\eps,0)$-differentially private. Moreover, with probability at least $3/4$, the output $t^*$ satisfies
\begin{align}
\left|\sum_{i\in[n]} x_{i,t^*}\right| \;\;\geq\;\; \max_{t\in[d]}\left\{ \left|\sum_{i\in[n]} x_{i,t}\right| \right\} \;-\; \frac{4}{\eps}\ln\left(4d\right).\label{eq:overfit1}
\end{align}

In addition, by Theorem~\ref{thm:chernoffTight} (tightness of Chernoff bound), for every fixed $t$ it holds that
\begin{align*}
\Pr\left[ \sum_{i\in[n]} x_{i,t} \geq 1.11\cdot \sqrt{2n\ln(2/\beta)}  \right]\geq \left(\frac{\beta}{2}\right)^{45}.
\end{align*}

As the columns are independent, taking $d=2\left(\frac{2}{\beta}\right)^{45}$, we get that
\begin{align}
\Pr\left[ \max_{t\in[d]}\left\{\sum_{i\in[n]} x_{i,t} \right\}\geq 1.11\cdot \sqrt{2n\ln(2/\beta)}  \right]\geq 3/4.\label{eq:overfit2}
\end{align}

Combining~(\ref{eq:overfit1}) and~(\ref{eq:overfit2}) we get that with probability at least $1/2$ algorithm \texttt{BadIndex} identifies an index $t^*$ such that
$$
\left|\sum_{i\in[n]} x_{i,t^*}\right| \;\;\geq\;\; 1.11\cdot \sqrt{2n\ln(2/\beta)} \;-\; \frac{4}{\eps}\ln\left(4d\right).
$$
Assuming that $n\geq O(\frac{1}{\eps^2}\ln(1/\beta))$ we get that with probability at least $1/2$ algorithm \texttt{BadIndex} outputs an index $t^*$ s.t. $f_{t^*}(S)=B$.
\end{proof}

\subsection{Max-Information}

In this section we show that algorithm \texttt{BadIndex} has relatively high {\em max-information}:
Given two (correlated) random variables $Y$, $Z$, we use 
$Y\otimes Z$ denote the random variable obtained by drawing independent copies of $Y$ and $Z$ from their respective marginal distributions.

\begin{definition}[Max-Information~\cite{DworkFHPRR-nips-2015}]\label{def:maxinfo}
Let $Y$ and $Z$ be jointly distributed random variables over the domain $(\mathcal{Y},\mathcal{Z})$. 
The $\beta$-approximate max-information between $Y$ and $Z$ is defined as $$I_\infty^\beta (Y;Z) = \log \sup\limits_{\substack{\mathcal{O} \subseteq (\mathcal{Y} \times \mathcal{Z}),\\ \Pr[{ (Y,Z) \in \mathcal{O} }] > \beta}} \dfrac{\Pr[{(Y,Z) \in \mathcal{O}}] - \beta}{\Pr[{Y\otimes Z \in \mathcal{O}}] }.$$
An algorithm $\AAA: X^n \to F$ has $\beta$-approximate max-information of $k$ over product distributions, written $I^\beta_{\infty,P}(\AAA, n) \leq k$, if for every distribution $\DDD$ over $X$, we have $I^{\beta}_\infty(S; \AAA(S)) \leq k$ when $S\sim \DDD^n$.
\label{defn:maxinfo}
\end{definition}

It follows immediately from the definition  that approximate max-information controls the probability of ``bad events'' that can happen as a result of the dependence of $\AAA(S)$ on $S$: for every event $\cO$, we have  $\Pr[(S, \AAA(S)) \in \cO] \leq 2^k\Pr[S \otimes \AAA(S) \in \cO]+\beta$.\\

Consider again algorithm $\texttt{BadIndex}:(\{\pm1\})^n\rightarrow F$ that operates on database $S$ of size $n=O(\frac{1}{\eps^2}\ln(1/\beta))$ and identifies, with probability 1/2, a function $f$ s.t.\ $f(S)\neq 0$, even though $f(S')=0$ w.p.\ $1-\beta$ for a fresh sample $S'$.
Let us define $\cO$ as the set of all pairs $(S,f)$, where $S$ is a database and $f$ is a function in the range of algorithm \texttt{BadIndex} such that $f(S)\neq0$. That is,
$$
\cO = \left\{  (S,f)\in(\{\pm1\})^n \times F \; : \; f(S)\neq0 \right\}.
$$
If we assume that $I^{1/4}_{\infty,P}(\texttt{BadIndex}, n) \leq k$, then by Definition~\ref{def:maxinfo} we have:
$$
\frac{1}{2}\leq\Pr_{\substack{S\sim\UUU^n\\f\leftarrow\texttt{BadIndex}(S)}}[(S,f)\in\cO]\leq e^k\cdot\Pr_{\substack{S\sim\UUU^n\\T\sim\UUU^n\\f\leftarrow\texttt{BadIndex}(T)}}[(S,f)\in\cO] +\frac{1}{4}
\leq e^k\cdot\beta+\frac{1}{4}.
$$
So $k\geq\ln(\frac{1}{4\beta})=\Omega(\eps^2 n)$.

\bibliographystyle{plain}

\appendix

\section{Concentration Bounds Through Differential Privacy -- Missing Details}\label{sec:fullProof}

{
\renewcommand{\thetheorem}{\ref{claim:dpExpectation}}
\begin{claim}
Fix a function $f:X^n\rightarrow\R$ and parameters $\eps,\lambda\geq0$.
If $M:(X^n)^T\rightarrow Y$ is $(\eps,(f,\lambda))$-differentially private then for every $(f,\lambda)$-neighboring databases $\samples,\samples'\in(X^n)^T$ and every function $h:Y\rightarrow\R$ we have that
$$
\ex{y\leftarrow M(\vec{S})}{h(y)} \leq e^{-\eps}\cdot\ex{y\leftarrow M(\vec{S'})}{h(y)} \;\;+\;\; (e^\eps - e^{-\eps})\cdot \ex{y\leftarrow M(\vec{S'})}{|h(y)|}.
$$\end{claim}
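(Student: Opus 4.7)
The idea is to decompose $h$ into its positive and negative parts and apply the DP guarantee to each piece separately, exploiting the fact that pure differential privacy gives a two-sided multiplicative bound on the expectation of any \emph{non-negative} function of the output.

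First I would establish the auxiliary fact that for every non-negative function $g : Y \to \R_{\geq 0}$ and every $(f,\lambda)$-neighboring $\vec{S},\vec{S'}\in(X^n)^T$,
\[
e^{-\eps}\cdot\E[g(M(\vec{S'}))] \;\leq\; \E[g(M(\vec{S}))] \;\leq\; e^{\eps}\cdot\E[g(M(\vec{S'}))].
\]
This follows from the layer-cake formula $\E[g(M(\vec S))] = \int_0^\infty \Pr[M(\vec S)\in T_t]\,dt$ with $T_t=\{y:g(y)\geq t\}$, by applying the $(\eps,(f,\lambda))$-DP condition to each event $T_t$; the lower direction uses that the $(f,\lambda)$-neighboring relation is symmetric, so the DP guarantee applies with $\vec{S}$ and $\vec{S'}$ swapped.

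Next, I set $h^+ := \max(h,0)$ and $h^- := \max(-h,0)$, so $h = h^+ - h^-$ and $|h| = h^+ + h^-$, both $h^\pm\geq 0$. Applying the upper bound to $h^+$ and the lower bound to $h^-$ gives
\[
\E[h(M(\vec S))] \;=\; \E[h^+(M(\vec S))] - \E[h^-(M(\vec S))] \;\leq\; e^{\eps}\,\E[h^+(M(\vec{S'}))] - e^{-\eps}\,\E[h^-(M(\vec{S'}))].
\]
I would then rewrite the right-hand side by adding and subtracting $e^{-\eps}\,\E[h^+(M(\vec{S'}))]$, obtaining
\[
e^{-\eps}\bigl(\E[h^+(M(\vec{S'}))] - \E[h^-(M(\vec{S'}))]\bigr) + (e^{\eps}-e^{-\eps})\,\E[h^+(M(\vec{S'}))] \;=\; e^{-\eps}\,\E[h(M(\vec{S'}))] + (e^{\eps}-e^{-\eps})\,\E[h^+(M(\vec{S'}))],
\]
and conclude by using the pointwise bound $h^+\leq|h|$ to replace $\E[h^+(M(\vec{S'}))]$ with $\E[|h(M(\vec{S'}))|]$.

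The argument is essentially routine; there is no serious obstacle. The only conceptual point worth flagging is that the multiplicative DP inequality is inherently one-sided for a single direction of the comparison, so to handle a signed function $h$ one must split it into non-negative pieces and then invoke the symmetry of the neighboring relation to get the matching lower bound on $\E[h^-(M(\vec S))]$; this is exactly what produces the characteristic $e^{-\eps}$ and $(e^{\eps}-e^{-\eps})$ constants in the statement.
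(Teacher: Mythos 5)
Your proposal is correct and is essentially the paper's own argument: the paper writes $\E[h]$ via the layer-cake formula $\int_0^\infty \Pr[h\geq z]\,dz - \int_{-\infty}^0\Pr[h\leq z]\,dz$, which is exactly your $\E[h^+]-\E[h^-]$ decomposition, then applies the $e^{\eps}$ upper bound to the positive part and the $e^{-\eps}$ lower bound (from symmetry of the neighboring relation) to the negative part, and finishes by adding and subtracting $e^{-\eps}\E[h^+(M(\vec{S'}))]$ and bounding $h^+\leq |h|$, just as you do.
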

\addtocounter{theorem}{-1}
}

\begin{proof}
\begin{align*}
\ex{y\leftarrow M(\vec{S})}{h(y)}
&=\int_0^\infty \Pr_{y\leftarrow M(\vec{S})}[h(y)\geq z]{\rm d}z \;\;-\;\; \int_{-\infty}^0 \Pr_{y\leftarrow M(\vec{S})}[h(y)\leq z]{\rm d}z\\
&\leq e^\eps\cdot \int_0^\infty \Pr_{y\leftarrow M(\vec{S'})}[h(y)\geq z]{\rm d}z \;\;-\;\; e^{-\eps}\cdot\int_{-\infty}^0 \Pr_{y\leftarrow M(\vec{S'})}[h(y)\leq z]{\rm d}z\\
&= e^{-\eps}\left[ \int_0^\infty \Pr_{y\leftarrow M(\vec{S'})}[h(y)\geq z]{\rm d}z \;\;-\;\; \int_{-\infty}^0 \Pr_{y\leftarrow M(\vec{S'})}[h(y)\leq z]{\rm d}z\right]\\
&+ (e^\eps - e^{-\eps})\cdot \int_0^\infty \Pr_{y\leftarrow M(\vec{S'})}[h(y)\geq z]{\rm d}z\\
&= e^{-\eps}\cdot\ex{y\leftarrow M(\vec{S'})}{h(y)} \;\;+\;\; (e^\eps - e^{-\eps})\cdot \int_0^\infty \Pr_{y\leftarrow M(\vec{S'})}[h(y)\geq z]{\rm d}z\\
&\leq e^{-\eps}\cdot\ex{y\leftarrow M(\vec{S'})}{h(y)} \;\;+\;\; (e^\eps - e^{-\eps})\cdot \int_0^\infty \Pr_{y\leftarrow M(\vec{S'})}[|h(y)|\geq z]{\rm d}z\\
&= e^{-\eps}\cdot\ex{y\leftarrow M(\vec{S'})}{h(y)} \;\;+\;\; (e^\eps - e^{-\eps})\cdot \ex{y\leftarrow M(\vec{S'})}{|h(y)|}
\end{align*}
\end{proof}

\subsection{Multi Sample Expectation Bound}

\begin{lem}[Expectation Bound] \label{lem:MKLCondExp}
Let $\DDD$ be a distribution over a domain $X$, let $\query:X^n\rightarrow\R$ , and let $\Delta,\lambda$
be s.t.\ for every $1\leq i\leq n$ it holds that
\begin{equation}
\ex{\substack{S\sim\DDD^{n}\\z\sim\DDD}}{\1\left\{\left|\query(S) - \query\left(S^{(i\leftarrow z)}\right)\right|>\lambda\right\} \cdot\left|\query(S) - \query\left(S^{(i\leftarrow z)}\right)\right| }\leq\Delta,
\label{eq:expectationMainCondition}
\end{equation}
where $S^{(i\leftarrow z)}$ is the same as $S$ except that the $i^{\text{th}}$ element is replaced with $z$.
Let $\AAA \from (\univ^{n})^{\trials} \to ([\trials]\cup\bot)$ be an $(\eps,(\query,\lambda))$-differentially private algorithm that operates on $T$ databases of size $n$ from $X$, and outputs an index $1\leq t \leq T$ or $\bot$. Then 
$$
\left|\ex{\substack{\samples\sim\dist^{nT} \\ \trial \leftarrow \AAA(\samples)}}{\1\{t\neq\bot\}\cdot(\query(\dist^n) - \query(\sample_{\trial}))} \right| \leq (e^\eps - e^{-\eps})\cdot \lambda n  + 6\Delta n T.
$$
If, in addition to~(\ref{eq:expectationMainCondition}), there exists a number $0\leq \tau \leq \lambda$ s.t.\ for every $1\leq i\leq n$ 
and every fixture of $S\in X^n$ we have that
\begin{equation}
\ex{\substack{y,z\sim\DDD}}{\1\left\{\left|\query(S^{(i\leftarrow y)}) - \query\left(S^{(i\leftarrow z)}\right)\right|\leq\lambda\right\} \cdot\left|\query(S^{(i\leftarrow y)}) - \query\left(S^{(i\leftarrow z)}\right)\right| }\leq\tau,
\label{eq:expectationSecondCondition}
\end{equation}
Then,
$$
\left|\ex{\substack{\samples\sim\dist^{nT} \\ \trial \leftarrow \AAA(\samples)}}{\1\{t\neq\bot\}\cdot(\query(\dist^n) - \query(\sample_{\trial}))} \right| \leq (e^\eps - e^{-\eps})\cdot \tau n  + 6\Delta n T.
$$
\end{lem}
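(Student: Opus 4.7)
I will mirror the proof of the Simplified Expectation Bound (Lemma~\ref{lem:simpleExpectationBound}), replacing the linear decomposition $\bar f(\sample_\trial)=\sum_i x_{\trial,i}$ by a telescoping hybrid that works for an arbitrary $\query$. The two key ingredients of that proof carry over: (i)~a measure-preserving coordinate swap that, combined with a symmetry argument, relates the expectation under $\trial\leftarrow\AAA(\samples)$ to its own negation evaluated at $\trial\leftarrow\AAA(\samples^{(i)})$; and (ii)~Claim~\ref{claim:dpExpectation}, which controls how expectations change when the input to an $(\eps,(\query,\lambda))$-DP algorithm is replaced by a $(\query,\lambda)$-neighbor.

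\textbf{Telescoping and symmetry.} Introduce a matrix of fresh samples $\vec{y}=(y_{m,i})_{m\in[\trials],\,i\in[n]}$, each drawn i.i.d.\ from $\DDD$, and define the per-sample hybrid $H_i(\trial)=(y_{\trial,1},\ldots,y_{\trial,i},x_{\trial,i+1},\ldots,x_{\trial,n})$; note $H_0(\trial)=\sample_\trial$ and $H_n(\trial)\sim\DDD^n$ marginally. Since $\trial$ depends only on $\samples$,
$$\ex{\substack{\samples,\vec{y}\\\trial\leftarrow\AAA(\samples)}}{\1\{\trial\neq\bot\}(\query(\DDD^n)-\query(\sample_\trial))} \;=\; \sum_{i=1}^n \ex{}{\1\{\trial\neq\bot\}\delta_i(\trial)},$$
where $\delta_i(\trial)=\query(H_i(\trial))-\query(H_{i-1}(\trial))$. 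For each fixed $i$, the swap $x_{m,i}\leftrightarrow y_{m,i}$ applied simultaneously for every $m$ preserves the joint law of $(\samples,\vec{y})$ and induces the substitutions $\samples\leftrightarrow\samples^{(i)}$ (the multi-sample in which coordinate $i$ of every $\sample_m$ is replaced by $y_{m,i}$), $H_{i-1}(m)\leftrightarrow H_i(m)$, and $\delta_i(m)\mapsto -\delta_i(m)$, yielding
$$\ex{\trial\leftarrow\AAA(\samples)}{\1\{\trial\neq\bot\}\delta_i(\trial)} \;=\; -\ex{\trial\leftarrow\AAA(\samples^{(i)})}{\1\{\trial\neq\bot\}\delta_i(\trial)}.$$

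\textbf{Good event, bad event, assembly.} Define the good event $G_i$ to require, for every $m\in[\trials]$, both $|\query(\sample_m)-\query(\sample_m^{(i\leftarrow y_{m,i})})|\leq\lambda$ and $|\delta_i(m)|\leq\lambda$. The first clause makes $\samples,\samples^{(i)}$ into $(\query,\lambda)$-neighbors so Claim~\ref{claim:dpExpectation} applies; the second ensures the correction term is bounded by $\lambda$. The event $G_i$ is swap-symmetric, so combining Claim~\ref{claim:dpExpectation} (applied in both signs) with the swap identity produces, via the same calculation as in the proof of Lemma~\ref{lem:simpleExpectationBound}, $|\ex{}{\1\{G_i\}\1\{\trial\neq\bot\}\delta_i(\trial)}|\leq (e^\eps-1)\lambda\leq(e^\eps-e^{-\eps})\lambda$. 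On $G_i^c$ I split by whether $|\delta_i(\trial)|>\lambda$: the large case contributes at most $\sum_{m\in[\trials]}\ex{}{\1\{|\delta_i(m)|>\lambda\}|\delta_i(m)|}\leq \trials\Delta$ by condition~\eqref{eq:expectationMainCondition}, applied with $H_{i-1}(m)\sim\DDD^n$ in the role of $S$ and the fresh $y_{m,i}$ in the role of $z$; the small case contributes at most $\lambda\cdot\Pr[G_i^c]\leq 2\trials\Delta$ via Markov applied (through condition~\eqref{eq:expectationMainCondition}) to each of the two clauses defining $G_i$. Summing $(e^\eps-e^{-\eps})\lambda+3\trials\Delta$ over $i$ gives $(e^\eps-e^{-\eps})\lambda n+3n\trials\Delta$, safely within the claimed $6n\trials\Delta$ slack. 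For Part~2, I plan to tighten $\ex{}{|h|}\leq\lambda$ to $\ex{}{|h|}\leq\tau$ using condition~\eqref{eq:expectationSecondCondition}: because $\AAA(\samples^{(i)})$ does not depend on the fresh coordinates $(y_{m,1},\ldots,y_{m,i-1},x_{m,i})$ that appear in $\delta_i(m)$, the pointwise-in-$S$ bound in condition~\eqref{eq:expectationSecondCondition} can be averaged against those coordinates to replace $\lambda$ by $\tau$ in the correction term.

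\textbf{Main obstacle.} Unlike the $\lambda$-sensitive case, where both sensitivities $|\query(\sample_m)-\query(\sample_m^{(i\leftarrow y_{m,i})})|$ and $|\delta_i(m)|$ are automatically at most $\lambda$, for a general $\query$ these are distinct quantities and \emph{both} must be controlled to use Claim~\ref{claim:dpExpectation} meaningfully—the former gives the DP-neighbor condition, the latter bounds the correction term. This forces $G_i$ to carry two clauses rather than one, doubling the mass of the bad event; by the exchange symmetry of $(\samples,\vec{y})$, however, both quantities have the same marginal law (each is a single-coordinate difference of a $\DDD^n$-distributed sample), so condition~\eqref{eq:expectationMainCondition} alone controls both failure modes and keeps the total bad contribution $O(n\trials\Delta)$ as required.
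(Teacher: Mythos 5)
Your Part~1 argument is correct and takes a genuinely different route from the paper's. Where the paper's proof introduces a second independent multi-sample $\samples'$, a third vector $Z\sim\DDD^{T}$, and a chain of identically-distributed-pair manipulations, you exploit a single measure-preserving swap $x_{m,i}\leftrightarrow y_{m,i}$ to deduce that the good-event expectation equals its own negation under $\AAA(\samples^{(i)})$, and then one application of Claim~\ref{claim:dpExpectation} yields $(1+e^{-\eps})A\leq(e^{\eps}-e^{-\eps})\lambda$, i.e.\ $|A|\leq(e^{\eps}-1)\lambda$ per coordinate. The bad-event accounting (union bound over $m$ plus Markov via condition~\eqref{eq:expectationMainCondition}, using that $H_{i-1}(m)\sim\DDD^{n}$ with $y_{m,i}$ fresh) is also right and gives $3nT\Delta\leq 6nT\Delta$. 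This is a cleaner proof of the first inequality than the one in the appendix.

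Part~2 has a genuine gap, and it sits exactly where you placed your ``plan to tighten.'' Condition~\eqref{eq:expectationSecondCondition} fixes the base database $S$ and averages over the \emph{two replacement values} $y,z\sim\DDD$ at coordinate $i$. In your setup $\delta_i(m)=\query\bigl(W^{(i\leftarrow y_{m,i})}\bigr)-\query\bigl(W^{(i\leftarrow x_{m,i})}\bigr)$, so the pair that must be integrated out while the conditioning (in particular, the output distribution of $\AAA$) stays fixed is $(y_{m,i},x_{m,i})$ --- not $(y_{m,1},\dots,y_{m,i-1},x_{m,i})$ as you state; the coordinates $y_{m,1},\dots,y_{m,i-1}$ are part of the base $W$ and averaging over them buys nothing, since \eqref{eq:expectationSecondCondition} already holds for worst-case $S$. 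The problem is that $y_{m,i}$ is an entry of $\samples^{(i)}$, so $\AAA(\samples^{(i)})$ is \emph{not} independent of it; swapping to $\AAA(\samples)$ only trades this for dependence on $x_{m,i}$. Averaging over $x_{m,i}$ alone does not suffice: for a fixed $y$, $\ex{z\sim\DDD}{\1\{|\cdot|\le\lambda\}|\cdot|}$ can be as large as $\lambda$ even when the double average is $\tau$.

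This is why the paper's proof carries the extra independent vector $Z\sim\DDD^{T}$: it rewrites the hybrid increment as a difference of two single-coordinate differences against $z_t$, using $\samples^{\ell,(\ell\leftarrow Z)}=\samples^{\ell-1,(\ell\leftarrow Z)}$, and after re-symmetrizing the pair $(\samples,\samples^{\ell-1})$ the correction term compares the coordinate values $x'_{t,\ell}$ and $z_t$, \emph{neither} of which appears in the algorithm's input; only then can \eqref{eq:expectationSecondCondition} be invoked conditionally on the output $t$. To repair your Part~2 you need an analogous third independent copy (or some other decoupling device); it is not obtainable by averaging over the coordinates you listed. Part~1 is unaffected because there you only use the pointwise bound $|\delta_i(t)|\le\lambda$ on the good event, which requires no independence.
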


We now present the proof assuming that~(\ref{eq:expectationSecondCondition}) holds for some $0\leq\tau\leq\lambda$. This is without loss of generality, as trivially it holds for $\tau=\lambda$.

\begin{proof}[Proof of Lemma~\ref{lem:MKLCondExp}]
Let $\samples' = (\sample'_1,\dots,\sample'_{\trials}) \sim \dist^{nT}$ be independent of $\samples$.  Recall that each element $\sample_{\trial}$ of $\samples$ is itself a vector $(\samp_{\trial, 1},\dots,\samp_{\trial, n}),$ and the same is true for each element $\sample'_{\trial}$ of $\samples'.$  We will sometimes refer to the vectors $\sample_{1},\dots,\sample_{\trials}$ as the \emph{subsamples of $\samples$.}

We define a sequence of intermediate samples that allow us to interpolate between $\samples$ and $\samples'$. Formally, for $\ell \in \{0,1,\dots,n\}$ define $\samples^{\ell} = (\sample^{\ell}_{1},\dots,\sample^{\ell}_{\trials}) \in (\univ^{n})^{\trials}$ where $\sample^{\ell}_{t}=(\samp^{\ell}_{\trial, 1},\dots,\samp^{\ell}_{\trial, n})$ and
$$
\samp^{\ell}_{\trial, i} = 
\left\{ \begin{array}{ccl}
	\samp_{\trial, i} & ,& i > \ell\\
	\samp'_{\trial, i} & ,& i \leq \ell \\
\end{array} \right.
$$
That is, every subsample $S^\ell_t$ of $\vec{S}^\ell$ is identical to $S'_t$ on the first $\ell$ elements, and identical to $S_t$ thereafter. 
By construction we have $\samples^0=\samples$ and $\samples^{n} = \samples'$.  Moreover, 
for every $t$ we have that $S^\ell_t$ and $S^{\ell-1}_t$ differ in exactly one element.
In terms of these intermediate samples we can write:

\begin{align}
&\left|\exx{\samples\sim\dist^{nT}}{\ex{\trial \leftarrow \AAA(\samples)}{\1\{t\neq\bot\}\cdot(\query(\dist^n) - \query(\sample_{\trial}))}} \right| \nonumber \\ 
&={}\left|\exx{\samples\sim\dist^{nT}}{\ex{\trial \leftarrow \AAA(\samples)}{\1\{t\neq\bot\}\cdot\left(\ex{\samples'\sim\dist^{nT}}{\query(\sample'_t)} - \query(\sample_{\trial})\right)}} \right| \nonumber \\ 
&={}\left|\exx{\samples\sim\dist^{nT}}{\exx{\trial \leftarrow \AAA(\samples)}\ex{\samples'\sim\dist^{nT}}{\1\{t\neq\bot\}\cdot\left(\query(\sample'_t) - \query(\sample_{\trial})\right)}} \right| \nonumber \\
&={} \left| \sum_{\ell \in [n]} \exx{\samples,\samples'\sim\dist^{nT}}{\ex{\trial \leftarrow \AAA(\samples)}{ \1\{t\neq\bot\}\cdot\left(\query(\sample^{\ell}_{\trial})- \query(\sample^{\ell-1}_{\trial}) \right)}} \right| \nonumber \\	 
&\leq{} \sum_{\ell \in [n]} \left| \exx{\samples,\samples'\sim\dist^{nT}}{\ex{\trial \leftarrow \AAA(\samples)}{ \1\{t\neq\bot\}\cdot\left(\query(\sample^{\ell}_{\trial})- \query(\sample^{\ell-1}_{\trial}) \right)}} \right| \nonumber\\ 
&={} \sum_{\ell \in [n]} \left| \exx{\samples,\samples'\sim\dist^{nT}}\exx{Z\sim\dist^T}{\ex{\trial \leftarrow \AAA(\samples)}{ \1\{t\neq\bot\}\cdot\left(\query(\sample^{\ell}_{\trial})- \query(\sample^{\ell-1}_{\trial}) \right)}} \right| \label{eq:1}
\end{align}

Given a multisample $\samples=(S_1,\dots,S_T)\in(X^n)^T$, a vector $Z=(z_1\dots,z_T)\in\univ^T$, and an index $1 \leq k \leq n$, we define $\samples^{(k\leftarrow Z)}$ to be the same as $\samples$ except that the $k^{\text{th}}$ element of {\em every} subsample $S_i$ is replaced with $z_i$. 
Observe that by construction, for every $\ell,Z$ we have $\samples^{\ell,(\ell\leftarrow Z)} = \samples^{\ell-1,(\ell\leftarrow Z)}$.
Thus,

\begin{align}
(\ref{eq:1}) &={} \sum_{\ell \in [n]} \left| \exx{\samples,\samples'\sim\dist^{nT}}\exx{Z\sim\dist^T}{\ex{\trial \leftarrow \AAA(\samples)}{ \1\{t\neq\bot\}\cdot\Bigg(\query(\sample^{\ell}_{\trial})- \query\left(\sample^{\ell,(\ell\leftarrow Z)}_{\trial}\right) \Bigg)
-
\1\{t\neq\bot\}\cdot\Bigg(\query(\sample^{\ell-1}_{\trial})- \query\left(\sample^{\ell-1,(\ell\leftarrow Z)}_{\trial}\right) \Bigg)
}} \right|. \label{eq:2}
\end{align}

Observer that the pairs $(\samples,\samples^{\ell})$ and $\left(\samples,\samples^{\ell,(\ell\leftarrow Z)}\right)$ are identically distributed. Namely, both $\samples^{\ell}$ and $\samples^{\ell,(\ell\leftarrow Z)}$ agree with $\samples$ on the last $(n-\ell)$ entries of every subsample, and otherwise contain i.i.d.\ samples from $\dist$. Hence, the expectation of
$\left(\query(\sample^{\ell}_{\trial})- \query\left(\sample^{\ell,(\ell\leftarrow Z)}_{\trial}\right)\right)$ is zero, and we get

\begin{align}
(\ref{eq:2}) &={} \sum_{\ell \in [n]} \left| \exx{\samples,\samples'\sim\dist^{nT}}\exx{Z\sim\dist^T}{\ex{\trial \leftarrow \AAA(\samples)}{ 
\1\{t\neq\bot\}\cdot\Bigg(
\query\left(\sample^{\ell-1,(\ell\leftarrow Z)}_{\trial}\right)  -
\query(\sample^{\ell-1}_{\trial})
\Bigg)
}} \right|. \label{eq:3}
\end{align}

Observer that the pair $(\samples^{\ell-1}, \samples)$ has the same distribution as $(\samples, \samples^{\ell-1}).$  Specifically, the first component is $n \trials$ independent samples from $\dist$ and the second component is equal to the first component with a subset of the entries replaced by fresh independent samples from $\dist$. Thus,

\begin{align}
(\ref{eq:3})&={} 
\sum_{\ell \in [n]} \left| \exx{\samples,\samples'\sim\dist^{nT}}\exx{Z\sim\dist^T}{\ex{\trial \leftarrow \AAA(\samples^{\ell-1})}{ 
\1\{t\neq\bot\}\cdot\Bigg(
\query\left(\sample^{(\ell\leftarrow Z)}_{\trial}\right)  -
\query(\sample_{\trial})
\Bigg)
}} \right|
\nonumber\\[1em]
&\leq{} \sum_{\ell \in [n]} \left| \exx{\samples,\samples'\sim\dist^{nT}}{\ex{Z\sim\dist^T}{
\1\left\{
\begin{array}{c}
	\max_{m\in[T]}|\query(\sample^{\ell-1}_{m}) - \query(\sample^{\ell}_{m})|\leq\lambda\\[0.5em]
	{\rm{and}}\\[0.5em]
	\max_{m\in[T]}|\query\left(\sample^{(\ell\leftarrow Z)}_{m}\right) - \query(\sample_{m})|\leq\lambda
\end{array} 
\right\}
 \cdot
\ex{\trial \leftarrow \AAA(\samples^{\ell-1})}{ 
\1\{t\neq\bot\}\cdot\Bigg(
\query\left(\sample^{(\ell\leftarrow Z)}_{\trial}\right)  -
\query(\sample_{\trial})
\Bigg)
} }} \right| \nonumber\\[1em]
&+{}\sum_{\ell \in [n]} \left| \exx{\samples,\samples'\sim\dist^{nT}}{\ex{Z\sim\dist^T}{
\1\left\{
\begin{array}{c}
	\max_{m\in[T]}|\query(\sample^{\ell-1}_{m}) - \query(\sample^{\ell}_{m})|>\lambda\\[0.5em]
	{\rm{or}}\\[0.5em]
	\max_{m\in[T]}|\query\left(\sample^{(\ell\leftarrow Z)}_{m}\right) - \query(\sample_{m})|>\lambda
\end{array} 
\right\}
\cdot
\max_{m\in[T]}\left|\query\left(\sample^{(\ell\leftarrow Z)}_{m}\right) - \query(\sample_{m})\right| }} \right| \label{eq:4}
\end{align}

When $\max_{m\in[T]}|\query(\sample^{\ell-1}_{m}) - \query(\sample^{\ell}_{m})|\leq\lambda$ we now use the properties of algorithm $\AAA$ to argue that $\AAA(\samples^{\ell-1}) \approx  \AAA(\samples^{\ell})$. Be Claim~\ref{claim:dpExpectation} we get that

\begin{align}
&(\ref{eq:4})\nonumber\\
&\leq{} \sum_{\ell \in [n]} \left| \exx{\samples,\samples'\sim\dist^{nT}}{\ex{Z\sim\dist^T}{
\1\left\{
\begin{array}{c}
	\max_{m\in[T]}|\query(\sample^{\ell-1}_{m}) - \query(\sample^{\ell}_{m})|\leq\lambda\\[0.5em]
	{\rm{and}}\\[0.5em]
	\max_{m\in[T]}|\query\left(\sample^{(\ell\leftarrow Z)}_{m}\right) - \query(\sample_{m})|\leq\lambda
\end{array} 
\right\}
 \cdot
\ex{\trial \leftarrow \AAA(\samples^{\ell})}{ 
\1\{t\neq\bot\}\cdot\Bigg(
\query\left(\sample^{(\ell\leftarrow Z)}_{\trial}\right)  -
\query(\sample_{\trial})
\Bigg)
} }} \right| \nonumber\\[1em]
&+{} (e^\eps - e^{-\eps})\cdot \sum_{\ell \in [n]} \left| \exx{\samples,\samples'\sim\dist^{nT}}{\ex{Z\sim\dist^T}{
\1\left\{
\begin{array}{c}
	\max_{m\in[T]}|\query(\sample^{\ell-1}_{m}) - \query(\sample^{\ell}_{m})|\leq\lambda\\[0.5em]
	{\rm{and}}\\[0.5em]
	\max_{m\in[T]}|\query\left(\sample^{(\ell\leftarrow Z)}_{m}\right) - \query(\sample_{m})|\leq\lambda
\end{array} 
\right\}
 \cdot
\ex{\trial \leftarrow \AAA(\samples^{\ell})}{ 
\1\{t\neq\bot\}\cdot\left|
\query\left(\sample^{(\ell\leftarrow Z)}_{\trial}\right)  -
\query(\sample_{\trial})
\right|
} }} \right| \nonumber\\[1em]
&+{}\sum_{\ell \in [n]} \left| \exx{\samples,\samples'\sim\dist^{nT}}{\ex{Z\sim\dist^T}{
\1\left\{
\begin{array}{c}
	\max_{m\in[T]}|\query(\sample^{\ell-1}_{m}) - \query(\sample^{\ell}_{m})|>\lambda\\[0.5em]
	{\rm{or}}\\[0.5em]
	\max_{m\in[T]}|\query\left(\sample^{(\ell\leftarrow Z)}_{m}\right) - \query(\sample_{m})|>\lambda
\end{array} 
\right\}
\cdot
\max_{m\in[T]}\left|\query\left(\sample^{(\ell\leftarrow Z)}_{m}\right) - \query(\sample_{m})\right| }} \right|\label{eq:5}
\end{align}

We can remove one of the two requirements in the indicator function in the middle row (this makes the expression bigger), to get:

\begin{align}
&(\ref{eq:5})\nonumber\\
&\leq{} \sum_{\ell \in [n]} \left| \exx{\samples,\samples'\sim\dist^{nT}}{\ex{Z\sim\dist^T}{
\1\left\{
\begin{array}{c}
	\max_{m\in[T]}|\query(\sample^{\ell-1}_{m}) - \query(\sample^{\ell}_{m})|\leq\lambda\\[0.5em]
	{\rm{and}}\\[0.5em]
	\max_{m\in[T]}|\query\left(\sample^{(\ell\leftarrow Z)}_{m}\right) - \query(\sample_{m})|\leq\lambda
\end{array} 
\right\}
 \cdot
\ex{\trial \leftarrow \AAA(\samples^{\ell})}{ 
\1\{t\neq\bot\}\cdot\Bigg(
\query\left(\sample^{(\ell\leftarrow Z)}_{\trial}\right)  -
\query(\sample_{\trial})
\Bigg)
} }} \right| \nonumber\\[1em]
&+{} (e^\eps - e^{-\eps})\cdot \sum_{\ell \in [n]} \left| \exx{\samples,\samples'\sim\dist^{nT}}{\exx{Z\sim\dist^T}{
\ex{\trial \leftarrow \AAA(\samples^{\ell})}{ 
\1\left\{\max_{m\in[T]}|\query\left(\sample^{(\ell\leftarrow Z)}_{m}\right) - \query(\sample_{m})|\leq\lambda 
\right\}
 \cdot
\1\{t\neq\bot\}\cdot\left|
\query\left(\sample^{(\ell\leftarrow Z)}_{\trial}\right)  -
\query(\sample_{\trial})
\right|
} }} \right| \nonumber\\[1em]
&+{}\sum_{\ell \in [n]} \left| \exx{\samples,\samples'\sim\dist^{nT}}{\ex{Z\sim\dist^T}{
\1\left\{
\begin{array}{c}
	\max_{m\in[T]}|\query(\sample^{\ell-1}_{m}) - \query(\sample^{\ell}_{m})|>\lambda\\[0.5em]
	{\rm{or}}\\[0.5em]
	\max_{m\in[T]}|\query\left(\sample^{(\ell\leftarrow Z)}_{m}\right) - \query(\sample_{m})|>\lambda
\end{array} 
\right\}
\cdot
\max_{m\in[T]}\left|\query\left(\sample^{(\ell\leftarrow Z)}_{m}\right) - \query(\sample_{m})\right| }} \right|
\label{eq:5b}
\end{align}

Furthermore, we can replace $\1\left\{\max_{m\in[T]}|\query\left(\sample^{(\ell\leftarrow Z)}_{m}\right) - \query(\sample_{m})|\leq\lambda\right\}$ in the middle row with the weaker requirement -- just for the specific $t$ that was selected by algorithm $\AAA$. This yields:

\begin{align}
&(\ref{eq:5b})\nonumber\\
&\leq{} \sum_{\ell \in [n]} \left| \exx{\samples,\samples'\sim\dist^{nT}}{\ex{Z\sim\dist^T}{
\1\left\{
\begin{array}{c}
	\max_{m\in[T]}|\query(\sample^{\ell-1}_{m}) - \query(\sample^{\ell}_{m})|\leq\lambda\\[0.5em]
	{\rm{and}}\\[0.5em]
	\max_{m\in[T]}|\query\left(\sample^{(\ell\leftarrow Z)}_{m}\right) - \query(\sample_{m})|\leq\lambda
\end{array} 
\right\}
 \cdot
\ex{\trial \leftarrow \AAA(\samples^{\ell})}{ 
\1\{t\neq\bot\}\cdot\Bigg(
\query\left(\sample^{(\ell\leftarrow Z)}_{\trial}\right)  -
\query(\sample_{\trial})
\Bigg)
} }} \right| \nonumber\\[1em]
&+{} (e^\eps - e^{-\eps})\cdot \sum_{\ell \in [n]} \left| \exx{\samples,\samples'\sim\dist^{nT}}{\exx{Z\sim\dist^T}{
\ex{\trial \leftarrow \AAA(\samples^{\ell})}{ 
\1\left\{|\query\left(\sample^{(\ell\leftarrow Z)}_{t}\right) - \query(\sample_{t})|\leq\lambda 
\right\}
 \cdot
\1\{t\neq\bot\}\cdot\left|
\query\left(\sample^{(\ell\leftarrow Z)}_{\trial}\right)  -
\query(\sample_{\trial})
\right|
} }} \right| \nonumber\\[1em]
&+{}\sum_{\ell \in [n]} \left| \exx{\samples,\samples'\sim\dist^{nT}}{\ex{Z\sim\dist^T}{
\1\left\{
\begin{array}{c}
	\max_{m\in[T]}|\query(\sample^{\ell-1}_{m}) - \query(\sample^{\ell}_{m})|>\lambda\\[0.5em]
	{\rm{or}}\\[0.5em]
	\max_{m\in[T]}|\query\left(\sample^{(\ell\leftarrow Z)}_{m}\right) - \query(\sample_{m})|>\lambda
\end{array} 
\right\}
\cdot
\max_{m\in[T]}\left|\query\left(\sample^{(\ell\leftarrow Z)}_{m}\right) - \query(\sample_{m})\right| }} \right|\label{eq:5c}
\end{align}

Using the fact that the pairs $(\vec{S},\vec{S}^\ell)$ and $(\vec{S}^\ell,\vec{S})$ are identically distributed, we can switch them in the middle row, to get

\begin{align}
&(\ref{eq:5c})\nonumber\\
&\leq{} \sum_{\ell \in [n]} \left| \exx{\samples,\samples'\sim\dist^{nT}}{\ex{Z\sim\dist^T}{
\1\left\{
\begin{array}{c}
	\max_{m\in[T]}|\query(\sample^{\ell-1}_{m}) - \query(\sample^{\ell}_{m})|\leq\lambda\\[0.5em]
	{\rm{and}}\\[0.5em]
	\max_{m\in[T]}|\query\left(\sample^{(\ell\leftarrow Z)}_{m}\right) - \query(\sample_{m})|\leq\lambda
\end{array} 
\right\}
 \cdot
\ex{\trial \leftarrow \AAA(\samples^{\ell})}{ 
\1\{t\neq\bot\}\cdot\Bigg(
\query\left(\sample^{(\ell\leftarrow Z)}_{\trial}\right)  -
\query(\sample_{\trial})
\Bigg)
} }} \right| \nonumber\\[1em]
&+{} (e^\eps - e^{-\eps})\cdot \sum_{\ell \in [n]} \left| \exx{\samples\sim\dist^{nT}}{\exx{t\leftarrow\AAA(\samples)}{
\ex{ \substack{ \samples'\sim\DDD^{nT}\\ Z\sim\dist^T } }{ 
\1\left\{|\query\left(\sample^{\ell,(\ell\leftarrow Z)}_{t}\right) - \query(\sample^{\ell}_{t})|\leq\lambda 
\right\}
 \cdot
\1\{t\neq\bot\}\cdot\left|
\query\left(\sample^{\ell,(\ell\leftarrow Z)}_{\trial}\right)  -
\query(\sample^{\ell}_{\trial})
\right|
} }} \right| \nonumber\\[1em]
&+{}\sum_{\ell \in [n]} \left| \exx{\samples,\samples'\sim\dist^{nT}}{\ex{Z\sim\dist^T}{
\1\left\{
\begin{array}{c}
	\max_{m\in[T]}|\query(\sample^{\ell-1}_{m}) - \query(\sample^{\ell}_{m})|>\lambda\\[0.5em]
	{\rm{or}}\\[0.5em]
	\max_{m\in[T]}|\query\left(\sample^{(\ell\leftarrow Z)}_{m}\right) - \query(\sample_{m})|>\lambda
\end{array} 
\right\}
\cdot
\max_{m\in[T]}\left|\query\left(\sample^{(\ell\leftarrow Z)}_{m}\right) - \query(\sample_{m})\right| }} \right|\label{eq:5d}
\end{align}

Using our assumptions on the function $f$ and the distribution $\DDD$ (for the middle row), brings us to:

\begin{align}
&(\ref{eq:5d})\nonumber\\
&\leq{} \sum_{\ell \in [n]} \left| \exx{\samples,\samples'\sim\dist^{nT}}{\ex{Z\sim\dist^T}{
\1\left\{
\begin{array}{c}
	\max_{m\in[T]}|\query(\sample^{\ell-1}_{m}) - \query(\sample^{\ell}_{m})|\leq\lambda\\[0.5em]
	{\rm{and}}\\[0.5em]
	\max_{m\in[T]}|\query\left(\sample^{(\ell\leftarrow Z)}_{m}\right) - \query(\sample_{m})|\leq\lambda
\end{array} 
\right\}
 \cdot
\ex{\trial \leftarrow \AAA(\samples^{\ell})}{ 
\1\{t\neq\bot\}\cdot\Bigg(
\query\left(\sample^{(\ell\leftarrow Z)}_{\trial}\right)  -
\query(\sample_{\trial})
\Bigg)
} }} \right| \nonumber\\
&+{} (e^\eps - e^{-\eps})n\tau\nonumber\\
&+{}\sum_{\ell \in [n]} \left| \exx{\samples,\samples'\sim\dist^{nT}}{\ex{Z\sim\dist^T}{
\1\left\{
\begin{array}{c}
	\max_{m\in[T]}|\query(\sample^{\ell-1}_{m}) - \query(\sample^{\ell}_{m})|>\lambda\\[0.5em]
	{\rm{or}}\\[0.5em]
	\max_{m\in[T]}|\query\left(\sample^{(\ell\leftarrow Z)}_{m}\right) - \query(\sample_{m})|>\lambda
\end{array} 
\right\}
\cdot
\max_{m\in[T]}\left|\query\left(\sample^{(\ell\leftarrow Z)}_{m}\right) - \query(\sample_{m})\right| }} \right|\label{eq:5e}
\end{align}

Our next task is to remove the indicator function in the first row. 
This is useful as the pairs $\left(\samples^{\ell},\samples^{(\ell\leftarrow Z)}\right)$ and $(\samples^{\ell},\samples)$ are identically distributed, and hence, if we were to remove the indicator function, the first row would be equal to zero.
To that end we add and subtract the first row with the complementary indicator function (this amounts to multiplying the third row by 2). We get

\begin{align}
(\ref{eq:5e})&\leq{} \sum_{\ell \in [n]} \left| \exx{\samples,\samples'\sim\dist^{nT}}{\ex{Z\sim\dist^T}{
\ex{\trial \leftarrow \AAA(\samples^{\ell})}{ 
\1\{t\neq\bot\}\cdot\Bigg(
\query\left(\sample^{(\ell\leftarrow Z)}_{\trial}\right)  -
\query(\sample_{\trial})
\Bigg)
} }} \right| \nonumber\\
&+{} (e^\eps - e^{-\eps})n\tau\nonumber\\
&+{}2\cdot\sum_{\ell \in [n]} \left| \exx{\samples,\samples'\sim\dist^{nT}}{\ex{Z\sim\dist^T}{
\1\left\{
\begin{array}{c}
	\max_{m\in[T]}|\query(\sample^{\ell-1}_{m}) - \query(\sample^{\ell}_{m})|>\lambda\\[0.5em]
	{\rm{or}}\\[0.5em]
	\max_{m\in[T]}|\query\left(\sample^{(\ell\leftarrow Z)}_{m}\right) - \query(\sample_{m})|>\lambda
\end{array} 
\right\}
\cdot
\max_{m\in[T]}\left|\query\left(\sample^{(\ell\leftarrow Z)}_{m}\right) - \query(\sample_{m})\right| }} \right|\label{eq:5f}
\end{align}

Now the first row is 0, so

\begin{align}
(\ref{eq:5f})&={} (e^\eps - e^{-\eps})n\tau\nonumber\\
&+{}2\cdot\sum_{\ell \in [n]} \left| \exx{\samples,\samples'\sim\dist^{nT}}{\ex{Z\sim\dist^T}{
\1\left\{
\begin{array}{c}
	\max_{m\in[T]}|\query(\sample^{\ell-1}_{m}) - \query(\sample^{\ell}_{m})|>\lambda\\[0.5em]
	{\rm{or}}\\[0.5em]
	\max_{m\in[T]}|\query\left(\sample^{(\ell\leftarrow Z)}_{m}\right) - \query(\sample_{m})|>\lambda
\end{array} 
\right\}
\cdot
\max_{m\in[T]}\left|\query\left(\sample^{(\ell\leftarrow Z)}_{m}\right) - \query(\sample_{m})\right| }} \right|
\label{eq:5g}
\end{align}

We can replace the {\em or} condition in the indicator function with the sum of the two conditions:

\begin{align}
(\ref{eq:5g})&\leq{} (e^\eps - e^{-\eps})n\tau\nonumber\\
&+{}2\cdot\sum_{\ell \in [n]} \left| \exx{\samples,\samples'\sim\dist^{nT}}{\ex{Z\sim\dist^T}{
\1\left\{
	\max_{m\in[T]}|\query(\sample^{\ell-1}_{m}) - \query(\sample^{\ell}_{m})|>\lambda
\right\}
\cdot
\max_{m\in[T]}\left|\query\left(\sample^{(\ell\leftarrow Z)}_{m}\right) - \query(\sample_{m})\right| }} \right|\nonumber\\
&+{}2\cdot\sum_{\ell \in [n]} \left| \exx{\samples,\samples'\sim\dist^{nT}}{\ex{Z\sim\dist^T}{
\1\left\{
	\max_{m\in[T]}|\query\left(\sample^{(\ell\leftarrow Z)}_{m}\right) - \query(\sample_{m})|>\lambda
\right\}
\cdot
\max_{m\in[T]}\left|\query\left(\sample^{(\ell\leftarrow Z)}_{m}\right) - \query(\sample_{m})\right| }} \right|
\label{eq:5h}
\end{align}

In the third row, we can replace $\max_{m\in[T]}$ with $\sum_{m\in[T]}$, to get

\begin{align}
(\ref{eq:5h})&\leq{} (e^\eps - e^{-\eps})n\tau\nonumber\\
&+{}2\cdot\sum_{\ell \in [n]} \left| \exx{\samples,\samples'\sim\dist^{nT}}{\ex{Z\sim\dist^T}{
\1\left\{
	\max_{m\in[T]}|\query(\sample^{\ell-1}_{m}) - \query(\sample^{\ell}_{m})|>\lambda
\right\}
\cdot
\max_{m\in[T]}\left|\query\left(\sample^{(\ell\leftarrow Z)}_{m}\right) - \query(\sample_{m})\right| }} \right|\nonumber\\
&+{}2\cdot\sum_{\ell \in [n]} \sum_{m\in[T]}\left| \exx{\samples,\samples'\sim\dist^{nT}}{\ex{Z\sim\dist^T}{
\1\left\{
	|\query\left(\sample^{(\ell\leftarrow Z)}_{m}\right) - \query(\sample_{m})|>\lambda
\right\}
\cdot
\left|\query\left(\sample^{(\ell\leftarrow Z)}_{m}\right) - \query(\sample_{m})\right| }} \right|
\label{eq:5i}
\end{align}

Applying our assumptions on $f$ and $\DDD$ to the third row brings us to

\begin{align}
(\ref{eq:5i})&\leq{} (e^\eps - e^{-\eps})n\tau\nonumber + 2nT\Delta\\
&+{}2\cdot\sum_{\ell \in [n]} \left| \exx{\samples,\samples'\sim\dist^{nT}}{\ex{Z\sim\dist^T}{
\1\left\{
	\max_{m\in[T]}|\query(\sample^{\ell-1}_{m}) - \query(\sample^{\ell}_{m})|>\lambda
\right\}
\cdot
\max_{m\in[T]}\left|\query\left(\sample^{(\ell\leftarrow Z)}_{m}\right) - \query(\sample_{m})\right| }} \right|\nonumber\\
\label{eq:5j}
\end{align}

The issue now is that the expression inside the indicator function is different from the expression outside of it. To that end, we split the indicator function as follows:

\begin{align}
(\ref{eq:5j})&\leq{} (e^\eps - e^{-\eps})n\tau\nonumber + 2nT\Delta\\
&+{}2\cdot\sum_{\ell \in [n]} \left| \exx{\samples,\samples'\sim\dist^{nT}}{\ex{Z\sim\dist^T}{
\1\left\{
\begin{array}{c}
	\max_{m\in[T]}|\query(\sample^{\ell-1}_{m}) - \query(\sample^{\ell}_{m})|>\lambda\\[0.5em]
	{\rm{and}}\\[0.5em]
	\max_{m\in[T]}\left|\query\left(\sample^{(\ell\leftarrow Z)}_{m}\right) - \query(\sample_{m})\right|>\lambda
\end{array} 
\right\}
\cdot
\max_{m\in[T]}\left|\query\left(\sample^{(\ell\leftarrow Z)}_{m}\right) - \query(\sample_{m})\right| }} \right|\nonumber\\[2em]
&+{}2\cdot\sum_{\ell \in [n]} \left| \exx{\samples,\samples'\sim\dist^{nT}}{\ex{Z\sim\dist^T}{
\1\left\{
\begin{array}{c}
	\max_{m\in[T]}|\query(\sample^{\ell-1}_{m}) - \query(\sample^{\ell}_{m})|>\lambda\\[0.5em]
	{\rm{and}}\\[0.5em]
	\max_{m\in[T]}\left|\query\left(\sample^{(\ell\leftarrow Z)}_{m}\right) - \query(\sample_{m})\right|\leq\lambda
\end{array} 
\right\}
\cdot
\max_{m\in[T]}\left|\query\left(\sample^{(\ell\leftarrow Z)}_{m}\right) - \query(\sample_{m})\right| }} \right|\nonumber\\[2em]
&\leq{} (e^\eps - e^{-\eps})n\tau\nonumber + 2nT\Delta\\
&+{}2\cdot\sum_{\ell \in [n]} \left| \exx{\samples,\samples'\sim\dist^{nT}}{\ex{Z\sim\dist^T}{
\1\left\{
	\max_{m\in[T]}\left|\query\left(\sample^{(\ell\leftarrow Z)}_{m}\right) - \query(\sample_{m})\right|>\lambda 
\right\}
\cdot
\max_{m\in[T]}\left|\query\left(\sample^{(\ell\leftarrow Z)}_{m}\right) - \query(\sample_{m})\right| }} \right|\nonumber\\[1em]
&+{}2\cdot\sum_{\ell \in [n]} \left| \exx{\samples,\samples'\sim\dist^{nT}}{\ex{Z\sim\dist^T}{
\1\left\{
	\max_{m\in[T]}|\query(\sample^{\ell-1}_{m}) - \query(\sample^{\ell}_{m})|>\lambda
\right\}
\cdot
\max_{m\in[T]}|\query(\sample^{\ell-1}_{m}) - \query(\sample^{\ell}_{m})| }} \right|\nonumber\\
&\leq{} (e^\eps - e^{-\eps})n\tau\nonumber + 6nT\Delta.\nonumber
\end{align}


\end{proof}

\subsection{Multi Sample Amplification}

\begin{theorem}[High Probability Bound] \label{thm:dpGeneralization}
Let $\DDD$ be a distribution over a domain $X$, let $\query:X^n\rightarrow\R$ , and let $\Delta,\lambda,\tau$
be s.t.\ for every $1\leq i\leq n$ it holds that
$$
\ex{\substack{S\sim\DDD^{n}\\z\sim\DDD}}{\1\left\{\left|\query(S) - \query\left(S^{(i\leftarrow z)}\right)\right|>\lambda\right\} \cdot\left|\query(S) - \query\left(S^{(i\leftarrow z)}\right)\right| }\leq\Delta,
$$
and, furthermore, $\forall S\in X^n$ and $\forall 1\leq i\leq n$ we have
$$
\ex{\substack{y,z\sim\DDD}}{\1\left\{\left|\query(S^{(i\leftarrow y)}) - \query\left(S^{(i\leftarrow z)}\right)\right|\leq\lambda\right\} \cdot\left|\query(S^{(i\leftarrow y)}) - \query\left(S^{(i\leftarrow z)}\right)\right| }\leq\tau,
$$
where $S^{(i\leftarrow z)}$ is the same as $S$ except that the $i^{\text{th}}$ element is replaced with $z$.
Then for every $\eps>0$ we have that
$$
\Pr_{S\sim\DDD^n}\left[ | f(S) - f(\DDD^n) | \geq 6(e^\eps - e^{-\eps})\tau n \right] < \frac{14\Delta}{(e^\eps - e^{-\eps})\tau},
$$
provided that $n\geq O\left(\frac{\lambda}{\eps(e^\eps - e^{-\eps})\tau}\log\left(\frac{(e^\eps - e^{-\eps}) \tau}{\Delta}\right)\right)$
\end{theorem}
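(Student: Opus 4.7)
The plan is to prove the bound by contradiction, following the structure of the proof of Theorem~\ref{thm:simplifiedDpGeneralization} but invoking the refined expectation bound of Lemma~\ref{lem:MKLCondExp} (which uses the second hypothesis involving $\tau$) in place of its simplified special case. By symmetry it suffices to bound $\Pr_{S\sim\DDD^n}[f(S) - f(\DDD^n) \geq 6(e^\eps - e^{-\eps})\tau n]$ by $7\Delta/((e^\eps - e^{-\eps})\tau)$, since the lower-tail event is handled identically by replacing $f$ with $-f$. So I assume toward contradiction that this one-sided probability exceeds the stated bound, set $T = \lfloor (e^\eps - e^{-\eps})\tau/(7\Delta)\rfloor$, and draw $\vec{S} = (S_1,\dots,S_T) \sim \DDD^{nT}$.

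The key object is an algorithm $\BBB$ that runs the exponential mechanism on the augmented solution set $[T] \cup \{\bot\}$ with quality function $q(\vec{S},t) = f(S_t) - f(\DDD^n)$ for $t \in [T]$ and $q(\vec{S},\bot) = 0$, at privacy parameter $\eps$ and sensitivity $\lambda$. Privacy of $\BBB$ as an $(\eps,(f,\lambda))$-differentially private algorithm is immediate from the standard analysis of the exponential mechanism once one observes that for any $(f,\lambda)$-neighboring $\vec{S},\vec{S}'$ we have $|q(\vec{S},t) - q(\vec{S}',t)| \leq \lambda$ for every $t \in [T]$ (and $q(\cdot,\bot)\equiv 0$). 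The purpose of the $\bot$ option is to make the random variable $\max_{t \in [T] \cup \{\bot\}} q(\vec{S}, t)$ non-negative; this is the clean fix for the ``technical issue'' flagged in the proof sketch of Theorem~\ref{thm:simplifiedDpGeneralization}.

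The argument then proceeds in two opposing directions. On one side, the contradiction assumption combined with the choice of $T$ yields $\Pr[\max_{t \in [T]} q(\vec{S},t) \geq 6(e^\eps - e^{-\eps})\tau n] \geq 1 - (1 - 7\Delta/((e^\eps - e^{-\eps})\tau))^T \geq 1/2$, and since the augmented maximum is non-negative, Markov gives $\E[\max_{t \in [T] \cup \{\bot\}} q(\vec{S},t)] \geq 3(e^\eps - e^{-\eps})\tau n$. Applying Claim~\ref{claim:EMutility} with $\eta = \eps/(2\lambda)$ and $|H| = T+1$ over the internal randomness of $\BBB$, and then taking expectation over $\vec{S}$, I obtain $\E[q(\vec{S},t^*)] \geq 3(e^\eps - e^{-\eps})\tau n - (2\lambda/\eps)\ln(T+1)$, where $t^*$ denotes $\BBB$'s output. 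Since $q(\vec{S},t^*) = \1\{t^* \neq \bot\}\cdot(f(S_{t^*}) - f(\DDD^n))$, this very expectation is bounded in absolute value by Lemma~\ref{lem:MKLCondExp} (applied with the roles of $f$ and $-f$ adjusted to match signs) by $(e^\eps - e^{-\eps})\tau n + 6\Delta n T \leq 2(e^\eps - e^{-\eps})\tau n$, using the choice of $T$. The two bounds contradict each other whenever $(2\lambda/\eps)\ln(T+1) < (e^\eps - e^{-\eps})\tau n$, which is precisely the sample-size hypothesis of the theorem.

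I expect the main obstacle to be the bookkeeping around the $\bot$ option and signs: one must justify carefully that (i) the indicator $\1\{t^* \neq \bot\}$ appearing in Lemma~\ref{lem:MKLCondExp} matches the quantity produced by the exponential-mechanism utility inequality, (ii) the one-sided analysis can be symmetrized to the two-sided bound at only a factor-of-two cost in both the failure probability and the threshold, and (iii) the constants hidden in the $O(\cdot)$ in the sample-size condition absorb the $(e^\eps - e^{-\eps})/\eps$ factors cleanly across the full range of $\eps$. None of these is deep, but they are exactly the informalities the authors explicitly flagged in the proof sketch of Theorem~\ref{thm:simplifiedDpGeneralization}, and converting that sketch into the two-tail, $\tau$-sharpened statement here is the step that requires real care.
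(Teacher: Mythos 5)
Your proposal is correct and follows essentially the same route as the paper's proof: the same contradiction setup with $T = \lfloor (e^\eps-e^{-\eps})\tau/(7\Delta)\rfloor$, the same exponential-mechanism algorithm $\BBB$ over $[T]\cup\{\bot\}$ with quality $q(\vec{S},t)=f(S_t)-f(\DDD^n)$ at sensitivity $\lambda$, the same Markov step made valid by the $\bot$ option, and the same final comparison of $3(e^\eps-e^{-\eps})\tau n - (2\lambda/\eps)\ln(T+1)$ against the Lemma~\ref{lem:MKLCondExp} bound $(e^\eps-e^{-\eps})\tau n + 6\Delta nT$. The bookkeeping points you flag (the $\1\{t^*\neq\bot\}$ indicator, symmetrization, and the sample-size constant) are handled in the paper exactly as you anticipate.
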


\begin{proof}
We only analyze the probability that $(f(S)-f(\DDD^n))$ is large. The analysis for $(f(\DDD^n)-f(S))$ is symmetric.
Assume towards contradiction that with probability at least $\frac{7\Delta}{(e^\eps - e^{-\eps})\tau}$ we have that $ f(S) - f(\DDD^n)  \geq 6(e^\eps - e^{-\eps})\tau n$. We now construct the following algorithm $\BBB$ that contradicts our expectation bound.

\begin{algorithm}[H]
\caption{$\BBB$}\addcontentsline{lof}{figure}{Algorithm $\BBB$}
\vspace{2pt}
{\bf Input:} $T$ databases of size $n$ each: $\vec{S}=(S_1,\dots,S_T)$, where $T\triangleq\left\lfloor \frac{(e^\eps - e^{-\eps})\tau}{7\Delta }\right\rfloor$.
\begin{enumerate}[rightmargin=10pt,itemsep=1pt,topsep=4pt]

\item Set $H=\{\bot,1,2,\dots,T\}$.

\item For $i=1,...,T$, define $q(\vec{S},i) = f(S_i) - f(\DDD^n)$. Also set $q(\vec{S},\bot)=0$.

\item Sample $t^*\in H$ with probability proportional to $\exp\left(\frac{\eps}{2\lambda} q(\vec{S},t)\right)$.

\end{enumerate}
\textbf{Output:} $t.$
\end{algorithm}

The fact that algorithm $\BBB$ is $(\eps,(f,\lambda))$-differentially private follows from the standard analysis of the Exponential Mechanism of McSherry and Talwar~\cite{McSherryT07}. The proof appears in Claim~\ref{claim:ExpMechPrivacy} for completeness.

Now consider applying $\BBB$ on databases $\vec{S} = (S_1,\dots,S_T)$ containing i.i.d.\ samples from $\DDD$. By our assumption on $\DDD$ and $f$, for every $t$ we have that 
$f(S_t) - f(\DDD^n) \geq 6(e^\eps - e^{-\eps})\tau n$
 with probability at least $\frac{7\Delta}{(e^\eps - e^{-\eps})\tau}$. By our choice of $T = \left\lfloor \frac{(e^\eps - e^{-\eps})\tau}{7\Delta} \right\rfloor$, we therefore get
$$\Pr_{\vec{S}\sim\DDD^{nT}}\left[{\max_{t \in [T]}  \left\{   f(S_t)- f(\DDD^n)  \right\} \geq 6(e^\eps - e^{-\eps})\tau n }\right] \geq 1 - \left( 1 - \frac{7\Delta}{(e^\eps - e^{-\eps})\tau} \right)^T \geq \frac12.$$
The probability is taken over the random choice of
the examples in $\vec{S}$ according to $\DDD$.
Thus, by Markov's inequality,
\begin{equation}\label{eq:LargeError}
\E_{\vec{S}\sim\DDD^{nT}}\left[\max_{t \in H} \left\{ q(\vec{S},t) \right\}\right] =
\E_{\vec{S}\sim\DDD^{nT}}\left[\max\left\{0\;,\; \max_{t \in [T]}  \left(f(S_t)- f(\DDD)\right) \right\}\right] \geq 3(e^\eps - e^{-\eps})\tau n.
\end{equation}

So, in expectation, $\max_{t \in H}  \left(q(\vec{S},t)\right)$ is large. In order to contradict the expectation bound of Theorem~\ref{thm:dpGeneralization}, we need to show that this is also the case for the index $t^*$ that is sampled on Step~3. To that end, we now use the following technical claim, stating that the expected quality of a solution sampled as in Step~3 is high.

\begin{claim}[e.g.,~\cite{BassilyNSSSU16}] \label{claim:EMutility_duplicate}
Let $H$ be a finite set, $h : H \to \mathbb{R}$ a function, and $\eta >0$. Define a random variable $Y$ on $H$ by $\Pr[Y=y] = \exp(\eta h(y))/C$, where $C= \sum_{y \in H} \exp(\eta h(y))$. Then $\ex{}{h(Y)} \geq \max_{y \in H} h(y) - \frac{1}{\eta}\ln |H|$.
\end{claim}

For every fixture of $\vec{S}$, we can apply Claim~\ref{claim:EMutility_duplicate} with $h(t) =  q(\vec{S},t)$ and $\eta = \frac{\eps}{2\lambda}$ to get 
\begin{equation*}
\E_{t^*\in_R H}[q(\vec{S},t^*)]
=\E_{t^*\in_R H}\Big[\1{\{t^*\neq\bot\}}\cdot\left(f(S_{t^*})-f(\DDD^n)\right)\}\Big] 
 \geq \max\{0\;,\;\max_{t\in [T]}(f(S_t)-f(\DDD^n))\} - \frac{2\lambda}{\eps} \ln(T+1).
\end{equation*}
Taking the expectation also over $\vec{S}\sim\DDD^{nT}$ we get that
\begin{eqnarray*}
\E_{\substack{\vec{S}\sim\DDD^{nT} \\ t^*\leftarrow\BBB\left(\vec{S}\right)}}\Big[\1{\{t^*\neq\bot\}}\cdot\left(f(S_{t^*})-f(\DDD^n)\right)\}\Big] 
&\geq& \E_{\vec{S}\sim\DDD^{nT}}\left[\max\left\{0\;,\; \max_{t \in [T]}  \left(f(S_t)- f(\DDD^n)\right) \right\}\right] - \frac{2\lambda}{\eps} \ln(T+1)\\
&\geq& 3(e^\eps - e^{-\eps})\tau n - \frac{2\lambda}{\eps} \ln(T+1).
\end{eqnarray*}
This contradicts Theorem~\ref{thm:dpGeneralization} whenever $n>\frac{2\lambda}{\eps(e^\eps - e^{-\eps}) \tau}\ln(T+1)=\frac{2\lambda}{\eps(e^\eps - e^{-\eps}) \tau}\ln(\frac{(e^\eps - e^{-\eps})\tau}{7\Delta}+1)$.
\end{proof}

\begin{claim}\label{claim:ExpMechPrivacy}
Algorithm $\BBB$ is $(\eps,(f,\lambda))$-differentially private.
\end{claim}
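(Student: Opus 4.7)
The plan is to reduce this claim to the standard privacy analysis of the exponential mechanism, where the only nontrivial observation is that the quality function $q(\vec{S},\cdot)$ has sensitivity at most $\lambda$ with respect to the $(f,\lambda)$-neighboring relation on multi-databases.

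First I would fix two $(f,\lambda)$-neighboring multi-databases $\vec{S},\vec{S}'\in(X^n)^T$ and verify the key sensitivity property: for every $t\in H=\{\bot,1,\dots,T\}$ we have $|q(\vec{S},t)-q(\vec{S}',t)|\leq \lambda$. For $t=\bot$ both values are $0$, and for $t\in[T]$ we have $q(\vec{S},t)-q(\vec{S}',t)=f(S_t)-f(S'_t)$, whose absolute value is at most $\lambda$ by the definition of $(f,\lambda)$-neighboring.

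Next I would bound the ratio of output probabilities in the standard way. For any fixed $t^*\in H$, write
$$
\frac{\Pr[\BBB(\vec{S})=t^*]}{\Pr[\BBB(\vec{S}')=t^*]} \;=\; \frac{\exp\!\left(\frac{\eps}{2\lambda}\,q(\vec{S},t^*)\right)}{\exp\!\left(\frac{\eps}{2\lambda}\,q(\vec{S}',t^*)\right)} \;\cdot\; \frac{\sum_{t\in H}\exp\!\left(\frac{\eps}{2\lambda}\,q(\vec{S}',t)\right)}{\sum_{t\in H}\exp\!\left(\frac{\eps}{2\lambda}\,q(\vec{S},t)\right)}.
$$
Using the sensitivity bound, the first factor is at most $\exp(\eps/2)$. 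For the second factor, each summand in the numerator satisfies $\exp(\frac{\eps}{2\lambda}\,q(\vec{S}',t))\leq \exp(\eps/2)\cdot\exp(\frac{\eps}{2\lambda}\,q(\vec{S},t))$ by the same sensitivity bound applied termwise, so the ratio of sums is also at most $\exp(\eps/2)$. Multiplying gives the desired $e^{\eps}$ bound, establishing $(\eps,(f,\lambda))$-differential privacy.

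There is no real obstacle here; the statement is essentially the exponential mechanism of McSherry and Talwar phrased with respect to a non-standard neighboring relation. The only thing to watch out for is the inclusion of the dummy outcome $\bot$ with quality $0$: one must check that $\bot$ does not inflate the sensitivity, which it does not because $q(\vec{S},\bot)=q(\vec{S}',\bot)=0$ depends on neither input. With that noted, the termwise argument above goes through verbatim.
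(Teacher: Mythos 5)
Your proof is correct and follows essentially the same route as the paper: you bound the numerator and the normalizing sum separately by $e^{\eps/2}$ each using the fact that $|q(\vec{S},t)-q(\vec{S}',t)|\leq\lambda$ for all $t\in H$ (including $\bot$, where both qualities are $0$), exactly as in the paper's argument. The only cosmetic difference is that the paper also spells out the trivial extension from a single output $b$ to an arbitrary set of outputs $B$, which you may want to add for completeness since the privacy definition is stated for sets.
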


\begin{proof}
Fix two $(f,\lambda)$-neighboring databases $\vec{S}$ and $\vec{S'}$, and let $b\in\{\bot,1,2,\dots,T\}$ be a possible output.
We have that
\begin{align}
	\Pr[\BBB(\vec{S})=b]&={}\frac{\exp(\frac{\eps}{2\lambda}\cdot q(\vec{S},b))}{\sum_{a\in H}\exp(\frac{\eps}{2\lambda}\cdot q(\vec{S},a))}\label{eq:11}
\end{align}

Using the fact that $\vec{S}$ and $\vec{S'}$ are $(f,\lambda)$-neighboring, for every $a\in H$ we get that $q(\vec{S'},a)-\lambda\leq q(\vec{S},a)\leq q(\vec{S'},a)+\lambda$. Hence,

\begin{align*}
	(\ref{eq:11})&\leq{} \frac{\exp(\frac{\eps}{2\lambda}\cdot [q(\vec{S'},b)+\lambda])}{\sum_{a\in H}\exp(\frac{\eps}{2\lambda}\cdot [q(\vec{S'},a)-\lambda])}\\
&={} \frac{e^{\eps/2}\cdot\exp(\frac{\eps}{2\lambda}\cdot q(\vec{S'},b))}{e^{-\eps/2}\sum_{a\in H}\exp(\frac{\eps}{2\lambda}\cdot q(\vec{S'},a))}\\
&={} e^{\eps}\cdot\Pr[\BBB(\vec{S'})=b].
\end{align*}

For any possible {\em set} of outputs $B\subseteq\{\bot,1,2,\dots,T\}$ we now have that

$$
\Pr[\BBB(\vec{S})\in B]=\sum_{b\in B}\Pr[\BBB(\vec{S})=b]\leq \sum_{b\in B}e^{\eps}\cdot\Pr[\BBB(\vec{S'})=b] = \Pr[\BBB(\vec{S'})\in B].
$$
\end{proof}

\end{document}